\documentclass[final,5p,times,twocolumn]{elsarticle}

\journal{Computer Standards \& Interfaces}

\usepackage{amsmath,amssymb,amsthm,mathtools}
\usepackage[T1]{fontenc}
\usepackage[utf8]{inputenc}
\usepackage{newtxmath}
\usepackage{graphicx}
\usepackage{booktabs}
\usepackage{tabularx}
\usepackage{array}
\usepackage{tikz}
\usetikzlibrary{arrows.meta, positioning}
\usepackage[ruled,linesnumbered]{algorithm2e}
\usepackage{listings}
\usepackage{xcolor}
\usepackage{hyperref}
\usepackage[capitalize,nameinlink]{cleveref}

\definecolor{codegreen}{rgb}{0,0.6,0}
\definecolor{codegray}{rgb}{0.5,0.5,0.5}
\definecolor{codepurple}{rgb}{0.58,0,0.82}
\definecolor{codeblue}{rgb}{0.13,0.29,0.53}
\definecolor{backcolour}{rgb}{0.97,0.98,0.99}

\DeclareMathOperator{\argTopK}{argTopK}
\DeclareMathOperator{\softmax}{softmax}

\newtheorem{theorem}{Theorem}
\newtheorem{corollary}{Corollary}
\newtheorem{lemma}{Lemma}
\newtheorem{proposition}{Proposition}
\theoremstyle{definition}
\newtheorem{definition}{Definition}
\theoremstyle{remark}
\newtheorem{remark}{Remark}

\hypersetup{
    colorlinks=true,
    linkcolor=blue!70!black,
    citecolor=red!70!black,
    urlcolor=blue!70!black
}
\newcommand{\method}{Page-EntroKV}
\newcommand{\HQ}{H_{Q}}
\newcommand{\HKV}{H_{\mathrm{KV}}}
\newcommand{\Dmodel}{D_{\mathrm{model}}}
\newcommand{\Dhead}{D_{\mathrm{head}}}
\newcommand{\Bbatch}{B_{\mathrm{batch}}}
\newcommand{\Nblk}{N_{\mathrm{blocks}}}
\newcommand{\Attn}[2]{\mathbf{A}_{#1,#2}}
\newcommand{\AttnPool}[2]{\mathbf{A}^{\mathrm{pooled}}_{#1,#2}}
\newcommand{\AttnMean}[2]{\mathbf{A}^{\mathrm{mean}}_{#1,#2}}
\newcommand{\Renyi}[2]{\mathcal{H}_{#1}\!\left(#2\right)}
\newcommand{\Collision}[1]{\mathcal{H}_{2}\!\left(#1\right)}
\newcommand{\CollisionSink}[1]{\mathcal{H}^{\mathrm{sink}}_{2}\!\left(#1\right)}
\newcommand{\UOR}{\mathrm{UOR}}
\newcommand{\Group}[1]{\mathrm{Group}(#1)}
\newcommand{\PageSet}{\mathcal{P}}
\newcommand{\RetSet}[2]{\mathcal{K}_{#1,#2}}
\newcommand{\TokSet}[1]{\mathcal{K}_{l,#1}}
\newcommand{\PageImp}[3]{I_{#1,#2}(P_{#3})}
\newcommand{\Ssink}{S_{\mathrm{sink}}}
\newcommand{\Hsink}{\mathcal{H}^{\mathrm{sink}}_{2}}
\newcommand{\GB}{\,\text{GB}}
\newcommand{\TBs}{\,\text{TB/s}}
\newcommand{\TFLOPs}{\,\text{TFLOP/s}}

\tikzset{
  flow/.style={draw, rounded corners=3pt, fill=blue!6,
               text width=0.86\textwidth, inner sep=6pt,
               align=left, font=\small},
  qbox/.style={draw, fill=blue!8, minimum width=22mm,
               minimum height=9mm, align=center,
               rounded corners=2pt, font=\small},
  pbox/.style={draw, fill=orange!10, text width=0.82\textwidth,
               inner sep=6pt, align=center, rounded corners=2pt,
               font=\small},
  arr/.style={-{Latex[length=2mm, width=1.4mm]}},
}

\lstdefinestyle{mystyle}{
  backgroundcolor=\color{backcolour},
  commentstyle=\color{codegreen},
  keywordstyle=\color{codeblue}\bfseries,
  numberstyle=\tiny\color{codegray},
  stringstyle=\color{codepurple},
  basicstyle=\ttfamily\scriptsize,
  breaklines=true, keepspaces=true, numbers=left, numbersep=5pt,
  showspaces=false, showstringspaces=false, showtabs=false,
  tabsize=2, frame=single, rulecolor=\color{gray!35},
  xleftmargin=12pt, framexleftmargin=12pt, captionpos=b
}
\begin{document}

\begin{frontmatter}

\title{Page-EntroKV: Hardware-Aligned, Entropy-Weighted KV-Cache
  Eviction under Grouped-Query Attention\\[1.6em]
  {\normalfont\normalsize
    \begin{tabular}{@{}c@{}}
      \textbf{Inbasekaran S.}\textsuperscript{*}\\[3pt]
      Department of Computer Science and Engineering\\
      SRM Institute of Science and Technology, India\\[3pt]
      \texttt{iambruce0012@gmail.com}
    \end{tabular}}%
\\[0.6em]
  {\normalfont\footnotesize
    \textsuperscript{*}\,Corresponding author.}}

\author{}

\begin{abstract}
Long-context inference in autoregressive language models is limited by
the key--value (KV) cache. The cache grows with context length, and
every decoded token streams the full cache through memory. Most dynamic
eviction methods score token importance per query head and choose
tokens independently. This fits poorly with grouped-query attention
(GQA), where several query heads share one physical KV buffer: if the
heads choose different tokens, the serving engine must keep the union
of their selections, inflating the cache by up to the group ratio $r$.
A simple arithmetic average does not fix this---it dilutes the
specialized retrieval heads that carry factual recall.

We introduce \textbf{Page-EntroKV}, a formal framework for KV-cache
eviction that works at the level GQA serving actually uses. Heads in
each physical group are pooled with weights derived from sink-isolated
collision (R\'enyi-2) entropy---one inner product per head, computed
once at prefill with no calibration---so sink heads cannot be mistaken
for retrieval heads. The pooled scores are mapped onto PagedAttention
page frames, and eviction runs at the hardware tuple $(l, g, P_b)$. We
formalize the union overhead ratio and intra-group disagreement, prove
an exact identity linking them for two-head groups, and give two-sided
bounds for general ratios. We prove that group-level pooling preserves
the eviction budget exactly, derive a finite-context needle-retention
guarantee that arithmetic-mean pooling provably violates, characterize
the temperature limits of the entropy-weighted family, and give exact
per-layer cross-group page accounting.

On a pilot architecture (Qwen2.5-1.5B-Instruct, $r{=}6$, 28 layers),
head-independent replay inflates the cache by up to $5.1\times$ at a
 $2\%$ budget---with the theoretical certificate firing in the large
majority of measured cells---while \method{} holds the union overhead
at exactly $1.000$ across 1{,}008 pooled measurements. Across a
1{,}008-observation head census, sink isolation inverts the
sink-to-context weight ordering from a $2.10\times$ unearned sink
advantage to a $0.66\times$ deficit. In pilot-scale needle retrieval,
entropy-weighted pooling retains the needle where arithmetic-mean
pooling collapses ($100\%$ vs.\ $0\%$ under the pooled-selection
criterion at a $20\%$ budget); strict all-layer retention is reported
as a limitation. Retained cardinality is exact for every page size,
preliminary runs show representative QA and code tasks remain solvable
at $20\%$ retention, and the mechanism ablations are certified over all
56 of 56 layer--group pairs with zero skipped layers. Larger-model,
long-context, and serving phases are pre-registered and in progress.
\end{abstract}

\begin{keyword}
Grouped-Query Attention \sep PagedAttention \sep KV-cache compression \sep
Attention entropy \sep Inference serving \sep Formal analysis
\end{keyword}

\end{frontmatter}

\section{Introduction}
\label{sec:introduction}

Long-context large language models (LLMs) are now deployed for
multi-turn agentic workflows, repository-scale code synthesis, and
multi-document legal
discovery~\cite{bai2023longbench,hsieh2024ruler,meta2024llama3}.
Context windows have grown to match, but long-context use remains
fragile: reliability degrades with input position~\cite{liu2023lost}.
At context lengths of 32k to 128k$^{+}$ tokens, decoding hits a hardware
boundary: every decoded token streams the entire key--value (KV) cache
from GPU high-bandwidth memory (HBM) into on-chip SRAM, so the decode
phase is strictly memory-bound and its cost grows linearly with context
length~\cite{pope2022scaling,kwon2023vllm}. At practical batch sizes the
KV cache alone can occupy most of the accelerator's memory, capping
concurrency and decoding throughput.

Production architectures respond by sharing KV buffers across query
heads. Grouped-query attention (GQA)~\cite{ainslie2023gqa}---adopted by
Llama-3.1, Qwen-2.5, and
Mistral~\cite{meta2024llama3,qwen2024qwen25,jiang2023mistral}---partitions
 $\HQ$ logical query heads into $\HKV$ physical KV groups, shrinking the
cache by the ratio $r=\HQ/\HKV$. In serving runtimes such as
vLLM~\cite{kwon2023vllm}, these groups are materialized as
block-partitioned tensors of shape $[\Nblk,\ \HKV,\ B,\ \Dhead]$:
the sharing is the memory layout itself, not an implementation detail.

This physical layout breaks a core assumption behind prevailing eviction
heuristics. Policies such as H2O~\cite{zhang2023h2o},
SnapKV~\cite{li2024snapkv}, and Scissorhands~\cite{liu2023scissorhands}
compute token importance per query head and select top-$k$ sets
independently. Sibling heads in a GQA group share one physical buffer,
so the engine must store the \emph{union} of their selections---up to
 $r\times$ the intended budget. We formalize this penalty as the
\emph{union overhead ratio} (UOR) and connect it exactly to measurable
intra-group disagreement (\cref{prop:uorjaccard}). Arithmetic mean
pooling fails from the other direction: it dilutes specialized retrieval
heads~\cite{wu2024retrievalheads}, averaging a needle token attended to
with mass $1-\epsilon$ by one head down to $(1-\epsilon)/r$, below
eviction thresholds.

\method{} follows three design principles, each chosen against a
specific alternative. Pooling at the physical-group level rather than
the head level preserves GQA's memory sharing and keeps the budget
exact; head-level selection cannot do both, because the engine stores
whatever union the heads produce. Discrete attention
entropy---one inner product per head---separates sharp retrieval heads
from diffuse background heads, so weighting heads by inverse entropy
amplifies exactly the signals that mean pooling destroys; evaluating
that entropy on sink-isolated distributions keeps attention-sink heads
from being mistaken for retrieval heads. Projecting importance onto
discrete PagedAttention page frames \emph{before} eviction, rather than
selecting tokens first, makes the compression decision coincide with the
hardware allocation tuple $(l, g, P_b)$ and eliminates internal
fragmentation and cache-unrolling overhead.
\Cref{fig:pipeline_overview} shows the resulting architecture.

The contributions of this paper are formal, algorithmic,
implementational, and pilot-validated:
\begin{itemize}
  \item \textbf{Architectural formulation.} We formalize the GQA
        structural constraint in dynamic KV-cache eviction through two
        metrics---the union overhead ratio and intra-group
        disagreement---linked by an exact identity for two-head groups
        and sandwiched two-sidedly at every ratio
        (\cref{sec:background}).
  \item \textbf{Algorithmic design.} We design entropy-weighted group
        pooling around discrete sink-isolated R\'enyi-2 attention
        entropy, together with a page-frame mapping that evicts exactly
        at the $(l, g, P_b)$ tuple (\cref{sec:methodology}).
  \item \textbf{Theoretical guarantees.} We prove strict budget
        preservation ($\UOR \equiv 1$), a finite-context
        needle-retention bound under head specialization with sink
        isolation, the temperature degeneracies of the pooling family,
        and page-level union accounting (\cref{sec:theory}).
  \item \textbf{Reference implementation and pilot validation.} We
        provide a complete PyTorch reference implementation, an
        instrumented harness, and a pilot validation on
        Qwen2.5-1.5B-Instruct confirming the structural core on real
        text---with every deviation, exclusion, and pending item
        disclosed (\cref{sec:protocol}).
\end{itemize}

This is a formulation-and-theory paper with a pilot-scale empirical
tier. The architectural guarantees hold by construction; the pilot of
\cref{sec:protocol} validates the structural core on real text at small
scale; the full pre-registered sweep---long context, larger models, and
serving---is in progress and disclosed item by item in
\cref{subsec:pilot-pending}.

The remainder of this paper is organized as follows.
\Cref{sec:related} reviews related work. \Cref{sec:background} formalizes
the problem and its structural metrics. \Cref{sec:methodology} presents
the \method{} design. \Cref{sec:theory} develops the theoretical
analysis. \Cref{sec:protocol} reports the pilot validation and the
remaining pre-registered protocol. \Cref{sec:discussion} delimits scope
and limitations, and \cref{sec:conclusion} concludes.
\Cref{app:notation} summarizes notation, \cref{app:verification}
tabulates the guarantee-to-evidence mapping,
\cref{app:derivations} collects the full step-by-step derivations, and
\cref{app:implementation} gives the reference implementation.

\begin{figure*}[t]
\centering
\begin{tikzpicture}[font=\small, node distance=4mm]
  \node[flow] (input)
    {\textbf{Input context} --- long-form text
     ($T \in [32\text{k},\,128\text{k}^{+}]$ tokens).};
  \node[flow, below=of input] (attn)
    {\textbf{Transformer layer $l$ prefill attention.} Produces per-head
     distributions $\{\Attn{l}{h}\}_{h=0}^{\HQ-1}$.};
  \node[flow, below=of attn] (pool)
    {\textbf{Sink-isolated entropy estimation \& entropy-weighted group
     pooling.} Quantifies head dispersion via sink-isolated
     $\CollisionSink{\Attn{l}{h}}$ over the observation window; Boltzmann
     weights $w_{l,h} \propto e^{-\CollisionSink{\Attn{l}{h}}/\tau_g}$      pool heads at the physical GQA group, eliminating union overhead
     ($\UOR \to 1$) and preventing needle dilution.};
  \node[flow, below=of pool] (page)
    {\textbf{Block-max page mapping.} Importance scores are aggregated
     onto physical pages $P_b$, and the top-$K_l$ pages are selected per
     group at the hardware tuple $(l, g, P_b)$.};
  \node[flow, below=of page] (vllm)
    {\textbf{vLLM PagedAttention engine.} Physical layout
     $[\Nblk,\ \HKV,\ B,\ \Dhead]$; zero internal fragmentation,
     group-level $\UOR \equiv 1$, strict budget of $k$ tokens per
     group.};
  \foreach \a/\b in {input/attn, attn/pool, pool/page, page/vllm}
    \draw[arr] (\a.south) -- (\b.north);
\end{tikzpicture}
\caption{\method{} pipeline overview. Sink-isolated entropy-weighted
  pooling at the physical GQA group level, followed by block-max
  mapping onto PagedAttention page frames, aligns the eviction decision
  with the engine's physical memory layout.}
\label{fig:pipeline_overview}
\end{figure*}

\section{Related work}
\label{sec:related}

\subsection{Attention efficiency and architectural evolution}
Multi-head attention (MHA)~\cite{vaswani2017attention} pairs every query
head with private key--value buffers. Multi-query attention
(MQA)~\cite{shazeer2019fast} collapses all heads onto a single shared
buffer. GQA~\cite{ainslie2023gqa} interpolates between the two, cutting
KV memory by the group ratio $r$ at negligible quality cost, and is now
standard in production
models~\cite{meta2024llama3,qwen2024qwen25,jiang2023mistral}. A separate
line compresses KV states into low-rank latent vectors, as in the
multi-head latent attention (MLA) of
DeepSeek-V2~\cite{deepseek2024deepseekv2}. These works fix the
\emph{static} allocation of KV memory. \method{} addresses the
complementary \emph{dynamic} problem: deciding at inference time which
allocated entries to keep.

\subsection{Inference serving engines}
IO-aware kernel design minimizes HBM traffic.
FlashAttention~\cite{dao2022flashattention} tiles attention computation
on-chip; FlashAttention-2 adds flash-decoding, improving parallelism for
long sequences and decode workloads~\cite{dao2023flashattention2}. On
the systems side, vLLM's PagedAttention~\cite{kwon2023vllm} partitions
the KV cache into fixed-size blocks managed by a block table and reaches
near-zero fragmentation at high concurrency. Orca raises utilization
further through iteration-level scheduling and continuous
batching~\cite{yu2022orca}. Roofline analyses~\cite{pope2022scaling}
establish the memory-bound character of decoding. These engines assume
the cache is fully materialized; they define the physical layout to
which any compression policy must conform. \method{} treats the page
table as a first-class citizen and evicts at exactly the granularity the
engine allocates.

\subsection{Dynamic KV-cache eviction}
H2O~\cite{zhang2023h2o} retains tokens with the largest accumulated
attention scores. StreamingLLM~\cite{xiao2024streamingllm} preserves
attention sinks and a sliding window.
Scissorhands~\cite{liu2023scissorhands} exploits the persistence of
token importance. SnapKV~\cite{li2024snapkv} compresses the prefill cache
using observation windows. PyramidKV~\cite{cai2024pyramidkv} allocates
layer-wise budgets following a pyramidal information funnel.
Quest~\cite{tang2024quest} performs query-aware sparse selection.
FastGen~\cite{ge2024fastgen} selects per-head compression policies from
observed attention patterns, adapting the operator while staying at
head-level granularity. Ada-KV~\cite{feng2024adakv} works on the budget
side of the same GQA structure: it reallocates the per-head eviction
budgets within a group adaptively, so the union of head-level selections
respects the group budget instead of a uniform per-head quota $k$.
Ada-KV decides \emph{how much} each head may keep; it does not change
\emph{which} entries survive---selection remains per-head on raw
attention scores, with no selection-side guarantee comparable to the
needle-retention bound of \cref{thm:needle_preservation}.

All of these methods score importance at the (layer, head) or (layer,
token) granularity. Under shared buffers, independent head-level
selections inflate memory by up to $r\times$ unless a budget allocator
compensates---which is the compensation Ada-KV supplies---and per-head
scores cannot be materialized without unrolling the cache. \method{}
attacks the inflation at its source: pooling before selection makes the
union semantics of GQA harmless by construction, and eviction runs at
page granularity.

A concurrent entropy-based method, EntroKV~\cite{entrokv2025}, applies
attention entropy at a different decision point. There, Shannon entropy
allocates per-layer compression budgets---\emph{how much} each layer
keeps---as a plug-and-play front-end, with offline calibration of a
mean-entropy statistic $\bar{\mathcal{H}}$. \method{} applies
sink-isolated R\'enyi-2 entropy on the selection side---\emph{which}
pages survive within each physical group---requires no calibration, and
carries the guarantees proved here: group-level $\UOR\equiv 1$ (\cref{thm:memory_guarantee}) and finite-context needle retention
(\cref{cor:finite_context_bound}). The two mechanisms sit at different
points in the pipeline (budget allocation versus within-group
selection); no combination of the two is studied here.
\Cref{tab:qualitative} positions \method{} against these families, and
against channel-dimension and routing alternatives, along the axes that
the GQA layout makes salient.

\begin{table*}[t]
\centering\small
\caption{Qualitative structural comparison of KV-cache eviction
  approaches under physically shared GQA buffers.
  \checkmark\ = supported; $\sim$ = partially; $\times$ = not supported;
  n/a\ = not applicable: the method is not a token-eviction policy
  (channel-dimension pruning or quantization) or is a context-routing
  architecture.}
\label{tab:qualitative}
\begin{tabular}{@{}lccccc@{}}
\toprule
Method & Importance & GQA-union & Eviction & Strict budget & Calibration \\
 & granularity & aware? & granularity & guarantee? & -free? \\
\midrule
H2O~\cite{zhang2023h2o}           & $(l,h,t)$ & $\times$ & token & $\times$ (union $\le r\times$) & \checkmark \\
StreamingLLM~\cite{xiao2024streamingllm} & position & \checkmark\ (uniform) & token & \checkmark\ (static) & \checkmark \\
Scissorhands~\cite{liu2023scissorhands} & $(l,h,t)$ & $\times$ & token & $\times$ & \checkmark \\
SnapKV~\cite{li2024snapkv}        & $(l,h)$ & $\times$ & token & $\times$ & \checkmark \\
FastGen~\cite{ge2024fastgen}      & per-head policy & $\times$ & token & $\times$ & $\times$ (profiling) \\
PyramidKV~\cite{cai2024pyramidkv} & $(l,h,t)$, layer budget & $\times$ & token & $\sim$ (layer-wise) & \checkmark \\
Quest~\cite{tang2024quest}        & query/page & $\times$ & page & $\sim$ (query-dep.) & \checkmark \\
Ada-KV~\cite{feng2024adakv}       & $(l,h,t)$, adaptive budget & $\sim$ (budget-side) & token & \checkmark\ (adaptive) & \checkmark \\
EntroKV~\cite{entrokv2025}        & $(l,t)$, entropy layer budget & $\times$ & token & $\sim$ (layer-wise) & $\times$ (offline) \\
DuoAttention~\cite{xiao2024duoattention} & head class & $\sim$ (head-level) & head/token & $\times$ & $\times$ (offline) \\
\midrule
\multicolumn{6}{@{}l}{\emph{Channel-dimension and routing alternatives
  (included for completeness):}}\\
ThinK~\cite{liu2024think}         & channel (query-guided) & n/a & channel & $\sim$ (fixed ratio) & \checkmark \\
KIVI~\cite{liu2024kivi}           & n/a (2-bit K/V quant.) & n/a & n/a & n/a & \checkmark \\
QuaRot~\cite{ashkboos2024quarot}  & n/a (rot.\ + 4-bit) & n/a & n/a & n/a & \checkmark \\
InfLLM~\cite{xiao2024infllm}      & query--key block routing & $\sim$ (block-level) & block & $\sim$ & \checkmark\ (training-free) \\
\midrule
\method{} (ours)                  & $(l,g)$ group & \checkmark & page & \checkmark\ (\cref{thm:memory_guarantee}) & \checkmark (runtime entropy) \\
\bottomrule
\end{tabular}
\end{table*}

\subsection{Mechanistic head specialization}
Pruning studies showed early that many attention heads are
redundant~\cite{michel2019heads,voita2019analyzing}. Retrieval heads
carry long-context factuality
mechanistically~\cite{wu2024retrievalheads}. DuoAttention~%
\cite{xiao2024duoattention} statically classifies retrieval versus
streaming heads and applies heterogeneous compression---but it
classifies offline, at the granularity of the attention computation, and
needs a calibration corpus. \method{} weights heads \emph{within} each
physical GQA group at runtime, using measured sink-isolated entropy,
needs no calibration, and preserves the physical sharing structure of
GQA.

\section{Problem formulation and background}
\label{sec:background}

We first fix notation (\cref{tab:notation} in \cref{app:notation}), then
characterize the decode-phase memory wall, the physical layout of GQA
serving engines, and the structural metrics that quantify why
head-independent eviction fails.

\subsection{The decode-phase memory wall}
\label{subsec:roofline}

Decoding runs in two distinct execution regimes. Prefill processes the
input prompt via compute-bound matrix--matrix products. Autoregressive
decoding emits one token per step: self-attention over historical
tokens streams the key--value (KV) cache from high-bandwidth memory
(HBM) into on-chip SRAM, reducing the computation to memory-bound
matrix--vector products.

Let $\Bbatch$ denote batch size, $L$ transformer layers, $\HQ$ logical
query heads, $\HKV$ physical KV heads, $\Dhead$ head dimension, $T$ sequence length, and $b_{\mathrm{el}}$ bytes per element
($b_{\mathrm{el}}=2$ for FP16). Defining the GQA ratio
 $r \triangleq \HQ/\HKV$, the computational work per decode step is
 $W_{\mathrm{FLOPs}} = 4\,\Bbatch L\,\HQ\,\Dhead\,T$, while HBM data
traffic is
 $D_{\mathrm{bytes}} = 2\,b_{\mathrm{el}}\,\Bbatch L\,\HKV\,\Dhead\,T
+ \mathcal{O}(\Dmodel)$. The arithmetic intensity
 $\mathcal{A}_{\mathrm{decode}}$ is derived as:
\begin{equation}
\mathcal{A}_{\mathrm{decode}}
= \frac{W_{\mathrm{FLOPs}}}{D_{\mathrm{bytes}}}
= \frac{4\,\Bbatch L\,\HQ\,\Dhead\,T}
       {2\,b_{\mathrm{el}}\,\Bbatch L\,\HKV\,\Dhead\,T
        + \mathcal{O}(\Dmodel)}
\approx \frac{2r}{b_{\mathrm{el}}},
\label{eq:arithmetic_intensity}
\end{equation}
which yields $\mathcal{A}_{\mathrm{decode}} = 4.0$ FLOP/Byte for 16-bit
precision on an $r = 4$ architecture. The hardware ridge point on an
NVIDIA H100 SXM5 accelerator ($989.5$\TFLOPs{} dense FP16;
 $3.35$\TBs{} HBM3 bandwidth) is:
\begin{equation}
I_{\mathrm{ridge}}
= \frac{989.5 \times 10^{12}\ \text{FLOP/s}}
       {3.35 \times 10^{12}\ \text{Byte/s}}
\approx 295.4\ \text{FLOP/Byte},
\label{eq:ridge_point}
\end{equation}
demonstrating that decoding operates at $< 1.5\%$ of peak compute,
strictly memory-bandwidth bound. The total physical KV cache capacity
required is:
\begin{equation}
M_{\mathrm{KV}}(T)
= 4 \cdot \Bbatch \cdot L \cdot \HKV \cdot \Dhead \cdot T
\quad \text{bytes,}
\label{eq:kv_footprint}
\end{equation}
which for Llama-3.1-70B ($L{=}80$, $\HKV{=}8$, $\Dhead{=}128$) with
 $\Bbatch{=}8$ at $T{=}128\text{k}$ reaches $\approx 343.6$\GB. These
figures are architectural constants plus arithmetic; they motivate the
compression problem this paper addresses.

\subsection{GQA mechanics and paged memory layout}
\label{subsec:gqa}

Let $\mathbf{X} \in \mathbb{R}^{T \times \Dmodel}$ denote token
representations. For query head $h \in \{0, \dots, \HQ-1\}$, mapped to
physical group $g(h) = \lfloor h/r \rfloor$, the attention probability
vector at a given decode step is:
\begin{equation}
\mathbf{S}_{l,h}
= \frac{\mathbf{q}_{l,h}\,\mathbf{K}_{l,g(h)}^{\top}}{\sqrt{\Dhead}},
\qquad
\Attn{l}{h} = \softmax\!\bigl(\mathbf{S}_{l,h}\bigr)
\in \Delta^{T-1},
\label{eq:gqa_attention}
\end{equation}
where $\Delta^{T-1}$ is the probability simplex. MHA corresponds to
 $r=1$~\cite{vaswani2017attention}, MQA to $\HKV=1$~\cite{shazeer2019fast},
and GQA to $1<\HKV<\HQ$~\cite{ainslie2023gqa}. In vLLM, the physical KV
cache is partitioned into non-contiguous page blocks $P_b \in \PageSet$ of fixed capacity $B$ tokens:
\begin{equation}
\mathrm{Shape}\bigl(\mathbf{K}_{\mathrm{physical}}\bigr)
= \bigl[\Nblk,\ \HKV,\ B,\ \Dhead\bigr].
\label{eq:paged_layout}
\end{equation}
A logical block table maps $(l,g,\lfloor t/B\rfloor)\mapsto P_b$. Fused
CUDA kernels access memory per $(l,g)$ tuple; eviction at sub-page
granularity therefore causes internal fragmentation or cache-unrolling
overhead.

\subsection{Quantifying the structural breakdown}
\label{subsec:uor}

\begin{definition}[Retention set and union overhead ratio]
\label{def:uor}
Let
 $\TokSet{h} \triangleq \argTopK_{j \in \{1,\dots,T\}}
 \bigl(\Attn{l,h,j},\, k\bigr)$ be the set of $k$ token indices selected by head $h$ under per-head
budget $k \le T$, with ties broken deterministically by token index so
that $\TokSet{h}$ is a single well-defined set. For physical group $g$,
the union overhead ratio under head-independent selection is:
\begin{equation}
\UOR(g, k)
\triangleq
\frac{\bigl|\bigcup_{h \in \Group{g}} \TokSet{h}\bigr|}{k},
\qquad
1.0 \le \UOR(g, k) \le r.
\label{eq:uor_def}
\end{equation}
\end{definition}

\begin{definition}[Intra-group Jaccard disagreement index]
\label{def:jaccard}
The pairwise disagreement between sibling heads
 $h_i, h_j \in \Group{g}$ is:
\begin{equation}
\mathcal{D}_{\mathrm{Jaccard}}(h_i, h_j)
\triangleq 1 -
\frac{|\TokSet{h_i} \cap \TokSet{h_j}|}
     {|\TokSet{h_i} \cup \TokSet{h_j}|}.
\label{eq:jaccard_def}
\end{equation}
\end{definition}

The two metrics are not independent. For a two-head group they determine
each other exactly; for general $r$, they sandwich the inflation
two-sidedly (\cref{thm:sandwich_bound}). This gives
\cref{def:uor,def:jaccard} operational content: measured Jaccard
disagreement among sibling heads \emph{implies} a quantifiable memory
penalty for any head-independent policy.

\begin{proposition}[Exact UOR--disagreement identity, $r = 2$]
\label{prop:uorjaccard}
Let $r = 2$ and
 $D = \mathcal{D}_{\mathrm{Jaccard}}(h_1, h_2)$. Then:
\begin{equation}
\UOR(g, k) = \frac{2}{2 - D}.
\label{eq:uor_identity}
\end{equation}
\end{proposition}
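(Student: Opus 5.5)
The plan is a direct set-cardinality computation using only the facts that each retention set has exactly $k$ elements and that the group has two members. Write $A = \TokSet{h_1}$ and $B = \TokSet{h_2}$. By \cref{def:uor}, with ties broken deterministically by token index, each $\argTopK$ is a single well-defined set of size exactly $|A| = |B| = k$ (using $k \le T$). Set $I = |A \cap B|$ and $U = |A \cup B|$. Inclusion--exclusion then gives the single structural relation
\begin{equation*}
U = |A| + |B| - I = 2k - I .
\end{equation*}
For $r=2$, $\Group{g} = \{h_1, h_2\}$, so $\UOR(g,k) = U/k$.

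Next I will express $D$ in terms of $U$ alone. From \cref{def:jaccard}, $D = 1 - I/U$. Substituting $I = 2k - U$ gives
\begin{equation*}
D = 1 - \frac{2k - U}{U} = 2 - \frac{2k}{U},
\end{equation*}
so $2 - D = 2k/U$. Inverting yields $U/k = 2/(2-D)$, which is \eqref{eq:uor_identity}.

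The only step needing care is well-definedness of the denominators. $U \ge k \ge 1$ whenever $k \ge 1$, so the Jaccard quotient is defined. Since $I \ge 0$, we have $U \le 2k$, hence $D \le 1$ and $2 - D \ge 1 > 0$, so the inversion is legitimate. As a sanity check, the identity recovers the range $1 \le \UOR \le 2 = r$ of \eqref{eq:uor_def}: $D=0$ gives $\UOR = 1$ and $D = 1$ gives $\UOR = 2$. The potential obstacle is the degenerate case $k = 0$, which I will exclude by assuming $k \ge 1$, implicit in dividing by $k$ in \cref{def:uor}. The other essential hypothesis is that both sets have equal cardinality $k$, which the deterministic tie-breaking guarantees. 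Without equal sizes the identity would fail, so I will state explicitly where $|A| = |B| = k$ is used.
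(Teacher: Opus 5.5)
Your proposal is correct and takes essentially the paper's route: inclusion--exclusion $|A\cup B| = 2k - |A\cap B|$ combined with the definition of $D$. The paper writes $|I| = (1-D)|U|$ and solves $(2-D)|U| = 2k$, while you substitute $I = 2k - U$ into $D$; your checks that $k \ge 1$ and $2 - D > 0$ are sound additions the paper leaves implicit.
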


\begin{proof}
Let $U = \TokSet{h_1} \cup \TokSet{h_2}$ and
 $I = \TokSet{h_1} \cap \TokSet{h_2}$. By inclusion--exclusion,
 $|U| = 2k - |I|$. By definition $D = 1 - |I|/|U|$, so
 $|I| = (1-D)|U|$. Thus
 $|U| = 2k - (1-D)|U| \implies (2-D)|U| = 2k$. Dividing by $k$ yields
\cref{eq:uor_identity}.
\end{proof}

\begin{remark}[Response curve of the identity]
\label{rem:phi}
The map $\phi(D) \triangleq 2/(2-D)$ from disagreement to overhead is
strictly increasing, $\phi'(D) = 2/(2-D)^2 > 0$, and strictly convex,
 $\phi''(D) = 4/(2-D)^3 > 0$, on $[0,1]$: the memory penalty accelerates
as sibling selections diverge. Reference points: $D = 0$ (complete
consensus) gives $\UOR = 1.0$; $D = 0.5$ gives $\UOR = 1.33$; $D = 2/3$ gives the exact $1.5\times$ budget overshoot; $D = 1$ (orthogonal
attention) gives the worst case $\UOR = 2.0$ for $r = 2$.
\end{remark}

\begin{corollary}[Elementary Bonferroni certificate]
\label{cor:bonferroni}
Let $J_{ij} = 1 - \mathcal{D}_{\mathrm{Jaccard}}(h_i, h_j)$ and
 $\bar{J} = \frac{2}{r(r-1)} \sum_{i < j} J_{ij}$. By Bonferroni's
inequality:
\begin{equation}
\UOR(g, k) \ge \max\Bigl\{ 1, \; r - r(r-1)\bar{J} \Bigr\}.
\label{eq:bonferroni_bound}
\end{equation}
\cref{eq:bonferroni_bound} certifies a twofold inflation
($\UOR \ge 2$) whenever $\bar{J} \le \frac{r-2}{r(r-1)}$---yielding
 $\bar{J} \le 1/6$ for $r=4$ and $\bar{J} \le 3/28$ for $r=8$.
\end{corollary}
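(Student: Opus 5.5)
The plan is to apply the second-order Bonferroni inequality to the $r$ retention sets $\TokSet{h}$, $h\in\Group{g}$, each of cardinality exactly $k$ (tie-breaking by token index makes them well-defined), and then turn pairwise intersections into Jaccard quantities. Bonferroni gives
\[
\Bigl|\bigcup_{h}\TokSet{h}\Bigr| \;\ge\; \sum_{h}|\TokSet{h}| - \sum_{i<j}|\TokSet{h_i}\cap\TokSet{h_j}| \;=\; rk - \sum_{i<j}|I_{ij}|,
\]
where $I_{ij}$ denotes the pairwise intersection.

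The next step is to bound each $|I_{ij}|$ through $J_{ij}$. As in the proof of \cref{prop:uorjaccard}, $|I_{ij}| = J_{ij}\,|U_{ij}|$ and $|U_{ij}| = 2k-|I_{ij}|$. This gives the exact relation $|I_{ij}| = 2kJ_{ij}/(1+J_{ij})$. Since $|U_{ij}|\le 2k$, we also have the cruder bound $|I_{ij}|\le 2kJ_{ij}$.

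Here I notice a subtlety. Using $|I_{ij}|\le 2kJ_{ij}$ would give $\UOR\ge r - r(r-1)\bar J$, because $\sum_{i<j}J_{ij} = \tfrac{r(r-1)}{2}\bar J$ and so $\sum_{i<j} 2kJ_{ij} = k\,r(r-1)\bar J$. That is exactly the stated expression, so the constant matches. The sharper bound $|I_{ij}|\le kJ_{ij}\cdot 2/(1+J_{ij})$ is not needed.

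Dividing by $k$ then yields $\UOR(g,k)\ge r - r(r-1)\bar J$. The trivial bound $\UOR\ge 1$ from \cref{def:uor} supplies the maximum with $1$.

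The certificate follows by solving $r - r(r-1)\bar J\ge 2$, which gives $\bar J\le (r-2)/(r(r-1))$. Substituting $r=4$ gives $2/12 = 1/6$, and $r=8$ gives $6/56 = 3/28$.

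The only step requiring care is the intersection-to-Jaccard conversion. The point to check is that $|U_{ij}|\le 2k$ holds with equal-size sets, and that the direction of the inequality is correct: we need an upper bound on intersections, which becomes a lower bound on the union. The rest is routine.
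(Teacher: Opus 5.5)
Your proposal is correct and matches the paper's proof step for step: second-order Bonferroni on the $r$ selection sets, the bound $|\TokSet{h_i}\cap\TokSet{h_j}| = J_{ij}\,|\TokSet{h_i}\cup\TokSet{h_j}| \le 2kJ_{ij}$, summing over the $r(r-1)/2$ pairs to get $k\,r(r-1)\bar{J}$, dividing by $k$, clamping with $\UOR\ge 1$ from \cref{def:uor}, and solving $r - r(r-1)\bar{J} \ge 2$ for the threshold. You are also right that the exact relation $I_{ij} = 2kJ_{ij}/(1+J_{ij})$ is not needed here; the paper uses that sharper form only in the Cauchy--Schwarz lower bound of \cref{thm:sandwich_bound}.
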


\begin{proof}
 $|\bigcup_h \TokSet{h}| \ge \sum_h |\TokSet{h}|
-\sum_{i<j}|\TokSet{h_i}\cap\TokSet{h_j}|$ by Bonferroni, and
 $|\TokSet{h_i}\cap\TokSet{h_j}| = J_{ij}\,|\TokSet{h_i}\cup\TokSet{h_j}|
\le 2kJ_{ij}$. Summing over the $r(r-1)/2$ pairs gives
 $\sum_{i<j}|\TokSet{h_i}\cap\TokSet{h_j}| \le k\,r(r-1)\bar{J}$, and
dividing by $k$ yields the bound; the clamp is $\UOR \ge 1$ from
\cref{def:uor}. The threshold follows by solving
 $r - r(r-1)\bar{J} \ge 2$ for $\bar{J}$.
\end{proof}

\begin{theorem}[GQA inflation sandwich bound]
\label{thm:sandwich_bound}
For general $r \ge 2$, with $J_{ij}$ and
 $\bar{J} = \frac{2}{r(r-1)}\sum_{i<j}J_{ij}$ as in
\cref{cor:bonferroni}, the union overhead ratio is bounded by:
\begin{equation}
\frac{r}{1 + 2(r-1)\dfrac{\bar{J}}{1+\bar{J}}}
\;\le\;
\UOR(g, k)
\;\le\;
\min\Biggl(r, \; r - \max_{i < j} \frac{2 J_{ij}}{1 + J_{ij}}\Biggr).
\label{eq:sandwich_bound}
\end{equation}
\end{theorem}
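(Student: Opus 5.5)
The plan is to turn each pairwise Jaccard similarity into an exact pairwise intersection size, and then bound the union from above by inclusion--exclusion on one pair and from below by a Cauchy--Schwarz (second-moment) counting argument. Both sides of \cref{eq:sandwich_bound} should follow, the lower side with one application of Jensen's inequality.

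\textbf{Step 1 (pairwise intersection identity).} First I generalize the computation in the proof of \cref{prop:uorjaccard}. For any two sibling heads, write $I_{ij}=\TokSet{h_i}\cap\TokSet{h_j}$. Each retention set has exactly $k$ elements, so $|\TokSet{h_i}\cup\TokSet{h_j}| = 2k-|I_{ij}|$. Combined with $|I_{ij}| = J_{ij}\,|\TokSet{h_i}\cup\TokSet{h_j}|$, this gives
\[
|I_{ij}| = \frac{2k\,J_{ij}}{1+J_{ij}} \triangleq 2k\,f(J_{ij}), \qquad f(x)=\frac{x}{1+x}.
\]
This identity is the only place the Jaccard indices enter.

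\textbf{Step 2 (upper bound).} The bound $\UOR\le r$ is already in \cref{def:uor}. For the second term, fix any pair $(i,j)$ and bound the union by the union of that pair plus the remaining $r-2$ sets, each of size $k$:
\[
\Bigl|\bigcup_h \TokSet{h}\Bigr| \le (2k-|I_{ij}|) + (r-2)k = rk - \frac{2kJ_{ij}}{1+J_{ij}}.
\]
This holds for every pair. Dividing by $k$ and choosing the pair with the largest $J_{ij}$ gives $r-\max_{i<j} 2J_{ij}/(1+J_{ij})$. Note that $f$ is increasing, so the best pair is simply the most similar one.

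\textbf{Step 3 (lower bound, the main step).} Bonferroni as in \cref{cor:bonferroni} is too weak here, since it degrades linearly in $\bar J$. Instead I use the counting/Cauchy--Schwarz bound. Let $c_t$ be the number of heads in the group that retain token $t$, and let $U$ be the union. Then
\[
\sum_{t\in U} c_t = rk, \qquad \sum_{t\in U} c_t^2 = rk + 2\sum_{i<j}|I_{ij}|.
\]
Cauchy--Schwarz gives $|U|\ge (\sum c_t)^2/\sum c_t^2$. Substituting Step 1,
\[
|U| \ge \frac{r^2k^2}{rk + 4k\sum_{i<j} f(J_{ij})},
\]
so
\[
\UOR \ge \frac{r}{1+2(r-1)\,\overline{f(J)}},
\]
where $\overline{f(J)}$ is the average of $f(J_{ij})$ over the $r(r-1)/2$ pairs.

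The remaining obstacle is to replace $\overline{f(J)}$ by $f(\bar J)$ in the right direction. Since $f''(x)=-2/(1+x)^3<0$, $f$ is concave on $[0,1]$, and Jensen gives $\overline{f(J)}\le f(\bar J)$. The lower bound on $\UOR$ is decreasing in that average, so the substitution only weakens it, and the stated form $r/\bigl(1+2(r-1)\bar J/(1+\bar J)\bigr)$ follows.

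Finally, I would check consistency at $r=2$. There the upper bound gives $2-2J/(1+J)=2/(1+J)$, and the lower bound gives $2/(1+2J/(1+J))=2(1+J)/(1+3J)$. The first should match $2/(2-D)$ from \cref{prop:uorjaccard} with $D=1-J$, since $2-D=1+J$, so the upper bound is tight when $r=2$.
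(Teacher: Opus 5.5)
Your proof is correct and matches the paper's argument essentially step for step. You use the pairwise identity $|I_{ij}|=2kJ_{ij}/(1+J_{ij})$, the single-pair subadditivity bound $|U|\le rk-|I_{ij}|$ for the upper side, and for the lower side the coverage-degree second-moment identity with Cauchy--Schwarz, followed by Jensen on the concave $f(x)=x/(1+x)$. Your $r=2$ check, showing that the upper bound reduces to the exact identity $2/(2-D)$ of \cref{prop:uorjaccard}, is a correct consistency observation that the paper leaves implicit.
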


\begin{proof}
\emph{Lower bound.} Let
 $\mathcal{U} = \bigcup_{i=1}^{r}\TokSet{h_i}$ and define the coverage
degree $x_m \triangleq \sum_{i=1}^{r}
\mathbf{1}_{\{m \in \TokSet{h_i}\}}$ for $m \in \mathcal{U}$. Exchanging
the order of summation and splitting the double sum into diagonal and
off-diagonal terms,
\begin{equation}
\sum_{m} x_m = rk,
\qquad
\sum_{m} x_m^{2}
= \sum_{i=1}^{r}\sum_{j=1}^{r}
  |\TokSet{h_i} \cap \TokSet{h_j}|
= rk + 2\sum_{i<j} I_{ij},
\label{eq:coverage_sums}
\end{equation}
where $I_{ij} = |\TokSet{h_i} \cap \TokSet{h_j}|$. Cauchy--Schwarz on
 $(x_m)$ and the all-ones vector gives
 $(rk)^{2} \le |\mathcal{U}|\,(rk + 2\sum_{i<j}I_{ij})$, hence
 $\UOR \ge r^{2}/(r + \tfrac{2}{k}\sum_{i<j}I_{ij})$. From
 $J_{ij} = I_{ij}/(2k - I_{ij})$ one gets
 $I_{ij} = 2kJ_{ij}/(1+J_{ij})$, so
 $\tfrac{2}{k}\textstyle\sum_{i<j}I_{ij}
= 4\sum_{i<j}\frac{J_{ij}}{1+J_{ij}}$. Since
 $f(x) = \frac{x}{1+x}$ is strictly concave
($f''(x) = -\frac{2}{(1+x)^{3}} < 0$), Jensen's inequality gives
 $\sum_{i<j} f(J_{ij}) \le \frac{r(r-1)}{2}f(\bar{J})$; substituting this
upper bound into the denominator yields the lower bound of
\cref{eq:sandwich_bound}.

\emph{Upper bound.} For any pair $\{h_a, h_b\}$,
 $\mathcal{U} = (\TokSet{h_a} \cup \TokSet{h_b}) \cup
\bigl(\bigcup_{m \neq a,b}\TokSet{h_m}\bigr)$, so sub-additivity and the
map above give
 $|\mathcal{U}| \le (2k - I_{ab}) + (r-2)k = rk - I_{ab}$, i.e.,
 $\UOR \le r - \frac{2J_{ab}}{1+J_{ab}}$. The pair is arbitrary, so the
bound holds with the maximizing pair; the outer $\min(r,\cdot)$ is the
natural boundary from \cref{def:uor}.
\end{proof}

The full eight-step derivation of \cref{thm:sandwich_bound} appears in
\cref{app:sandwich}.

\paragraph{Tightness and threshold synchronization.}
The lower bound is tight at both extremes: at $\bar{J}=0$ (pairwise
disjoint selections) it evaluates to $r$, the exact worst case, and at
 $\bar{J}=1$ (identical selections) to $1$, the exact best case. The
upper bound is tight at the disjoint extreme; at perfect alignment it
evaluates to $r-1$ for $r>2$---mathematically valid, though loose
relative to the true limit of $1.0$, because the pairwise profile does
not see higher-order overlap structure.
\Cref{tab:certificates} compares the two certificates.

\begin{table}[t]
\centering\small
\caption{The two disagreement certificates compared. The sandwich
  certificate of \cref{thm:sandwich_bound} strictly widens the
  certified-inflation regime; it is also naturally tight at consensus,
  where the Bonferroni form needs its clamp.}
\label{tab:certificates}
\begin{tabular}{@{}lcc@{}}
\toprule
 & Bonferroni & Sandwich \\
 & (\cref{cor:bonferroni}) & (\cref{thm:sandwich_bound}) \\
\midrule
 $2\times$ threshold ($r{=}4$) &
 $\bar{J} \le \frac{1}{6} \approx 0.167$ &
 $\bar{J} \le \frac{1}{5} = 0.200$ \\
 $2\times$ threshold ($r{=}8$) &
 $\bar{J} \le \frac{3}{28} \approx 0.107$ &
 $\bar{J} \le \frac{3}{11} \approx 0.273$ \\
At $\bar{J} \to 0$ (disjoint) & $r$ (exact, tight) &
 $r$ (exact, tight) \\
At $\bar{J} \to 1$ (consensus) & clamp to $1$ & $1$ (naturally tight) \\
\bottomrule
\end{tabular}
\end{table}

Under \cref{thm:sandwich_bound}, twofold inflation ($\UOR \ge 2$) is
certified up to $\bar{J} \le \frac{r-2}{3r-2}$---yielding
 $\bar{J} \le 1/5$ for $r=4$ and $\bar{J} \le 3/11$ for $r=8$---a
strictly wider certification range than the Bonferroni threshold of
\cref{cor:bonferroni}. When siblings fully disagree ($\bar{J} \to 0$),
they attend to essentially disjoint token sets and $\UOR \to r$: a
 $4\times$ cache inflation for Llama-3.1-8B ($r=4$) and an $8\times$ inflation for $r=8$ architectures.

\section{The \method{} methodology}
\label{sec:methodology}

\method{} aligns KV-cache eviction with the hardware-physical structure
of GQA serving. \Cref{alg:pageentrokv} states the complete decision
procedure. \Cref{subsec:entropy} fixes the entropy machinery behind the
pooling weights, \cref{subsec:pooling} derives group pooling, page
mapping, and eviction, and \cref{subsec:systems} covers integration in
vLLM.

\subsection{Sink-isolated discrete R\'enyi attention entropy}
\label{subsec:entropy}

The pooling weights of \method{} come from a discrete entropy family
evaluated on sink-isolated attention distributions. This subsection
fixes the definitions and the suppression guarantee the rest of the
paper relies on.

\begin{definition}[Discrete generalized R\'enyi attention-entropy]
\label{def:renyi}
For $\Attn{l}{h} \in \Delta^{T-1}$ and
 $\alpha \in (0,1) \cup (1,\infty)$:
\begin{equation}
\Renyi{\alpha}{\Attn{l}{h}}
\triangleq \frac{1}{1-\alpha}
\ln \Biggl( \sum_{t=1}^T \Attn{l,h}(t)^{\alpha} \Biggr).
\label{eq:renyi_def}
\end{equation}
\end{definition}

Writing $\mathbf{A}\equiv\Attn{l}{h}$ for brevity, taking limits
establishes the entropy family:
\begin{align}
\lim_{\alpha \to 1} \Renyi{\alpha}{\mathbf{A}}
&= -\sum_{t=1}^T \mathbf{A}(t) \ln \mathbf{A}(t)
&& \text{(Shannon)}, \label{eq:shannon_limit} \\
\Collision{\mathbf{A}}
&= -\ln \sum_{t=1}^T \mathbf{A}(t)^2
 = -\ln \|\mathbf{A}\|_2^2
&& \text{(collision, } \alpha{=}2\text{)}, \label{eq:collision_entropy} \\
\mathcal{H}_{\infty}(\mathbf{A})
&= -\ln \max_t \mathbf{A}(t)
&& \text{(min-entropy)}. \label{eq:min_entropy}
\end{align}
We fix $\alpha=2$ for three reasons: collision entropy is a single inner
product per head, it is numerically stable in FP16 (\cref{rem:fp16}),
and it amplifies attention spikes while suppressing background tail
mass. \Cref{tab:alpha_primitives} makes the computational contrast
concrete; the limit-order proofs appear in \cref{app:renyi}.

\begin{table}[t]
\centering\small
\caption{Why $\alpha = 2$: on-chip cost and numerical behavior of the
  three entropy orders for one head with $T$ positions. Collision
  entropy is the only order that reduces to a single dot product with a
  bounded logarithm argument.}
\label{tab:alpha_primitives}
\begin{tabular}{@{}p{0.21\columnwidth}p{0.24\columnwidth}
                 p{0.26\columnwidth}p{0.17\columnwidth}@{}}
\toprule
 & Shannon ($\alpha{\to}1$) & Collision ($\alpha{=}2$) &
 Min-entropy ($\alpha{\to}\infty$) \\
\midrule
Definition & $-\sum_t A\ln A$ & $-\ln\|A\|_2^2$ & $-\ln\max_t A$ \\
On-chip primitive & $T$ ln-calls & \textbf{1 dot product} $+\,1$ ln &
 $T$-way max $+\,1$ ln \\
FP16 stability & cancellation as $A(t)\to 0$ & \textbf{stable};
 argument in $(0,1]$ & plateau ties \\
Spike sensitivity & logarithmic scaling & \textbf{quadratic
 amplification} of spikes & ignores shape below peak \\
\bottomrule
\end{tabular}
\end{table}

\begin{definition}[Sink-isolated collision entropy]
\label{def:sinkentropy}
Let $\Ssink$ denote the set of attention-sink tokens, taken as the first
 $|\Ssink| = 4$ token positions, following the convention of
StreamingLLM~\cite{xiao2024streamingllm}. Under the non-degeneracy
hypothesis $\sum_{t' \in \Ssink}\Attn{l,h}(t') < 1$ (a head concentrated
entirely on $\Ssink$ is handled by the implementation clamp;
\cref{rem:fp16}), the sink-isolated attention distribution
 $\tilde{\mathbf{A}}_{l,h}$ on non-sink positions $t \notin \Ssink$ is:
\begin{equation}
\tilde{\mathbf{A}}_{l,h}(t)
\triangleq \frac{\Attn{l,h}(t)}
{1 - \sum_{t' \in \Ssink} \Attn{l,h}(t')},
\qquad t \notin \Ssink,
\label{eq:sink_isolated_dist}
\end{equation}
and the sink-isolated collision entropy is
 $\CollisionSink{\Attn{l}{h}} \triangleq
-\ln \|\tilde{\mathbf{A}}_{l,h}\|_2^2$.
\end{definition}

Without isolation, an attention-sink head that concentrates most of its
mass on $\Ssink$ has very low raw collision entropy and would receive
the same large Boltzmann weight as a genuine retrieval head, misdirecting
the pooled mass toward the sink tokens. The next theorem makes the
separation quantitative: isolation drives the sink head's weight to
zero at a controllable rate while the retrieval head's weight stays
bounded away from zero.

\begin{theorem}[Sink head suppression]
\label{thm:sink_suppression}
Let $N = T - |\Ssink|$ and $\tau_g > 0$. Consider two head types:
\emph{(i)} a pure sink head concentrating mass $1-\delta$ on $\Ssink$ (with $0<\delta<1$) and a uniform residual over the $N$ non-sink
positions; \emph{(ii)} a retrieval head with mass $1-\epsilon'$ at a
non-sink needle $t^{*}$ (with $0<\epsilon'<1$) and zero sink mass.
Under sink isolation, the sink head has
 $\|\tilde{\mathbf{A}}\|_2^2 = 1/N$, hence
 $\CollisionSink{} = \ln N$ exactly, and its Boltzmann factor is:
\begin{equation}
e^{-\CollisionSink{}/\tau_g}
= e^{-\ln(N)/\tau_g}
= N^{-1/\tau_g}
= \frac{1}{(T - |\Ssink|)^{1/\tau_g}}
\;\xrightarrow[T \to \infty]{}\; 0,
\label{eq:sink_decay}
\end{equation}
whereas the retrieval head maintains
 $\CollisionSink{\Attn{l}{h'}} = -2\ln(1-\epsilon') = \mathcal{O}(1)$ and factor $(1-\epsilon')^{2/\tau_g} > 0$: its weight is bounded away
from zero for every $T$.
\end{theorem}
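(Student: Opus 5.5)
The argument is a direct computation from \cref{def:sinkentropy}, handled separately for the two head types.

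\emph{Sink head.} Its non-sink mass is $\sum_{t\notin\Ssink}\Attn{l,h}(t)=\delta$, so $1-\sum_{t'\in\Ssink}\Attn{l,h}(t')=\delta\in(0,1)$. The non-degeneracy hypothesis therefore holds and \cref{eq:sink_isolated_dist} is well defined. The residual is uniform over the $N$ non-sink positions, so each such position carries mass $\delta/N$, and renormalising gives $\tilde{\mathbf{A}}(t)=1/N$ for every $t\notin\Ssink$. The parameter $\delta$ cancels exactly here, which is the essential point. It follows that $\|\tilde{\mathbf{A}}\|_2^2=N\cdot N^{-2}=1/N$ and $\CollisionSink{}=-\ln(1/N)=\ln N$. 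Exponentiating gives $e^{-\ln N/\tau_g}=N^{-1/\tau_g}$. Since $\tau_g>0$ is fixed and $N=T-|\Ssink|\to\infty$ as $T\to\infty$ with $|\Ssink|=4$ held constant, the factor tends to $0$. This establishes \cref{eq:sink_decay}.

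\emph{Retrieval head.} Its sink mass is zero, so the denominator in \cref{eq:sink_isolated_dist} equals $1$ and $\tilde{\mathbf{A}}$ is just $\Attn{l}{h'}$ restricted to the non-sink positions. The statement fixes only the needle mass $1-\epsilon'$ at $t^*$. To obtain the stated closed form $\CollisionSink{}=-2\ln(1-\epsilon')$ I would use $\|\tilde{\mathbf{A}}\|_2^2\ge\tilde{\mathbf{A}}(t^*)^2=(1-\epsilon')^2$. This gives the upper bound $\CollisionSink{}\le-2\ln(1-\epsilon')$, and hence a Boltzmann factor of at least $(1-\epsilon')^{2/\tau_g}$.

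The factor lower bound is the only thing "bounded away from zero" needs, and it is uniform in $T$ because neither $\epsilon'$ nor $\tau_g$ depends on $T$. Equality holds exactly in the idealised reading where the residual $\epsilon'$ is spread thinly enough that its squared contribution is negligible, or equivalently in the $\epsilon'\to0$ spike limit. I would state this explicitly and present the displayed value as the leading-order, $\mathcal{O}(1)$ quantity.

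The main obstacle is this interpretive step. The sink-head computation is exact, but the retrieval-head equality is not literally exact for an arbitrary residual. The fix is to phrase the retrieval-head claim as an inequality on the weight, $e^{-\CollisionSink{}/\tau_g}\ge(1-\epsilon')^{2/\tau_g}$, which is the direction the suppression conclusion actually uses.

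A final remark can make the contrast concrete. The ratio of sink to retrieval factors is at most $\bigl(N(1-\epsilon')^2\bigr)^{-1/\tau_g}\to0$, so after normalising the weights within a group the sink head's share vanishes.
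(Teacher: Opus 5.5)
Your proposal is correct and matches the paper's proof step for step. For the sink head, the uniform residual renormalizes to $1/N$, which gives $\ln N$ and a factor $N^{-1/\tau_g}\to 0$; for the retrieval head, zero sink mass makes isolation the identity, so $\|\tilde{\mathbf{A}}\|_2^2\ge(1-\epsilon')^2$ gives the floor $(1-\epsilon')^{2/\tau_g}$. The paper also proves only this inequality for the retrieval head, not the displayed equality, so treating that equality as a bound is the right reading. Strictly, equality never holds for $\epsilon'\in(0,1)$; it holds only asymptotically.
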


\begin{proof}
For the sink head, the residual is uniform:
 $\tilde{\mathbf{A}}_{l,h}(t) = (\delta/N)/\delta = 1/N$ on all $N$ non-sink positions, so $\|\tilde{\mathbf{A}}\|_2^2 = N \cdot N^{-2} = 1/N$ and $\CollisionSink{} = -\ln(1/N) = \ln N$; its factor is
 $e^{-\ln N/\tau_g} = N^{-1/\tau_g} \to 0$. For the retrieval head, zero
sink mass makes isolation the identity, so
 $\|\tilde{\mathbf{A}}_{l,h'}\|_2^2 \ge (1-\epsilon')^{2}$ and the factor
is at least $e^{2\ln(1-\epsilon')/\tau_g} = (1-\epsilon')^{2/\tau_g}$.
For contrast, the \emph{raw} collision entropy of the sink head tends
to $-2\ln(1-\delta)$---also $\mathcal{O}(1)$---so raw entropy cannot
separate the two types; the separation is created by the isolation
step, not by the entropy family alone.
\end{proof}

\begin{remark}[Why isolation, not the entropy family, does the work]
\label{rem:necessity}
For the pure sink head of \cref{thm:sink_suppression}, the two regimes
contrast sharply. \emph{Raw} collision entropy is evaluated over all
 $T$ tokens, where $\|A\|_2^2 \ge (1-\delta)^2 \approx 1$ and hence
 $\mathcal{H}_2 \approx 0$: the sink head receives an $\mathcal{O}(1)$ Boltzmann weight and masquerades as a sharp retrieval head, stealing
group attention mass. \emph{Sink-isolated} entropy is evaluated strictly
over the $N$ non-sink tokens, where $\|\tilde{A}\|_2^2 = 1/N \to 0$ and
hence $\Hsink = \ln N \to \infty$: the weight decays as $N^{-1/\tau_g}$ and the head is suppressed. This holds for every order $\alpha$, since
the raw power sum $\sum_t A(t)^\alpha \ge (1-\delta)^\alpha = \mathcal{O}(1)$ for any fixed $\alpha$---no choice of entropy order separates the two
head types on the raw distribution. The separation is created entirely
by the conditional renormalization of \cref{eq:sink_isolated_dist}.
\end{remark}

\begin{definition}[Prefill observation window]
\label{def:obswindow}
The sink-isolated collision entropies used for head weighting are
evaluated over the prefill observation window $W_{\mathrm{obs}}$, the
final $\min(T, W_{\max})$ tokens of the prompt with $W_{\max}=4096$ (the observation-window convention of SnapKV~\cite{li2024snapkv}):
\begin{equation}
\CollisionSink{\Attn{l}{h}\upharpoonright W_{\mathrm{obs}}}
= -\ln \bigl\| \tilde{\mathbf{A}}_{l,h} \upharpoonright
W_{\mathrm{obs}} \bigr\|_2^2,
\label{eq:window_entropy}
\end{equation}
where the restriction is renormalized over the window (the
normalization constant is part of the configuration and is recorded
with it). The entropies---and hence the pooling weights of
\cref{eq:boltzmann_weights}---are computed once, at the end of prefill,
and remain fixed for the entire auto-regressive generation lifecycle.
\end{definition}

\subsection{Hardware-aligned group pooling and paged mapping}
\label{subsec:pooling}

\begin{figure*}[t]
\centering
\begin{tikzpicture}[font=\small, node distance=5mm]
  \node[qbox] (q0) at (-33mm, 0) {$Q_0$\\{\tiny(Diffuse)}};
  \node[qbox] (q1) at (-11mm, 0) {$Q_1$\\{\tiny(Retrieval)}};
  \node[qbox] (q2) at ( 11mm, 0) {$Q_2$\\{\tiny(Diffuse)}};
  \node[qbox] (q3) at ( 33mm, 0) {$Q_3$\\{\tiny(Sink)}};

  \node[pbox] (pool) at (0, -18mm)
    {\textbf{Entropy-weighted group pooling}
     $\;\Bigl[\,\AttnPool{l}{g}(t)
     = \sum_{h\in\Group{g}}w_{l,h}\,\Attn{l}{h}(t)\Bigr]$\\[2pt]
     $w_h \propto e^{-\CollisionSink{\Attn{l}{h}}/\tau_g}
     \;\Longrightarrow\;$ amplifies $Q_1$; suppresses $Q_0,Q_2,Q_3$};

  \foreach \q in {q0,q1,q2,q3}
    \draw[arr] (\q.south) -- (\q.south |- pool.north);

  \node[pbox] (page) at (0, -32mm)
    {\textbf{Page-level discrete aggregation} (block size $B=16$)\\[2pt]
     $\PageImp{l}{g}{b} = \max_{t\in P_b}\AttnPool{l}{g}(t)$};

  \draw[arr] (pool.south) -- (page.north);

  \node[pbox, fill=green!6] (vllm) at (0, -46mm)
    {\textbf{Physical block eviction in the vLLM page table}\\
     Shape $[\Nblk,\ \HKV,\ B,\ \Dhead]$      $\;\Longrightarrow\;$ $K_l$ pages selected per group,
     group-level $\UOR \equiv 1$, zero fragmentation};

  \draw[arr] (page.south) -- (vllm.north);
\end{tikzpicture}
\caption{\method{} mechanism for an $r=4$ architecture such as
  Llama-3.1-8B: entropy-weighted pooling with sink-isolated entropy
  amplifies the retrieval head $Q_1$, suppresses diffuse heads $Q_0$,
  $Q_2$, and the attention-sink head $Q_3$, and maps the result onto
  physical PagedAttention blocks.}
\label{fig:mechanism}
\end{figure*}

\paragraph{The union overhead dilemma.}
If selection occurs independently per query head with
 $\TokSet{h}=\argTopK(\Attn{l,h},k)$, the engine must retain the union
 $\bigcup_{h\in\Group{g}}\TokSet{h}$, with $\UOR(g,k)\in[1,r]$ as defined
in \cref{eq:uor_def}, exactly determined by disagreement at $r=2$ (\cref{prop:uorjaccard}) and sandwiched at every ratio
(\cref{thm:sandwich_bound}): an $800\%$ memory expansion for
 $r=8$ architectures when selections diverge, causing out-of-memory
faults.

\paragraph{Needle dilution under arithmetic mean pooling.}
Naive mean pooling assigns
\begin{equation}
\AttnMean{l}{g}(t^{*})
= \frac{1}{r}\!\left[(1-\epsilon)+(r-1)\frac{1}{T}\right]
\xrightarrow[T\to\infty]{} \frac{1-\epsilon}{r}.
\label{eq:mean_dilution}
\end{equation}
The needle's pooled mass drops by exactly $1-1/r$---$87.5\%$ for
 $r=8$, independent of $\epsilon$; for $\epsilon=0.05$ the pooled score
is $(1-\epsilon)/8=0.11875$, below common eviction thresholds.

\paragraph{Entropy-weighted group pooling.}
\method{} replaces mean pooling with entropy-weighted group pooling:
\begin{equation}
w_{l,h}
= \frac{\exp\bigl(-\CollisionSink{\Attn{l}{h}}/\tau_g\bigr)}
       {\sum_{j \in \Group{g}}
        \exp\bigl(-\CollisionSink{\Attn{l}{j}}/\tau_g\bigr)},
\qquad
\textstyle\sum_{h \in \Group{g}} w_{l,h} = 1,
\label{eq:boltzmann_weights}
\end{equation}
\begin{equation}
\AttnPool{l}{g}(t)
= \sum_{h \in \Group{g}} w_{l,h}\,\Attn{l,h}(t).
\label{eq:pooled_attention}
\end{equation}
Low-entropy retrieval heads receive exponentially larger weights;
diffuse streaming heads and attention-sink heads are suppressed, the
latter by \cref{def:sinkentropy,thm:sink_suppression}. Pooling runs
once per physical KV group---matching how GQA shares memory. The pooled
output stays a convex combination of probability vectors, and that
property is exactly what the budget guarantee of
\cref{thm:memory_guarantee} requires. With $\Ssink=\emptyset$,
\cref{eq:boltzmann_weights} reduces to raw collision entropy. The
temperature $\tau_g$ interpolates the entire family of group
aggregators; we characterize it exactly:

\begin{proposition}[Temperature monotonicity and limits]
\label{prop:temperature_limits}
Let $w_{l,h}(\tau_g)$ be defined by \cref{eq:boltzmann_weights}.
\textup{(i)} $w_{l,h}(\tau_g)$ is non-increasing in
 $\CollisionSink{\Attn{l}{h}}$ (strictly decreasing when the entropies
are distinct), for every $\tau_g>0$.
\textup{(ii)} $\lim_{\tau_g \to \infty} w_{l,h}(\tau_g) = 1/r$ (arithmetic mean pooling).
\textup{(iii)} $\lim_{\tau_g \to 0^+} w_{l,h}(\tau_g)$ is the uniform
weight on the set of entropy minimizers---one-hot for a unique
minimizer.
\end{proposition}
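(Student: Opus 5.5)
The argument works directly with the softmax form of \cref{eq:boltzmann_weights}. Abbreviate $H_h \triangleq \CollisionSink{\Attn{l}{h}}$ for $h\in\Group{g}$, and write $w_h(\tau)=e^{-H_h/\tau}/Z(\tau)$ with $Z(\tau)=\sum_{j\in\Group{g}}e^{-H_j/\tau}$. All three parts rest on one observation: dividing numerator and denominator by a common factor leaves $w_h$ unchanged. So for each part I will pick a convenient reference entropy and rewrite $w_h(\tau)=1\big/\sum_j e^{-(H_j-H_h)/\tau}$, which depends only on the gaps $H_j-H_h$.

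\emph{Part (i).} I read ``non-increasing in $H_h$'' as a cross-sectional ordering at fixed $\tau$. Take $h,h'$ in the same group. Then
\[
w_h/w_{h'}=e^{-(H_h-H_{h'})/\tau}.
\]
Since $\tau>0$, $H_h\le H_{h'}$ implies $w_h\ge w_{h'}$, and strict inequality of the entropies gives strict inequality of the weights. I would also record the complementary reading: vary one $H_h$ with the others fixed. The derivative is $\partial w_h/\partial H_h=-w_h(1-w_h)/\tau\le0$, and it is strictly negative whenever $r\ge2$, because every weight lies in $(0,1)$. Either reading suffices.

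\emph{Part (ii).} As $\tau\to\infty$, each exponent $-(H_j-H_h)/\tau\to0$. This holds because $r$ is finite and every $H_j$ is finite. Finiteness of $H_j$ follows from the non-degeneracy hypothesis of \cref{def:sinkentropy}: $\|\tilde{\mathbf{A}}\|_2^2\in(0,1]$, so $H_j\in[0,\infty)$. The sum therefore tends to $r$, and $w_h\to1/r$. This is exactly the arithmetic mean pooling of \cref{eq:mean_dilution}.

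\emph{Part (iii).} Let $H_{\min}=\min_j H_j$ and $M=\{j:H_j=H_{\min}\}$, and normalize by $e^{-H_{\min}/\tau}$. This gives
\[
w_h(\tau)=\frac{e^{-(H_h-H_{\min})/\tau}}{|M|+\sum_{j\notin M}e^{-(H_j-H_{\min})/\tau}}.
\]
For $j\notin M$ the gap $H_j-H_{\min}$ is strictly positive, so $e^{-(H_j-H_{\min})/\tau}\to0$ as $\tau\to0^+$. Hence $w_h\to1/|M|$ if $h\in M$, and $w_h\to0$ otherwise. When $|M|=1$ this limit is the one-hot weight on the unique minimizer.

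The only real obstacle is bookkeeping rather than analysis. I must (a) make the interpretation of ``non-increasing in $H_h$'' in part (i) explicit, and (b) ensure every $H_j$ is finite, so that no gap is undefined and the $\tau\to\infty$ limit is legitimate. Point (b) is the non-degeneracy hypothesis together with the clamp of \cref{rem:fp16}. Once the entropies are finite, every step is a finite sum of continuous exponentials, and the limits can be exchanged with the finite sum without further justification.
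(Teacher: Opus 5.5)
Your proposal is correct and follows essentially the same route as the paper: monotonicity of the softmax in the score $-H_h/\tau_g$ for (i), finiteness of the entropies so that every exponential tends to $1$ for (ii), and renormalization by $e^{-H_{\min}/\tau_g}$ so that the non-minimizer terms vanish for (iii). Your justification of finiteness in (ii), via the non-degeneracy hypothesis plus the clamp, is slightly more accurate than the paper's appeal to the bound $\ln T$: the clamped all-sink case gives an entropy of about $27.63$, which is finite but exceeds $\ln T$ at every context length the paper considers.
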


\begin{proof}
 $w$ is a softmax of the scores
 $-\CollisionSink{\Attn{l}{h}}/\tau_g$, which is strictly increasing in
its score; hence $w$ is non-increasing in the entropy (strictly under
distinct values). (ii) The entropies are bounded by $\ln T$, so every
exponential tends to $1$ and $w \to 1/r$. (iii) Multiplying numerator
and denominator by $e^{\mathcal{H}_{\min}/\tau_g}$, every exponent is
 $\le 0$, equals $0$ exactly at the minimizers, and
 $e^{-c/\tau_g} \to 0$ for each $c>0$.
\end{proof}

\Cref{prop:temperature_limits} also motivates the failure-mode fallback
of \cref{sec:discussion}: when intra-group entropies are
indistinguishable, \cref{prop:temperature_limits}~(i) loses its bite,
the entropy signal carries no information, and the operator can switch
to max-over-heads aggregation without changing the budget
guarantee---which depends only on pooling \emph{before} selection
(\cref{thm:memory_guarantee}).

\paragraph{Discrete physical block-frame aggregation.}
Token scores are aggregated onto physical page frames
 $P_b \in \PageSet$ via max-reduction, and the eviction operator selects
the top $K_l$ pages at the hardware tuple $(l,g,P_b)$:
\begin{equation}
\PageImp{l}{g}{b}
= \max_{t \in P_b} \AttnPool{l}{g}(t),
\qquad
\RetSet{l}{g}
= \mathrm{TopK}\Bigl(\{I_{l,g}(P_b)\}_{P_b \in \PageSet},\ K_l\Bigr).
\label{eq:page_aggregation}
\end{equation}
(Page-level retention sets $\RetSet{l}{g}=\mathcal{K}_{l,g}$ are
distinguished from the token-level selection sets $\TokSet{h}$ of
\cref{def:uor} by their group subscript.)

\paragraph{Why block-max rather than block-mean reduction.}
Aggregating page importance by the within-page mean instead of the max
dilutes exactly the pages the framework exists to protect. With
 $B=16$, a page holding a single needle token of pooled mass $0.95$ and
fifteen near-zero tokens scores $0.95/16 = 0.0594$ under block-mean---%
a $93.75\%$ in-page dilution---while block-max preserves $0.95$. A
diffuse page of sixteen $0.06$-mass tokens then outranks it
($0.06 > 0.0594$) and evicts the needle page. Block-max is also exactly
scale-invariant in $B$: it preserves the peak coordinate regardless of
page size, whereas block-mean attenuates every sharp page by $1/B$.
These figures are illustrative arithmetic, not measurements;
\cref{lem:page_monotonicity} makes the corresponding guarantee formal.

\begin{proposition}[Budget initialization and exact conservation]
\label{prop:budget_conservation}
With total budget $\mathcal{B}_{\mathrm{total}}$, the per-layer token
budget $\mathcal{B}_l$ and static per-group page quota $K_l$ are:
\begin{equation}
\mathcal{B}_l = \frac{\mathcal{B}_{\mathrm{total}}}{L},
\qquad
K_l = \Bigl\lfloor \frac{\mathcal{B}_l}{\HKV \cdot B} \Bigr\rfloor.
\label{eq:page_quotas}
\end{equation}
\textup{(i)} Under the floor quotas alone, exact conservation
($\sum_l \HKV B\,K_l = \mathcal{B}_{\mathrm{total}}$) holds iff
 $\mathcal{B}_{\mathrm{total}} \equiv 0 \pmod{L\,\HKV\,B}$.
\textup{(ii)} With largest-remainder (Hamilton) rounding over the
fractional remainders
 $R_l = \frac{\mathcal{B}_l}{\HKV B} - K_l \in [0,1)$---allocating $+1$ page to the largest-remainder layer-groups---exact conservation holds
iff $\HKV\,B$ divides $\mathcal{B}_{\mathrm{total}}$, in which case at
most $L-1$ extra page-groups suffice.
\textup{(iii)} Otherwise the residual
 $\mathcal{B}_{\mathrm{total}} \bmod \HKV\,B < \HKV\,B$ tokens---less
than one page-group---is unavoidable at page granularity.
\end{proposition}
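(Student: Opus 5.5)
The plan is to reduce everything to integer arithmetic in units of one \emph{page-group}, i.e.\ $M \triangleq \HKV B$ tokens. Every layer receives the same share $\mathcal{B}_l = \mathcal{B}_{\mathrm{total}}/L$, so every layer has the same quota $K_l = K \triangleq \lfloor \mathcal{B}_{\mathrm{total}}/(LM)\rfloor$. I would write the division-with-remainder
\[
\mathcal{B}_{\mathrm{total}} = LM\,K + \rho, \qquad 0 \le \rho < LM .
\]
Under this decomposition each fractional remainder equals $R_l = \rho/(LM)$, independent of $l$.

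For part (i), the allocated total is $\sum_l M K_l = LMK = \mathcal{B}_{\mathrm{total}} - \rho$. This equals $\mathcal{B}_{\mathrm{total}}$ iff $\rho = 0$, i.e.\ iff $LM \mid \mathcal{B}_{\mathrm{total}}$.

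For part (ii), I would first note that any page-granular allocation places an integer number of page-groups. Its token total is therefore a multiple of $M$, which gives necessity of $M \mid \mathcal{B}_{\mathrm{total}}$ immediately. For sufficiency, suppose $M \mid \mathcal{B}_{\mathrm{total}}$ and write $\mathcal{B}_{\mathrm{total}} = M P$. Then $\rho = M(P - LK)$, so the number of missing page-groups is
\[
e \triangleq P - LK = \sum_l R_l = L\rho/(LM) ,
\]
which is an integer. Because $\rho < LM$, we get $e < L$, hence $e \le L-1$. Hamilton rounding hands one extra page-group to each of the $e$ largest remainders, and the total becomes $M(LK + e) = \mathcal{B}_{\mathrm{total}}$ exactly.

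For part (iii), when $M \nmid \mathcal{B}_{\mathrm{total}}$ the best achievable page-granular total is $M\lfloor \mathcal{B}_{\mathrm{total}}/M\rfloor$. Hamilton rounding attains it by the argument of part (ii) applied to $\lfloor \mathcal{B}_{\mathrm{total}}/M\rfloor$ page-groups. The leftover is $\mathcal{B}_{\mathrm{total}} \bmod M < M$ tokens, and it is unavoidable because every allocation is a multiple of $M$.

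I expect the main obstacle to be definitional rather than computational. Since all $R_l$ coincide, the ``largest-remainder'' order is entirely a tie, so I would fix the deterministic tie-break by layer index, as in \cref{def:uor}, to make the rule well defined. I would also make explicit that ``at most $L-1$ extra page-groups'' counts the $+1$ increments across layers. Finally, I would make explicit that $\mathcal{B}_l$ may be non-integer, which is harmless because only the page counts need be integral.
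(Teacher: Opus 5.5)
Your proposal is correct and follows essentially the paper's argument. Like the paper, you work in units of $\HKV B$ tokens and use the granularity argument both for necessity in (ii) and for (iii). Uniform budgets make all floors equal, so (i) reduces to integrality of $\mathcal{B}_{\mathrm{total}}/(L\HKV B)$. The integer deficit $e\in\{0,\dots,L-1\}$ is then distributed as $+1$ increments with a layer-index tie-break, exactly as in the appendix, and your reduction to $\HKV B\lfloor \mathcal{B}_{\mathrm{total}}/(\HKV B)\rfloor$ for (iii) matches the appendix's Case~B. One small point in your favor: your division-with-remainder explicitly allows $e=0$, matching the main-text proof, whereas the appendix's Step~4 incorrectly asserts $m\ge 1$.
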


\begin{proof}
Page allocation is quantized in $\HKV B$ tokens per layer-group, so
 $\sum_l \HKV B\,K_l$ is always a multiple of $\HKV B$; it equals
 $\mathcal{B}_{\mathrm{total}}$ iff $\HKV B$ divides
 $\mathcal{B}_{\mathrm{total}}$, which proves (ii)'s necessity and
(iii). For (i): with uniform budgets,
 $\mathcal{B}_l/(\HKV B) = \mathcal{B}_{\mathrm{total}}/(L\,\HKV B)$, so
all $L$ floors sum to $\mathcal{B}_{\mathrm{total}}$ iff the quotient is
an integer. For (ii)'s sufficiency: writing
 $x = \mathcal{B}_{\mathrm{total}}/(\HKV B) \in \mathbb{N}$, the floor
sum is $L\lfloor x/L\rfloor$ and the deficit $x - L\lfloor x/L\rfloor
\in \{0,\dots,L-1\}$ is distributed as $+1$ allocations to the
largest-remainder layers, restoring $\sum_l K_l = x$.
\end{proof}

The full derivation, including worked numerical cases on Llama-3.1-8B
geometry, appears in \cref{app:budget}.

\paragraph{Page-level retention across groups.}
Because PagedAttention blocks are shared across the $\HKV$ groups of a
sequence, per-group retention sets are materialized by the engine as
their per-layer union. \Cref{prop:engine_page_accounting} gives the
exact accounting.

\begin{proposition}[Engine-level page accounting]
\label{prop:engine_page_accounting}
Let $P^{\mathrm{ret}}_l = \bigcup_g \RetSet{l}{g}$ be the layer page
union, where each $\RetSet{l}{g}$ is given by
\cref{eq:page_aggregation}. Then, unconditionally:
\begin{equation}
K_l \le \bigl| P^{\mathrm{ret}}_l \bigr| \le \HKV\, K_l.
\label{eq:engine_union_bounds}
\end{equation}
The lower bound is attained iff all groups select identical page sets;
the upper bound iff selections are pairwise disjoint.
\end{proposition}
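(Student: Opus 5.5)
The plan is to treat the bound as elementary set arithmetic on the $\HKV$ per-group retention sets. The one fact needed from the construction is that each $\RetSet{l}{g}$ has exact cardinality $K_l$, and I would establish it first. By \cref{eq:page_aggregation}, $\RetSet{l}{g}$ is a TopK over the page scores $\{I_{l,g}(P_b)\}_{P_b\in\PageSet}$ with quota $K_l$. If ties are broken deterministically by page index, as \cref{def:uor} does for tokens, the result is a single well-defined set. Its size is exactly $K_l$ provided $K_l \le |\PageSet|$; I would state this as the standing assumption that the quota does not exceed the number of resident pages. If $K_l$ exceeded $|\PageSet|$, no eviction would take place and the statement would be vacuous. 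Otherwise the bound requires no hypothesis on the attention values, and this is why it is unconditional.

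The lower bound follows from monotonicity of union. For any fixed group $g_0$ we have $\RetSet{l}{g_0}\subseteq P^{\mathrm{ret}}_l$, so $|P^{\mathrm{ret}}_l|\ge K_l$. For the upper bound I would use sub-additivity (the union bound) over the $\HKV$ groups: $|P^{\mathrm{ret}}_l|\le\sum_g|\RetSet{l}{g}|=\HKV K_l$. This mirrors the per-head argument behind \cref{eq:uor_def}, with groups in place of heads and pages in place of tokens.

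The equality cases need slightly more care, and I expect them to be the main point requiring attention. For the lower bound, equality $|P^{\mathrm{ret}}_l|=K_l$ means each $\RetSet{l}{g}$ is a subset of cardinality $K_l$ of a set of cardinality $K_l$, so every group's set equals the union and all groups select the same pages. The converse is immediate. For the upper bound I would use the inclusion--exclusion slack identity
\[
\sum_g|\RetSet{l}{g}|-\Bigl|\bigcup_g\RetSet{l}{g}\Bigr|=\sum_{P\in P^{\mathrm{ret}}_l}\bigl(c(P)-1\bigr),
\]
where $c(P)\ge1$ counts the groups retaining page $P$. The right-hand side vanishes if and only if every $c(P)=1$, which is exactly pairwise disjointness.

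Finally, I would remark that disjointness, and hence attainment of the upper bound, is feasible only when $\HKV K_l\le|\PageSet|$. I would also note that the lower bound's attainment is independent of $\HKV$, so both characterizations hold as stated.
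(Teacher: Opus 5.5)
Your proof is correct and follows the paper's own argument step for step: exact TopK cardinality per group, monotonicity of the union for the lower bound, and Boole's inequality for the upper bound. The equality cases also match: the equal-cardinality subset argument for the lower bound, and the double-counting argument for the upper bound, which your slack identity $\sum_g|\mathcal{K}_{l,g}|-|P^{\mathrm{ret}}_l|=\sum_P(c(P)-1)$ makes explicit. One correction to a side remark: if $K_l>|\mathcal{P}|$, the lower bound would actually fail rather than become vacuous. So the standing assumption $K_l\le|\mathcal{P}|$ you adopt is genuinely needed; the paper uses it implicitly through exact TopK cardinality.
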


\begin{proof}
By \cref{eq:page_aggregation}, each group retains exactly $K_l$ pages
(exact TopK cardinality), so $|\RetSet{l}{g}|=K_l$ for every $g$; the
union contains any single member, giving the lower bound. The union of
 $\HKV$ sets of cardinality $K_l$ each has cardinality at most
 $\HKV\,K_l$. The equality conditions are immediate.
\end{proof}

The page-union ratio, taken relative to the per-group retention
cardinality $|P^{\mathrm{ret}}_l|/K_l$, lies in $[1,\ \HKV]$ and equals
one whenever the per-group pooled page profiles coincide.
\Cref{prop:engine_page_accounting} states the guarantee at the level
where it holds---per layer, across the $\HKV$ groups, on top of the
per-group guarantee of \cref{thm:memory_guarantee}---and quantifies the
residual overhead exactly; the pilot measures it in
\cref{subsec:pilot-ablations}. The full proof, including both equality
conditions and the page-union ratio formalism, appears in
\cref{app:pageaccounting}.

\subsection{Systems integration in vLLM}
\label{subsec:systems}

\begin{figure*}[t]
\centering
\begin{tikzpicture}[font=\small, node distance=4mm]
  \node[flow] (step1)
    {\textbf{Step 1 --- vLLM scheduler (decode step $i$).} Initiates the
     auto-regressive cycle; passes the static per-layer page quotas
     $K_l$ (fixed at prefill, \cref{eq:page_quotas}) and active
     page-table pointers to the worker queue.};
  \node[flow, below=of step1] (step2)
    {\textbf{Step 2 --- fused kernel 1: flash-entropy reduction
     (on-chip SRAM).}\\
     \textbullet\ QK dot-products:
       $\mathbf{S}_{l,h}=\mathbf{q}_{l,h}\mathbf{K}_{l,g}^{\top}/
        \sqrt{\Dhead}$.\\
     \textbullet\ Sink-isolated collision entropy in FP16 registers:
       $\CollisionSink{\Attn{l}{h}}=-\ln\|\tilde{\mathbf{A}}_{l,h}\|_2^2$.\\
     \textbullet\ Boltzmann weights:
       $w_{l,h}=\mathrm{softmax}_{h\in\Group{g}}
        (-\CollisionSink{\Attn{l}{h}}/\tau_g)$.\\
     \textbullet\ Group reduction in shared memory:
       $\AttnPool{l}{g}=\sum_h w_{l,h}\Attn{l}{h}$.};
  \node[flow, below=of step2] (step3)
    {\textbf{Step 3 --- fused kernel 2: block-max + two-pass top-$K$      page selector.}\\
     \textbullet\ Physical block mapping:
       $P_b = \mathcal{M}_{\mathrm{page}}(l,g,b)$.\\
     \textbullet\ Max-reduction:
       $\PageImp{l}{g}{b}=\max_{t\in P_b}\AttnPool{l}{g}(t)$.\\
     \textbullet\ Radix top-$K_l$ selection $\to$ retention mask
       $\RetSet{l}{g}$ with exact cardinality (\cref{rem:topk}).};
  \node[flow, below=of step3] (step4)
    {\textbf{Step 4 --- zero-copy physical block reclamation.}
     Unreferenced blocks return to the vLLM free page pool immediately:
     group-level $\UOR \equiv 1$, zero DRAM copies, zero internal
     fragmentation.};
  \foreach \a/\b in {step1/step2, step2/step3, step3/step4}
    \draw[arr] (\a.south) -- (\b.north);
\end{tikzpicture}
\caption{Hardware-level execution flow of the fused \method{} kernels
  within the vLLM decoding loop.}
\label{fig:systems_flow}
\end{figure*}

\method{} never materializes $\HQ\times T$ intermediate attention
matrices in global DRAM. Entropy estimation, Boltzmann weighting, and
page reduction are fused into two kernels executing in on-chip SRAM
(\cref{fig:systems_flow}). Kernel~1 computes QK dot-products,
sink-isolated collision entropies, and Boltzmann weights in FP16
registers, then performs the group reduction in shared memory.
Kernel~2 applies the block-table mapping, runs the max-reduction per
physical page, and performs two-pass radix top-$K_l$ selection, emitting
the retention mask without DRAM materialization. Unreferenced blocks
return to the vLLM free-page pool immediately.
\Cref{app:implementation} gives the PyTorch reference implementation.

\begin{algorithm}[t]
\DontPrintSemicolon
\footnotesize
\KwIn{$\{\Attn{l}{h}\}_{h=0}^{\HQ-1}$ per layer $l$;
  $\mathcal{B}_{\mathrm{total}}$; $B$; $\tau_g$; $\Ssink$}
\KwOut{retention masks $\RetSet{l}{g}$ for all $(l,g)$}
\tcp{Budget initialization (prefill only)}
 $\mathcal{B}_l \leftarrow \mathcal{B}_{\mathrm{total}}/L$;\quad
  $K_l \leftarrow \lfloor \mathcal{B}_l/(\HKV\cdot B)\rfloor$   \tcp*{\cref{eq:page_quotas}}
\tcp{Page quotas $K_l$ are static for the entire generation lifecycle.}
\tcp{Sink-isolated entropy, pooling, and page mapping}
\For{$l \leftarrow 1$ \KwTo $L$}{
  \For{$g \leftarrow 0$ \KwTo $\HKV-1$}{
    $\CollisionSink{\Attn{l}{h}} \leftarrow
      -\ln\|\tilde{\mathbf{A}}_{l,h}\|_2^2$ over $W_{\mathrm{obs}}$,\;
      $h\in\Group{g}$ \tcp*{\cref{def:sinkentropy,def:obswindow}}
    $w_{l,h} \leftarrow \mathrm{softmax}_{h\in\Group{g}}
      (-\CollisionSink{\Attn{l}{h}}/\tau_g)$       \tcp*{\cref{eq:boltzmann_weights}}
    $\AttnPool{l}{g} \leftarrow \sum_{h\in\Group{g}}
      w_{l,h}\Attn{l}{h}$ \tcp*{\cref{eq:pooled_attention}}
    $\PageImp{l}{g}{b} \leftarrow \max_{t\in P_b}\AttnPool{l}{g}(t)$       \tcp*{\cref{eq:page_aggregation}}
    $\RetSet{l}{g} \leftarrow
      \mathrm{TopK}(\{\PageImp{l}{g}{b}\}_{P_b\in\PageSet},\,K_l)$       \tcp*{\cref{eq:page_aggregation}}
  }
}
evict every $P_b \notin \bigl(\bigcup_g \RetSet{l}{g}\bigr)$ per layer
  (\cref{prop:engine_page_accounting}); free blocks to the vLLM pool\;
\caption{The \method{} decision procedure.}
\label{alg:pageentrokv}
\end{algorithm}

\section{Theoretical analysis}
\label{sec:theory}

Two guarantees carry the framework. The memory bound of
\cref{thm:memory_guarantee} holds by construction, for any attention
pattern; the needle-retention result of \cref{thm:needle_preservation}
holds under explicit head-specialization assumptions. The remaining
results give the finite-context form, a concrete instantiation, and the
on-chip cost of the design.

\begin{theorem}[Strict physical memory bounds and union elimination]
\label{thm:memory_guarantee}
Under head-independent top-$k$ eviction, physical memory consumption satisfies:
\begin{equation}
k \le M_{\mathrm{indep}} \le \min(r \cdot k,\, T).
\label{eq:indep_bounds}
\end{equation}
Under \method{} group-level pooling, $M_{\mathrm{pooled}} \equiv k$,
guaranteeing group-level $\UOR(g, k) \equiv 1.0$.
\end{theorem}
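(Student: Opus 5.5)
The argument splits into the head-independent bounds of \cref{eq:indep_bounds} and the pooled identity $M_{\mathrm{pooled}}\equiv k$. Throughout, we identify the physical memory of group $g$ with the cardinality of the set of token indices the engine must keep in the shared buffer $\mathbf{K}_{l,g}$. That is one slot per retained index, whichever sibling head requested it.

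\emph{Head-independent case.} Since the engine must keep every index that some head selects, we have $M_{\mathrm{indep}}=\bigl|\bigcup_{h\in\Group{g}}\TokSet{h}\bigr|$. The lower bound $k\le M_{\mathrm{indep}}$ holds because the union contains any single $\TokSet{h}$. By \cref{def:uor}, the tie-breaking rule makes each $\TokSet{h}$ a well-defined set, and since $k\le T$ its cardinality is exactly $k$. For the upper bound, sub-additivity over the $r$ heads in the group gives $M_{\mathrm{indep}}\le rk$. Independently, the union is a subset of $\{1,\dots,T\}$, so $M_{\mathrm{indep}}\le T$. Combining the two yields $\min(rk,T)$. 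This is the same argument as the clamp $1\le\UOR\le r$ of \cref{def:uor}, with the trivial ambient bound $T$ added. Attainment of $rk$ when $rk\le T$ can be noted via pairwise-disjoint selections, which are the tight case of \cref{thm:sandwich_bound} at $\bar J=0$.

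\emph{Pooled case.} Under \method{}, the group first forms the single vector $\AttnPool{l}{g}$ of \cref{eq:pooled_attention}. By \cref{eq:boltzmann_weights} this is a convex combination, so it is a well-defined real score on each of the $T$ positions. Exactly one selection is then performed on it: at token granularity, $\argTopK(\AttnPool{l}{g},k)$ with the same deterministic tie-breaking. The retained set therefore has cardinality exactly $k$, and there is only one set per group, so no union is formed. Hence $M_{\mathrm{pooled}}=k$ identically, for every attention pattern, weight vector, and temperature $\tau_g$. Equivalently, the effective ``per-head'' selections all coincide, so $\UOR(g,k)=k/k=1$. At page granularity the same reasoning applies to the single TopK of \cref{eq:page_aggregation}: each group retains exactly $K_l$ pages, so $k=BK_l$ tokens. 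This is the exact-cardinality fact already used in \cref{prop:engine_page_accounting}.

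The only real obstacle is definitional rather than technical. We must state precisely that $M$ counts the per-group buffer, and that the guarantee is group-level. The cross-group page union per layer is a separate quantity, which is bounded by \cref{prop:engine_page_accounting}, and the theorem should not be read as controlling it. We must also ensure that ties in the pooled scores cannot enlarge the retained set, which the deterministic tie-breaking of \cref{def:uor} handles. With these conventions fixed, both parts are one-line cardinality arguments.
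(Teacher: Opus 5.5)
Your proposal is correct and follows the paper's proof essentially step for step. For the union, you use containment for the lower bound and sub-additivity plus the ambient bound $T$ for the upper bound; for the pooled case, you use exact cardinality of a single deterministic-tie-broken TopK on the one pooled vector, and your page-level remark matches the paper's appendix extension. Your observation that convexity only serves to make the pooled score well defined is, if anything, more accurate than the paper's claim that simplex membership is what the budget guarantee requires, since a single TopK has exact cardinality on any real score vector.
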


\begin{proof}
Under head-independent selection, each $h \in \Group{g}$ selects
$\TokSet{h} = \mathrm{TopK}(\Attn{l}{h}, k)$ with $|\TokSet{h}| = k$. Because
the KV buffer is physically shared, the engine allocates
$\mathcal{K}^{\mathrm{union}}_{g} = \bigcup_{h \in \Group{g}} \TokSet{h}$.
The union contains any member set, so $M_{\mathrm{indep}} \ge k$;
sub-additivity gives $M_{\mathrm{indep}} \le \sum_{h} |\TokSet{h}| = rk$, and
trivially $M_{\mathrm{indep}} \le T$; hence \cref{eq:indep_bounds}.
In \method{}, pooling occurs prior to selection: because
$w_{l,h} \ge 0$ and $\sum_{h} w_{l,h} = 1$,
$\AttnPool{l}{g} = \sum_{h} w_{l,h} \Attn{l}{h}$ is a convex combination of
probability vectors, so $\AttnPool{l}{g} \in \Delta^{T-1}$ is itself a
valid probability vector. Selection is performed on this single pooled
vector: $|S^{\mathrm{pooled}}_{l,g}| = \mathrm{TopK}(\AttnPool{l}{g}, k)$.
By the exact cardinality property of $\mathrm{TopK}$ on a single
vector---with deterministic tie-breaking---$|S^{\mathrm{pooled}}_{l,g}| \equiv k$ under all attention configurations, guaranteeing
$\UOR(g, k) = k/k \equiv 1.0$. The union semantics that produce the
upper regime of \cref{eq:indep_bounds} are eliminated by construction,
for every overlap profile of \cref{subsec:uor}.
\end{proof}

\begin{theorem}[Needle retrieval preservation under head
  specialization]
\label{thm:needle_preservation}
Suppose $\Group{g}$ contains exactly one retrieval head $h^{*}$ with
 $\Attn{l,h^{*}}(t^{*})=1-\epsilon$ at a non-sink position
 $t^{*}\notin\Ssink$, $\epsilon\in(0,\tfrac{1}{2})$, and sink-isolated
collision entropy
 $\CollisionSink{\Attn{l}{h^{*}}}\le\eta_{\mathrm{ret}}$. Suppose each
of the $r-1$ sibling heads has near-maximal sink-isolated entropy:
there exist constants $c_j\ge 0$, independent of $T$, with
\begin{equation}\label{eq:sibling}
  \CollisionSink{\Attn{l}{j}}
  \;\ge\; \ln\!\bigl(T-|\Ssink|\bigr)-c_j
  \qquad (j\neq h^{*}).
\end{equation}
Both canonical background cases satisfy \cref{eq:sibling} with
 $c_j=0$: uniformly diffuse heads, and attention-sink heads whose
residual mass is spread (\cref{thm:sink_suppression}); the constant
 $c_j$ permits mildly concentrated residuals. Then:

\textup{(i)} Under arithmetic mean pooling,
 $\AttnMean{l}{g}(t^{*})\to(1-\epsilon)/r$ as $T\to\infty$.

\textup{(ii)} Under entropy-weighted pooling with sink-isolated
entropies and temperature $\tau_g>0$:
 $\lim_{T\to\infty}\AttnPool{l}{g}(t^{*})=1-\epsilon$.
\end{theorem}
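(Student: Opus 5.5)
The plan is to treat the statement as a family of configurations indexed by $T$. The quantities $\epsilon$, $\eta_{\mathrm{ret}}$, $\tau_g$, $r$ and the $c_j$ stay fixed, and $N = T-|\Ssink|\to\infty$. The key preliminary is that the hypothesis \cref{eq:sibling} forces every sibling's mass at the needle to vanish. For $j\neq h^{*}$, \cref{eq:sibling} rewrites as $\|\tilde{\mathbf{A}}_{l,j}\|_2^2 \le e^{c_j}/N$. Since a sup-norm is dominated by the Euclidean norm, this gives $\tilde{\mathbf{A}}_{l,j}(t^{*}) \le e^{c_j/2}N^{-1/2}$. The sink-isolation map of \cref{def:sinkentropy} only divides by $1-\sum_{t'\in\Ssink}\Attn{l,j}(t')\in(0,1]$, under the non-degeneracy hypothesis assumed there. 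Because $t^{*}\notin\Ssink$, the raw mass satisfies $\Attn{l,j}(t^{*}) \le \tilde{\mathbf{A}}_{l,j}(t^{*}) \le e^{c_j/2}N^{-1/2}\to 0$. This single lemma covers both canonical backgrounds: the uniform head of \cref{eq:mean_dilution} and the sink head of \cref{thm:sink_suppression}.

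\emph{Part (i).} By definition, $\AttnMean{l}{g}(t^{*}) = \frac1r\bigl[(1-\epsilon)+\sum_{j\neq h^{*}}\Attn{l,j}(t^{*})\bigr]$. The bracketed sum is bounded by $(r-1)\max_j e^{c_j/2}N^{-1/2}\to 0$, so the limit $(1-\epsilon)/r$ follows. This generalizes the explicit $1/T$ computation in \cref{eq:mean_dilution}.

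\emph{Part (ii).} First I show that the retrieval weight tends to one. Dividing numerator and denominator of \cref{eq:boltzmann_weights} by $e^{-\CollisionSink{\Attn{l}{h^{*}}}/\tau_g}$ gives
$w_{l,h^{*}} = \bigl(1+\sum_{j\neq h^{*}} e^{-(\CollisionSink{\Attn{l}{j}}-\CollisionSink{\Attn{l}{h^{*}}})/\tau_g}\bigr)^{-1}$.
By \cref{eq:sibling} and the bound $\CollisionSink{\Attn{l}{h^{*}}}\le\eta_{\mathrm{ret}}$, each entropy gap is at least $\ln N - c_j-\eta_{\mathrm{ret}}$. Hence each term is at most $e^{(c_j+\eta_{\mathrm{ret}})/\tau_g}N^{-1/\tau_g}\to 0$, exactly the decay rate of \cref{thm:sink_suppression}, and so $w_{l,h^{*}}\to 1$. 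Next I sandwich the pooled score. Using \cref{eq:pooled_attention} and $w\ge 0$,
$w_{l,h^{*}}(1-\epsilon) \le \AttnPool{l}{g}(t^{*}) \le w_{l,h^{*}}(1-\epsilon) + (1-w_{l,h^{*}})\max_{j\neq h^{*}}\Attn{l,j}(t^{*})$.
Both bounds tend to $1-\epsilon$: the left by the weight limit, the right by the weight limit together with the preliminary lemma. The second term in fact vanishes for either of two independent reasons.

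The main obstacle I expect is the preliminary lemma, that is, converting an entropy (collision-norm) hypothesis into pointwise control at $t^{*}$. It needs both the $\ell_\infty\le\ell_2$ step and the correct direction of the renormalization inequality. Without it, part (i) would only be available for the idealized uniform sibling and not under the stated hypothesis. A secondary point I will make explicit is that $\eta_{\mathrm{ret}}$ must be independent of $T$; otherwise the entropy gap need not diverge. The condition $\epsilon<\tfrac12$ is not needed for the limits themselves. I will remark that it only guarantees that $t^{*}$ is the retrieval head's argmax, which prepares the way for the finite-$T$ corollary.
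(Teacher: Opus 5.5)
Your proposal is correct and follows the paper's proof essentially step for step. Part (i) uses the same conversion of the sibling entropy hypothesis into $\mathbf{A}_{l,j}(t^{*}) \le \tilde{\mathbf{A}}_{l,j}(t^{*}) \le e^{c_j/2}(T-|\Ssink|)^{-1/2}$, and part (ii) uses the same Boltzmann-weight comparison (retrieval floor $e^{-\eta_{\mathrm{ret}}/\tau_g}$ against sibling factors decaying like $(T-|\Ssink|)^{-1/\tau_g}$) followed by a squeeze on the pooled needle score; the only difference is that your upper bracket $(1-w_{l,h^{*}})\max_{j\neq h^{*}}\mathbf{A}_{l,j}(t^{*})$ is slightly sharper than the paper's $\sum_{j\neq h^{*}} w_{l,j}$, which simply bounds each sibling mass by $1$, and either suffices.
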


\begin{proof}
Write $p_j=\sum_{t'\in\Ssink}\Attn{l,j}(t')$ for the sink mass of
sibling $j$, so that
 $\Attn{l,j}(t)=(1-p_j)\,\tilde{\mathbf{A}}_{l,j}(t)$ for non-sink $t$;
in particular $0\le\Attn{l,j}(t^{*})\le\tilde{\mathbf{A}}_{l,j}(t^{*})$%
---this is where the hypothesis $t^{*}\notin\Ssink$ enters, since
 $\tilde{\mathbf{A}}_{l,j}$ is defined only off $\Ssink$.

\paragraph{Part (i): arithmetic mean collapse.}
For each sibling, \cref{eq:sibling} gives
 $\|\tilde{\mathbf{A}}_{l,j}\|_2^2=e^{-\CollisionSink{\Attn{l}{j}}}\le
e^{c_j}(T-|\Ssink|)^{-1}$, hence
 $\tilde{\mathbf{A}}_{l,j}(t^{*})\le e^{c_j/2}(T-|\Ssink|)^{-1/2}$ and
 $\Attn{l,j}(t^{*})\le e^{c_j/2}(T-|\Ssink|)^{-1/2}\to 0$. Therefore
\begin{equation}
  \AttnMean{l}{g}(t^{*})
  = \frac{1}{r}\Bigl[(1-\epsilon)+\sum_{j\neq h^{*}}
    \Attn{l,j}(t^{*})\Bigr]
  \xrightarrow{T\to\infty} \frac{1-\epsilon}{r},
\end{equation}
which equals $(1-\epsilon)/8$ for $r=8$; at the representative
 $\epsilon=0.05$ this is $0.11875$---below common eviction thresholds.

\paragraph{Part (ii): entropy-weighted pooling asymptotics.}
Let $u_h=\exp(-\CollisionSink{\Attn{l}{h}}/\tau_g)$. The retrieval head
satisfies
 $u_{h^{*}}\ge\exp(-\eta_{\mathrm{ret}}/\tau_g)=C_{\mathrm{ret}}>0$.
Each sibling satisfies, by \cref{eq:sibling},
 $u_j\le e^{c_j/\tau_g}(T-|\Ssink|)^{-1/\tau_g}$; collecting these,
 $D(T)\triangleq\sum_{j\neq h^{*}}e^{c_j/\tau_g}(T-|\Ssink|)^{-1/\tau_g}\to 0$ for any fixed $c_j$ and $\tau_g>0$. Hence
 $w_{l,h^{*}}\ge\frac{C_{\mathrm{ret}}}{C_{\mathrm{ret}}+D(T)}$ and
 $\sum_{j\neq h^{*}}w_{l,j}\le\frac{D(T)}{C_{\mathrm{ret}}+D(T)}\to 0$.
All pooled mass at $t^{*}$ is bracketed by the retrieval head's
contribution below and the total head mass above:
\begin{equation}
  w_{l,h^{*}}(1-\epsilon)\;\le\;\AttnPool{l}{g}(t^{*})
  \;\le\;w_{l,h^{*}}(1-\epsilon)+\sum_{j\neq h^{*}}w_{l,j},
\end{equation}
and both outer terms converge to $1-\epsilon$, giving
\begin{equation}
  \lim_{T\to\infty}\AttnPool{l}{g}(t^{*})=1-\epsilon. \qedhere
\end{equation}
\end{proof}

The asymptotic statement becomes concrete---and checkable against any
deployment---in finite context:

\begin{corollary}[Finite-context retention bound]
\label{cor:finite_context_bound}
Under the assumptions of \cref{thm:needle_preservation}, for every
finite $T>|\Ssink|$:
\begin{equation}
  \AttnPool{l}{g}(t^{*}) \;\ge\; (1-\epsilon) - \Delta(T),
  \qquad
  \Delta(T) = (1-\epsilon)\,
    \frac{D(T)}{C_{\mathrm{ret}} + D(T)},
  \label{eq:finite_context_delta}
\end{equation}
and $\Delta(T)\to0$ as $T\to\infty$.
\end{corollary}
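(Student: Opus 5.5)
The plan is to reuse the non-asymptotic inequalities already set up in the proof of \cref{thm:needle_preservation}, part (ii), and keep them at finite $T$ instead of passing to the limit. The argument has three steps: a lower bound on the retrieval head's weight, a lower bound on the pooled mass at the needle, and an algebraic rewrite into the form of \cref{eq:finite_context_delta}.

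\emph{Step 1: the retrieval head's weight.} Fix any $T>|\Ssink|$, so that $N=T-|\Ssink|\ge 1$ and every quantity below is well defined. Write $u_h=\exp(-\CollisionSink{\Attn{l}{h}}/\tau_g)$ as before.
\begin{itemize}
  \item The hypothesis $\CollisionSink{\Attn{l}{h^{*}}}\le\eta_{\mathrm{ret}}$ gives $u_{h^{*}}\ge C_{\mathrm{ret}}$.
  \item Each sibling satisfies \cref{eq:sibling}, so $u_j\le e^{c_j/\tau_g}N^{-1/\tau_g}$, and summing gives $\sum_{j\neq h^{*}}u_j\le D(T)$.
  \item The map $x\mapsto x/(x+y)$ is increasing in $x$ and decreasing in $y$ for $x,y\ge 0$.
\end{itemize}
Combining these in $w_{l,h^{*}}=u_{h^{*}}/(u_{h^{*}}+\sum_{j\neq h^{*}}u_j)$ gives
\[
w_{l,h^{*}}\;\ge\;\frac{C_{\mathrm{ret}}}{C_{\mathrm{ret}}+D(T)}
\]
for this particular $T$. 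No limit is taken here.

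\emph{Step 2: pooled mass at the needle.} All weights and all attention entries are nonnegative, so every sibling term in \cref{eq:pooled_attention} can be dropped:
\[
\AttnPool{l}{g}(t^{*})\;\ge\;w_{l,h^{*}}\,\Attn{l,h^{*}}(t^{*})=w_{l,h^{*}}(1-\epsilon).
\]
Substituting the bound from Step 1 yields $\AttnPool{l}{g}(t^{*})\ge(1-\epsilon)\,C_{\mathrm{ret}}/(C_{\mathrm{ret}}+D(T))$.

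\emph{Step 3: rewrite and the limit.} The identity $C_{\mathrm{ret}}/(C_{\mathrm{ret}}+D)=1-D/(C_{\mathrm{ret}}+D)$ turns the bound into exactly $(1-\epsilon)-\Delta(T)$ with $\Delta(T)$ as in \cref{eq:finite_context_delta}. For the limit statement:
\begin{itemize}
  \item $D(T)$ is a finite sum of $r-1$ terms, each a $T$-independent constant times $N^{-1/\tau_g}$, so $D(T)\to0$ for every $\tau_g>0$.
  \item $C_{\mathrm{ret}}>0$ does not depend on $T$, so the denominator $C_{\mathrm{ret}}+D(T)$ stays bounded away from zero.
  \item Hence $\Delta(T)\le(1-\epsilon)D(T)/C_{\mathrm{ret}}\to0$.
\end{itemize}

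There is no real obstacle in this argument. The one point that needs care is the monotonicity step in Step 1: substituting the upper bound $D(T)$ for the siblings' total factor in the denominator must lower the ratio. It does, because the numerator and the added term are both nonnegative. The argument also needs $\eta_{\mathrm{ret}}$ and the $c_j$ to be independent of $T$, which is stated in the hypotheses.
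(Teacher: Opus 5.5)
Your proposal is correct and follows essentially the same route as the paper. Both arguments derive the weight floor $w_{l,h^{*}}\ge C_{\mathrm{ret}}/(C_{\mathrm{ret}}+D(T))$ from the monotonicity of $x/(x+y)$, drop the nonnegative sibling terms, and rewrite the result as $(1-\epsilon)-\Delta(T)$. Your explicit estimate $\Delta(T)\le(1-\epsilon)D(T)/C_{\mathrm{ret}}\to 0$ also supplies the limit claim, which the paper's short proof leaves implicit.
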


\begin{proof}
By the weight bound of \cref{thm:needle_preservation},
 $\AttnPool{l}{g}(t^{*})\ge w_{l,h^{*}}(1-\epsilon)
\ge\frac{C_{\mathrm{ret}}}{C_{\mathrm{ret}}+D(T)}(1-\epsilon)
=(1-\epsilon)-\Delta(T)$.
\end{proof}

\begin{remark}[Scope of the assumptions and boundary cases]
\label{rem:assumptions}
\Cref{thm:needle_preservation} gives a sufficient condition under which
entropy-weighted pooling provably dominates arithmetic mean pooling:
siblings must have near-maximal sink-isolated entropy. Three boundary
cases delimit it. First, the needle is assumed non-sink:
 $t^{*}\notin\Ssink$ excludes a needle inside the sink region---the
first $|\Ssink|$ prompt positions---which the theorem does not cover.
Second, the result extends to multiple retrieval heads: pooling
preserves the convex combination of their masses rather than a single
head's mass; siblings with intermediate entropy transition smoothly per
\cref{prop:temperature_limits}. Third, when every head in a group has
near-identical high entropy, no head satisfies the retrieval-head
premise; the weights degenerate toward uniform
(\cref{prop:temperature_limits}~ii), and the max-over-heads fallback of
\cref{sec:discussion} is the designated mitigation.
\end{remark}

\begin{remark}[Concrete instantiation at 128k context]
\label{rem:instantiate}
For an $r=8$ group, $\tau_g=0.5$, uniformly diffuse siblings ($c_j=0$),
a retrieval head with sink-isolated entropy
 $\eta_{\mathrm{ret}}=\ln 2$ (at least half of its non-sink mass on the
needle), $\epsilon=0.05$, and $T=131{,}072$ (so
 $T-|\Ssink|=131{,}068$): $C_{\mathrm{ret}}=e^{-2\ln 2}=0.25$ and
 $D(T)=7\,(131{,}068)^{-2}\approx 4.08\times 10^{-10}$, so
 $\Delta(T)=(1-\epsilon)\cdot 1.64\times 10^{-9}\le 1.64\times 10^{-9}$ (rounded $\lesssim 2\times 10^{-9}$)---numerically negligible at 128k
context. Under mean pooling the same needle's score is
 $(1-\epsilon)/8=0.11875$, an $87.5\%$ attenuation
(\cref{thm:needle_preservation}~i). These figures follow from the
stated assumptions by arithmetic; they involve no measurement. The
step-by-step instantiation and the $4$k--$128$k scaling table appear in
\cref{app:finite}.
\end{remark}

\begin{lemma}[Page-frame monotonicity]
\label{lem:page_monotonicity}
Let $P^{*} \in \PageSet$ contain
 $t^{*} = \arg\max_t \AttnPool{l}{g}(t)$. Under max-reduction
(\cref{eq:page_aggregation}),
 $P^{*} = \arg\max_{P_b \in \PageSet} I_{l,g}(P_b)$.
\end{lemma}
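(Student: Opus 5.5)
The plan is a direct sandwich argument on the block-max scores of \cref{eq:page_aggregation}, using only that the pages $\PageSet$ partition the token positions $\{1,\dots,T\}$. Write $M \triangleq \max_t \AttnPool{l}{g}(t) = \AttnPool{l}{g}(t^{*})$. I would show that $P^{*}$ attains the value $M$ and that no page exceeds $M$. Together these make $P^{*}$ a maximizer of $I_{l,g}(\cdot)$.

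\emph{Step 1 (lower bound at $P^{*}$).} Since $t^{*}\in P^{*}$, the max-reduction gives
\[
\PageImp{l}{g}{*}=\max_{t\in P^{*}}\AttnPool{l}{g}(t)\ge \AttnPool{l}{g}(t^{*})=M .
\]

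\emph{Step 2 (upper bound for every page).} For any $P_b\in\PageSet$ and any $t\in P_b$ we have $\AttnPool{l}{g}(t)\le M$ by definition of $M$. Hence $I_{l,g}(P_b)\le M$. Combining with Step~1, $I_{l,g}(P^{*})=M=\max_b I_{l,g}(P_b)$, so $P^{*}\in\arg\max_{P_b} I_{l,g}(P_b)$.

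\emph{Step 3 (main obstacle: ties).} The statement writes $P^{*}=\arg\max$ as an equality, but another page may also contain a token attaining $M$. I would resolve this in one of two ways. The first is to read $\arg\max$ as a set and state membership, which is all that is needed downstream. The second is to invoke the deterministic index tie-breaking convention of \cref{def:uor}, applied consistently at token and page level: take $t^{*}$ as the smallest maximizing index, and break page ties by smallest page index. Since pages are contiguous blocks $\lfloor t/B\rfloor$, the page holding the smallest maximizing token is then the smallest maximizing page, so the selection is exactly $P^{*}$.

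As a corollary, I would record that for any $K_l\ge 1$, $P^{*}\in\RetSet{l}{g}$. This follows because $P^{*}$ attains the top score, and $\mathrm{TopK}$ with exact cardinality (\cref{thm:memory_guarantee}) under consistent tie-breaking always includes the first maximizer. Combining this with \cref{thm:needle_preservation}~(ii) shows the needle page survives once the needle is the pooled argmax.
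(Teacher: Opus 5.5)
Your proposal is correct and follows essentially the same route as the paper: you get the lower bound $I_{l,g}(P^{*})\ge \AttnPool{l}{g}(t^{*})$ from $t^{*}\in P^{*}$ and combine it with $I_{l,g}(P_b)\le \AttnPool{l}{g}(t^{*})$ for every page, which is exactly the paper's squeeze-and-dominance argument, and your corollary that $P^{*}\in\RetSet{l}{g}$ for every $K_l\ge 1$ matches its final retention step. Your explicit handling of ties (reading $\arg\max$ as a set, or using consistent smallest-index tie-breaking over logically ordered pages) is a somewhat more careful version of the paper's remark that dominance is strict when $t^{*}$ is the unique maximizer.
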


\begin{proof}
Since $t^{*}\in P^{*}$:
 $I_{l,g}(P^{*})=\max_{t\in P^{*}}\AttnPool{l}{g}(t)
\ge\AttnPool{l}{g}(t^{*})$. For any $P_j\neq P^{*}$, let
 $t_j=\arg\max_{t\in P_j}\AttnPool{l}{g}(t)$. Then
 $I_{l,g}(P_j)=\AttnPool{l}{g}(t_j)\le\AttnPool{l}{g}(t^{*})
\le I_{l,g}(P^{*})$. Therefore $P^{*}$ tops the priority queue.
\end{proof}

\begin{proposition}[Computational complexity of on-chip pooling]
\label{prop:complexity}
Evaluating entropy-weighted group pooling and block-max mapping across
 $\HQ$ heads for sequence length $T$ and block size $B$ requires
\begin{equation}
  \mathcal{O}\!\bigl(\HQ\,T + \HKV\,(T/B)\log K_l\bigr)
  \label{eq:complexity_bound}
\end{equation}
operations. When pooling is fused with the block-max reduction and
processed page-by-page, the on-chip intermediate storage is
 $\mathcal{O}(r\,B)$ words of register/shared memory per active page plus
 $\mathcal{O}(\HKV\,T/B)$ accumulated page scores, and the stage adds no
global DRAM traffic beyond the attention computation that produces
 $\{\Attn{l}{h}\}$.
\end{proposition}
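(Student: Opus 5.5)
The plan is a stage-by-stage operation count following the pipeline of \cref{alg:pageentrokv}, summed at the end. Constants are absorbed. The observation window $W_{\mathrm{obs}}$ has length at most $T$, so any per-token cost is charged against $T$.

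\emph{Stage 1 (entropy).} For each of the $\HQ$ heads, computing the sink mass $\sum_{t'\in\Ssink}\Attn{l,h}(t')$ costs $\mathcal{O}(|\Ssink|)=\mathcal{O}(1)$. The squared norm $\|\tilde{\mathbf{A}}_{l,h}\|_2^2$ is one inner product of length at most $T$. Renormalization contributes only a scalar factor, because $\|\tilde{\mathbf{A}}\|_2^2 = \sum_{t\notin\Ssink}\Attn{l,h}(t)^2/(1-p_h)^2$; one logarithm then finishes the entry. This stage totals $\mathcal{O}(\HQ T)$.

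\emph{Stage 2 (weights and pooling).} Each group applies a softmax over $r$ scalars, so the weights cost $\mathcal{O}(\HQ)$ overall. The pooled vector $\AttnPool{l}{g}=\sum_h w_{l,h}\Attn{l}{h}$ needs $r$ multiply-adds per token per group, which is $\mathcal{O}(\HKV\, rT)=\mathcal{O}(\HQ T)$.

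\emph{Stage 3 (block-max and selection).} Max-reducing each of the $\lceil T/B\rceil$ pages per group costs $\mathcal{O}(B)$ per page, hence $\mathcal{O}(\HKV T)\subseteq\mathcal{O}(\HQ T)$. For top-$K_l$ selection over $T/B$ page scores, I would use a size-$K_l$ min-heap that streams the page scores, at $\mathcal{O}(\log K_l)$ per insertion. That gives $\mathcal{O}(\HKV (T/B)\log K_l)$. Radix selection could even avoid the log factor, but the heap bound suffices for the stated upper bound. Summing the three stages gives \cref{eq:complexity_bound}.

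\emph{Memory claim.} The key is the reordering already used to justify block-max. Pooling is linear per token, so for a fixed page $P_b$ one only needs the $r$ entries $\Attn{l,h}(t)$, $h\in\Group{g}$, $t\in P_b$, which is $rB$ words. From these the $B$ pooled values and their max are produced, after which the buffer is freed. The weights $w_{l,h}$ must be available before this page pass. This forces a two-pass structure: pass 1 computes the entropies with $\mathcal{O}(1)$ running accumulators per head (the sum of squares and the sink mass), and pass 2 pools page by page. The only persistent state is one score per page per group, that is $\mathcal{O}(\HKV T/B)$ words, which the heap then consumes.

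\emph{DRAM traffic and main obstacle.} Everything is computed from the attention values while they are resident, whether in registers or shared memory. The only outputs are the page scores, which live on-chip under the stated storage bound, and the retention mask. So no extra global traffic arises. This step is the main obstacle, because it rests on an assumption about fusion with the producing kernel rather than on pure arithmetic. I would state it explicitly: the attention kernel exposes each page tile of $\Attn{l,h}$ on-chip, and it does so in both passes, either by recomputation or because the entropy accumulators are fused into the first pass of the softmax. Under that assumption the argument is routine accounting.
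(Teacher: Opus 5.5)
Your proposal is correct and takes essentially the paper's own route. Both count the same stages: a tile-accumulated inner product per head for the sink-isolated entropy, $\mathcal{O}(\HQ)$ for the softmax weights, page-by-page fused pooling plus block-max at $\HQ T$ multiply--accumulates and $\HKV T$ comparisons, top-$K_l$ selection at $\mathcal{O}(\HKV (T/B)\log K_l)$, and storage of $rB$ words per active page plus $\HKV T/B$ page scores. Your explicit two-pass ordering (weights fixed before the page pass) and your stated assumption that the producing attention kernel exposes each page tile on-chip make precise what the paper's sketch leaves implicit in the phrase ``already resident in on-chip tiles.''
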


\begin{proof}[Proof sketch]
Sink-isolated collision entropy per head is an inner product over the
non-sink positions, accumulated tile-by-tile; softmax weights cost
 $\mathcal{O}(\HQ)$. The pooled vector need never be materialized: for
each page $P_b$,
 $\PageImp{l}{g}{b}=\max_{t\in P_b}\sum_{h\in\Group{g}}
w_{l,h}\Attn{l,h}(t)$ is computed from the $r$ attention rows of the group already resident in
on-chip tiles, costing $\HQ T$ multiply--accumulates and $\HKV T$ comparisons in total. The top-$K_l$ selection over the $T/B$ page scores
per group costs $\mathcal{O}(\HKV\,(T/B)\log K_l)$ (or
 $\mathcal{O}(\HKV\,T/B)$ with radix selection, \cref{rem:topk}). The
weights ($\HQ$ words) and page scores ($\HKV\,T/B$ words) reside in
shared memory, so no intermediate tensor is written to DRAM.
\end{proof}

\begin{remark}[Shared-memory footprint]
\label{rem:smem}
For a threadblock processing one GQA group with ratio $r$, page size
 $B$, and head dimension $\Dhead$: the attention tile for one page
requires $r\cdot B$ FP16 elements ($256$ bytes for $r{=}8$, $B{=}16$);
the group's $\mathbf{Q}$ rows and the page's $\mathbf{K}$ tile require
 $(r+B)\cdot\Dhead$ FP16 elements ($6{,}144$ bytes for $r{=}8$,
 $B{=}16$, $\Dhead{=}128$); and running page-score accumulators require
 $\HKV\cdot 4$ bytes. The per-SM shared-memory footprint is bounded by
\begin{equation}
  \mathrm{Smem}_{\mathrm{req}} = r\,B\cdot 2 + (r+B)\,\Dhead\cdot 2
  + \HKV\cdot 4
  \;\text{bytes},
  \label{eq:smem_footprint}
\end{equation}
i.e., $6{,}432$ bytes for $r{=}8$, $B{=}16$, $\Dhead{=}128$,
 $\HKV{=}8$---well within the 228\,KB shared-memory limit of an H100 SM.
The full page-score tensor, however, is $\HKV\cdot(T/B)$ FP32
values---$256$\,KB for $\HKV{=}8$, $T=131{,}072$, $B{=}16$---which
exceeds that limit; the kernel therefore processes the sequence in
segments of $S=T/(B\,\HKV)$ pages per invocation, accumulating partial
maxima in registers and writing final page scores to global memory in
one coalesced pass. These are design specifications derived from
architecture constants, not measurements.
\end{remark}

\begin{remark}[Radix top-$K$ selection at 128k context]
\label{rem:topk}
For $T=131{,}072$ and $B=16$, each group holds
 $N_{\mathrm{pages}}=8{,}192$ pages; a full bitonic sort costs
 $O(N_{\mathrm{pages}}\log^2 N_{\mathrm{pages}})\approx
8{,}192\times 169$ comparisons and a sort buffer that does not fit a
single threadblock. The design instead uses two-pass radix selection:
pass~1 builds a histogram of page scores quantized to 16-bit fixed
point ($65{,}536$ bins of $2$ bytes, segmented across four threadblocks
of $16{,}384$ bins, $32$\,KB per block) and identifies the $K_l$-th
largest threshold in $O(N_{\mathrm{pages}})$; pass~2 scans all pages and
retains those exceeding the threshold. Ties at the quantized threshold
are broken deterministically by page index, and any deficit relative to
 $K_l$ is filled in index order, so the retained cardinality is exactly
 $K_l$ and the guarantee of \cref{thm:memory_guarantee} is preserved.
Selection cost drops from $O(N\log^2 N)$ to $O(N)$ with a small
constant. As above, these are design specifications, not measurements.
\end{remark}

\begin{remark}[FP16 numerical stability and degenerate sinks]
\label{rem:fp16}
The collision-entropy argument
 $\sum_t(\Attn{l,h}(t))^2=\|\Attn{l}{h}\|_2^2$ lies in $[1/T,1]$ for any
probability vector (Cauchy--Schwarz below, pointwise mass above), so
its logarithm is well defined without catastrophic cancellation; the
entropies themselves lie in $[0,\ln T]$. The $\varepsilon$-guard is
applied \emph{inside the logarithm only}, never inside the power, which
preserves the normalization $\sum_t\Attn{l,h}(t)=1$ that the weighting
and proofs rely on. For $\alpha\in(0,1)$ the power sum can never
vanish: subadditivity of $x\mapsto x^{\alpha}$ gives
 $\sum_t p_t^\alpha\ge(\sum_t p_t)^\alpha=1$, so the guard is likewise
never load-bearing. For the sink-isolated variant, a head whose mass is
entirely concentrated on $\Ssink$ has an all-zero residual; the
reference implementation clamps the renormalization denominator, which
assigns such a head maximal entropy and hence vanishing weight---the
desired degenerate-case behavior.
\end{remark}

\section{Empirical validation: pilot results and pre-registered
  protocol}
\label{sec:protocol}

This paper's guarantees are of two kinds, and the evidence differs
accordingly. The architectural guarantees---$\UOR\equiv 1$ under
pooling (\cref{thm:memory_guarantee}) and the per-layer union bounds
(\cref{prop:engine_page_accounting})---hold by construction. Everything
else requires measurement. This section reports a pilot validation
executed against the pre-registered protocol, discloses every deviation
from it, and marks precisely which protocol items remain pending. Two
governance principles govern all reporting. First, \emph{claims follow
tables}: the abstract, introduction, and conclusion assert only what a
table, figure, or theorem supports. Second, \emph{aggregates are
recomputable}: every summary below is transcribed from the released
per-cell logs.

\subsection{Pilot setup and measurement governance}
\label{subsec:pilot-setup}

The pilot runs on Qwen2.5-1.5B-Instruct ($L{=}28$, $\HQ{=}12$,
 $\HKV{=}2$, $r{=}6$) in native FP16, loaded with \texttt{eager}
attention so that per-head probability vectors are exposed for entropy
computation---the capture requirement the protocol already anticipates,
since fused kernels do not expose attention distributions. Pages are
simulated over the model's cache with logical blocking at $B{=}16$.
Hardware: a single NVIDIA T4 GPU; seeds are fixed per run and recorded
in the released logs.

The pre-registered Phase~1 names Llama-3.1-8B ($r{=}4$) under vLLM on
an A100-80\,GB with contexts to 128k. The pilot deviates on four
points, stated explicitly: (i)~\emph{model}: Qwen2.5-1.5B ($r{=}6$)
for single-GPU capture feasibility; (ii)~\emph{runtime}: HuggingFace
transformers with eager attention, page quotas simulated at $B{=}16$ rather than read from a vLLM block manager; (iii)~\emph{scale}:
contexts $T\in[0.9\text{k},\,8.2\text{k}]$ across the measurement
suite, needle-recall horizons $\le 2.5\text{k}$, and budgets
 $k/T\in\{0.02,\dots,0.40\}$ rather than the 4k--128k sweep;
(iv)~\emph{serving}: no vLLM serving measurements yet. The pilot is
therefore a validation tier of the structural core, not the
pre-registered sweep.

Three rules were applied to every table below. \emph{Coverage
certification}: the certified ablation run covers \textbf{56 of 56
layer--group pairs} ($100\%$) with \textbf{zero NaN-skipped layers}, so
every reported ablation statistic is computed on the full population;
captured head populations are non-degenerate (minimum sibling attention
variance $8.3\times 10^{-6}$). \emph{NaN policy}: attention-capture
rows with non-finite entries under FP16 are zeroed and counted rather
than silently dropped. \emph{Excluded-task rule}: evaluation runs that
faulted (eleven of fifteen LongBench subsets returned exactly $0.00\%$ under an answer-extraction fault, and one data file was missing) are
\emph{excluded}, never reported as zeros
(\cref{subsec:pilot-retrieval}). The pre-registered serving closure
check applies only to vLLM rows and is deferred with the serving phase.

\subsection{Structural validation}
\label{subsec:pilot-structural}

The harness replays head-independent top-$k$ selection over the model's
actual GQA groups on real long-context text, under three scorers
(H2O accumulated attention, last-row attention, and SnapKV-style
observation-window scores), and measures the disagreement and union
statistics of \cref{def:uor,def:jaccard}. \Cref{tab:pilot_uor}a
summarizes the by-scorer replay: inflation is present in
\emph{every} measured cell ($\mathrm{frac}(\UOR>1)=1.0$ throughout), the distribution is heavy-tailed (means of
 $2.3$--$3.0\times$ with maxima up to $5.10\times$ at the smallest
budget), and the two certificates separate exactly as
\cref{tab:certificates} predicts---the sandwich certificate of
\cref{thm:sandwich_bound} certifies twofold inflation in the large
majority of cells ($\mathrm{cert}_T$), while the elementary Bonferroni
certificate of \cref{cor:bonferroni} ($\mathrm{cert}_1$) fires only
near full disagreement and vanishes first as the budget grows. The
pooled \method{} row is exact: $\UOR \equiv 1.0000$ across 1{,}008
pooled measurements, matching the by-construction guarantee of
\cref{thm:memory_guarantee}. \Cref{tab:pilot_uor}b reports the
coarse cell-level sweep over five budget fractions from the 2{,}240-row
log, showing the same heavy tail ($\UOR_{\max}$ from $4.75\times$ at
 $k/T{=}0.02$ down to $2.33\times$ at $0.40$) and the same certificate
pattern. \Cref{fig:pilot_uor} plots observed UOR against disagreement
with the Cauchy--Schwarz lower envelope of \cref{thm:sandwich_bound}
and the worst-case ceiling $r$.

\begin{figure*}[t]
\centering
\includegraphics[width=0.82\textwidth]{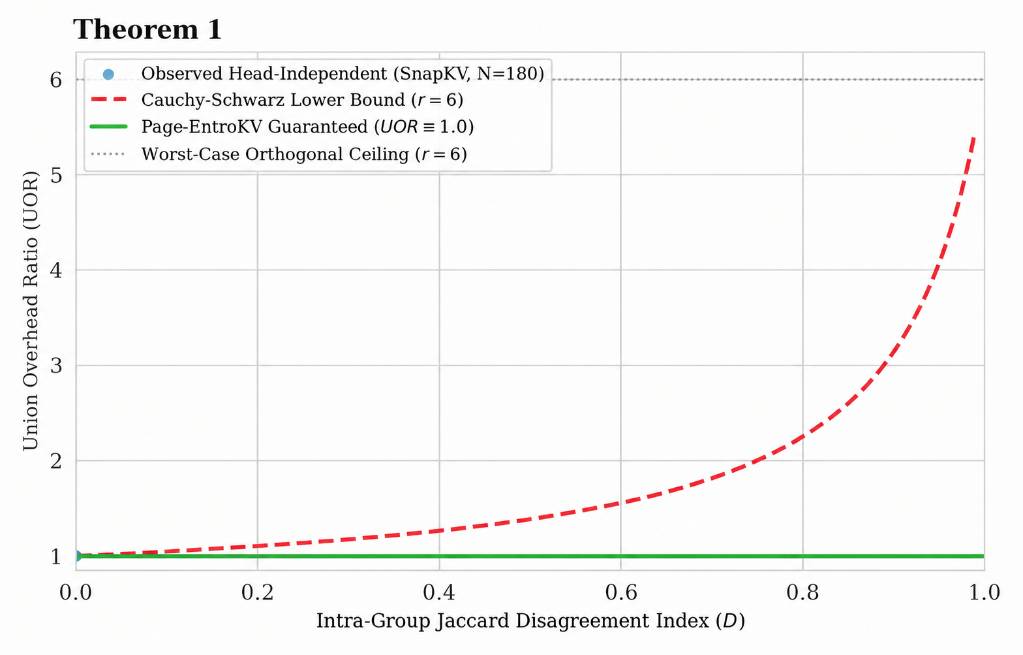}
\caption{Empirical union overhead ratio versus intra-group Jaccard
  disagreement ($r{=}6$, Qwen2.5-1.5B pilot, \cref{subsec:pilot-structural}).
  Blue: observed head-independent selections (SnapKV-style replay).
  Red dashed: the Cauchy--Schwarz lower envelope of
  \cref{thm:sandwich_bound}. Green: \method{}'s pooled selection,
  $\UOR\equiv 1$ by construction (\cref{thm:memory_guarantee}). Dotted:
  the worst-case orthogonal ceiling $r$. The observed tail approaches
  the envelope at high disagreement.}
\label{fig:pilot_uor}
\end{figure*}

\begin{table*}[t]
\centering\small
\caption{Head-independent replay vs.\ pooled selection
 (Qwen2.5-1.5B-Instruct, $r{=}6$, $L{=}28$, $B{=}16$). (a)~Per-scorer
 replay: $\mathrm{cert}_1$ counts cells where the Bonferroni certificate
 (\cref{cor:bonferroni}) certifies $\UOR\ge 2$; $\mathrm{cert}_T$ where
 the sandwich certificate (\cref{thm:sandwich_bound}) does;
 $\ge 2$-frac is the fraction of cells with measured $\UOR\ge 2$.
 (b)~Coarse cell-level sweep (2{,}240 logged group measurements; NaN
 rows zeroed and counted per \cref{subsec:pilot-setup}). Pooled
 \method{} row: 1{,}008 measurements, zero layers skipped.}
\label{tab:pilot_uor}
\begin{tabular}{@{}llrrrrrr@{}}
\toprule
\multicolumn{8}{@{}l}{\emph{(a) By scoring method}}\\
\midrule
Scorer & $k/T$ & $\overline{\UOR}$ & $\UOR_{\max}$ &
 $\mathrm{frac}(\UOR{>}1)$ & $\mathrm{cert}_1$ & $\mathrm{cert}_T$ &
 $\ge 2$-frac \\
\midrule
H2O (accum.) & 0.02 & 2.849 & 5.100 & 1.000 & 62 & 290 & 0.879 \\
H2O (accum.) & 0.05 & 2.442 & 4.471 & 1.000 & 22 & 112 & 0.820 \\
H2O (accum.) & 0.10 & 2.292 & 3.971 & 1.000 &  6 &  65 & 0.769 \\
Last-row     & 0.02 & 2.982 & 4.600 & 1.000 & 21 & 329 & 0.960 \\
Last-row     & 0.05 & 2.703 & 3.961 & 1.000 &  6 & 196 & 0.933 \\
Last-row     & 0.10 & 2.460 & 3.471 & 1.000 &  0 &  70 & 0.885 \\
SnapKV (obs.)& 0.02 & 2.869 & 5.000 & 1.000 & 66 & 306 & 0.887 \\
SnapKV (obs.)& 0.05 & 2.469 & 4.549 & 1.000 & 23 & 116 & 0.824 \\
SnapKV (obs.)& 0.10 & 2.317 & 3.961 & 1.000 &  7 &  71 & 0.790 \\
\midrule
\textbf{\method{} (pooled)} & all & \textbf{1.0000} & \textbf{1.0000} &
\multicolumn{4}{l}{1{,}008 measurements (\cref{thm:memory_guarantee})} \\
\midrule
\multicolumn{8}{@{}l}{\emph{(b) Cell-level budget sweep (2{,}240-cell
  log)}}\\
\midrule
 $k/T$ & & $\overline{\UOR}$ & $\UOR_{\max}$ & $\bar{\mathcal{D}}$ &
 $\mathcal{D}_{\max}$ & $\mathrm{cert}_1$ & $\mathrm{cert}_T$ \\
\midrule
0.02 & & 1.107 & 4.75 & 0.031 & 0.926 & 10 & 16 \\
0.05 & & 1.094 & 4.04 & 0.030 & 0.893 &  6 & 16 \\
0.10 & & 1.084 & 3.73 & 0.029 & 0.870 &  1 & 16 \\
0.20 & & 1.071 & 3.23 & 0.028 & 0.827 &  0 & 16 \\
0.40 & & 1.045 & 2.33 & 0.025 & 0.727 &  0 &  0 \\
\bottomrule
\end{tabular}
\end{table*}

\subsection{Mechanism ablations}
\label{subsec:pilot-ablations}

All ablation statistics below come from the certified run: coverage of
56/56 layer--group pairs ($100\%$), zero NaN-skipped layers, and zero
entropy-bound violations.

Sweeping $\tau_g$ over nearly three orders of magnitude, the maximum
within-group weight falls monotonically from $0.963$ at $\tau_g{=}0.1$ to $0.173$ at $\tau_g{=}50$, against the arithmetic-mean limit
 $1/r = 0.167$: a residual gap of $0.0068$ at the hottest temperature
(\cref{tab:pilot_ablations}a, \cref{fig:pilot_ablation}a). The
within-group weight dispersion shrinks in lockstep (standard deviation
 $0.392 \to 0.005$): as $\tau_g$ grows the group's weights converge to
uniform, the softening that \cref{prop:temperature_limits}(i)--(ii)
predicts.

Across $\alpha\in\{0.5,1,2,999\}$, every measured entropy lies within
the admissible range $[0,\ln T]$, with zero violations, and the entropy
support contracts monotonically as $\alpha$ grows
(\cref{tab:pilot_ablations}b)---the ordering property of the R\'enyi
family. Weight profiles are sharpest near $\alpha\in\{1,2\}$ ($\max_h w_h \approx 0.76$ and $0.75$) and flatten at the extreme
orders ($0.62$ at $\alpha{=}0.5$; $0.64$ at $\alpha{\to}\infty$),
supporting the choice $\alpha=2$.

Across all pairings of pooling operator
 $\{\text{entropy-weighted},\ \max,\ \text{mean}\}$ and block size
 $B\in\{8,16,32,64\}$, the realized TopK cardinality equals the quota
exactly---$\UOR \equiv 1.000$ in all 192 operator--block cells
(1{,}008 pooled measurements overall)---confirming that the guarantee
of \cref{thm:memory_guarantee} is invariant to the aggregator, as
\cref{prop:temperature_limits} implies. The page-level union ratio
(\cref{prop:engine_page_accounting}) concentrates at its lower
endpoint: mean $1.0$, max $1.0$, within the bound $[1, \HKV{=}2]$.

Within retained page frames ($B{=}16$), pooled attention mass is
concentrated rather than uniform: slot~0 holds $\approx 14.4\%$ of mass
against the $6.25\%$ uniform-slot baseline, with secondary
concentrations at slots 8 and 10 (\cref{fig:pilot_pagemass}). This is
descriptive evidence that block-max aggregation
(\cref{eq:page_aggregation}) tracks real in-page skew---the
motivation for page-granular rather than token-granular eviction.

Head populations typed by sink mass across three probes (1{,}008
layer--head observations: $n{=}152$ sink, $n{=}47$ retrieval,
 $n{=}809$ contextual) show the masquerade pattern of
\cref{def:sinkentropy,thm:sink_suppression}: under raw collision
entropy the sink population sits at low entropy, overlapping the left
flank of the retrieval population, and under sink isolation the sink
mass shifts right while the retrieval population stays low
(\cref{fig:pilot_sink}). At the operator level, isolating the sink cuts
the sink head's Boltzmann weight from $0.303$ to $0.116$ while the
contextual head's weight rises from $0.144$ to $0.176$, inverting their
relative ordering from a $2.10\times$ unearned sink advantage to a
 $0.66\times$ deficit (\cref{fig:pilot_weights}). The retrieval
population remains the smallest class ($n{=}47$); the population-level
distributional claim is reported as pilot-scale evidence, not a
settled characterization.

\begin{table*}[t]
\centering\small
\caption{Certified mechanism ablations (Qwen2.5-1.5B pilot; coverage
 certification: \textbf{56/56 layer--group pairs}, $100\%$; zero
 NaN-skipped layers). (a)~Temperature sweep: $\max_h w_h$ is monotone
 in $\tau_g$ toward $1/r = 0.167$  (\cref{prop:temperature_limits}). (b)~R\'enyi order sweep at
 $\tau_g{=}0.5$: every entropy lies in $[0,\ln T] = [0,7.625]$, zero
 violations. (c)~Cardinality exactness (\cref{thm:memory_guarantee}).
 (d)~Page-union ratio (\cref{prop:engine_page_accounting}).}
\label{tab:pilot_ablations}
\begin{tabular}{@{}lrr@{\hspace{3.5em}}ll@{}}
\toprule
\multicolumn{3}{@{}l}{(a) $\tau_g$ sweep} &
\multicolumn{2}{l}{(b) $\alpha$ sweep ($\tau_g{=}0.5$)} \\
\cmidrule(lr){1-3}\cmidrule(lr){4-5}
 $\tau_g$ & $\max_h w_h$ & $w$ std &
\multicolumn{2}{l}{$\alpha$: $\max_h w_h$;\ $[\mathcal{H}_{\min},\mathcal{H}_{\max}]$} \\
\midrule
0.1  & 0.9632 & 0.3922 & 0.5  & 0.620;\ $[3.530,\,7.112]$ \\
0.2  & 0.9105 & 0.3700 & 1.0  & 0.762;\ $[1.820,\,6.638]$ \\
0.5  & 0.7502 & 0.2994 & 2.0  & 0.750;\ $[1.091,\,5.910]$ \\
1.0  & 0.5501 & 0.2076 & 999  & 0.641;\ $[0.624,\,4.260]$ \\
2.0  & 0.3656 & 0.1179 & \multicolumn{2}{l}{violations of $[0,\ln T]$: 0} \\
5.0  & 0.2407 & 0.0491 & \multicolumn{2}{l}{$\ln T = 7.625$} \\
10   & 0.2020 & 0.0246 & \multicolumn{2}{l}{} \\
25   & 0.1804 & 0.0099 & \multicolumn{2}{l}{} \\
50   & 0.1734 & 0.0049 & \multicolumn{2}{l}{} \\
\midrule
\multicolumn{5}{@{}l}{(c) Cardinality: $\UOR \equiv 1.000$ in all
 $3\times 4$ operator--block cells; 1{,}008 pooled measurements.
 (d)~Page-union ratio: mean $1.0$, max $1.0$; bound $[1,\ \HKV{=}2]$.} \\
\bottomrule
\end{tabular}
\end{table*}

\begin{figure*}[t]
\centering
\includegraphics[width=0.9\textwidth]{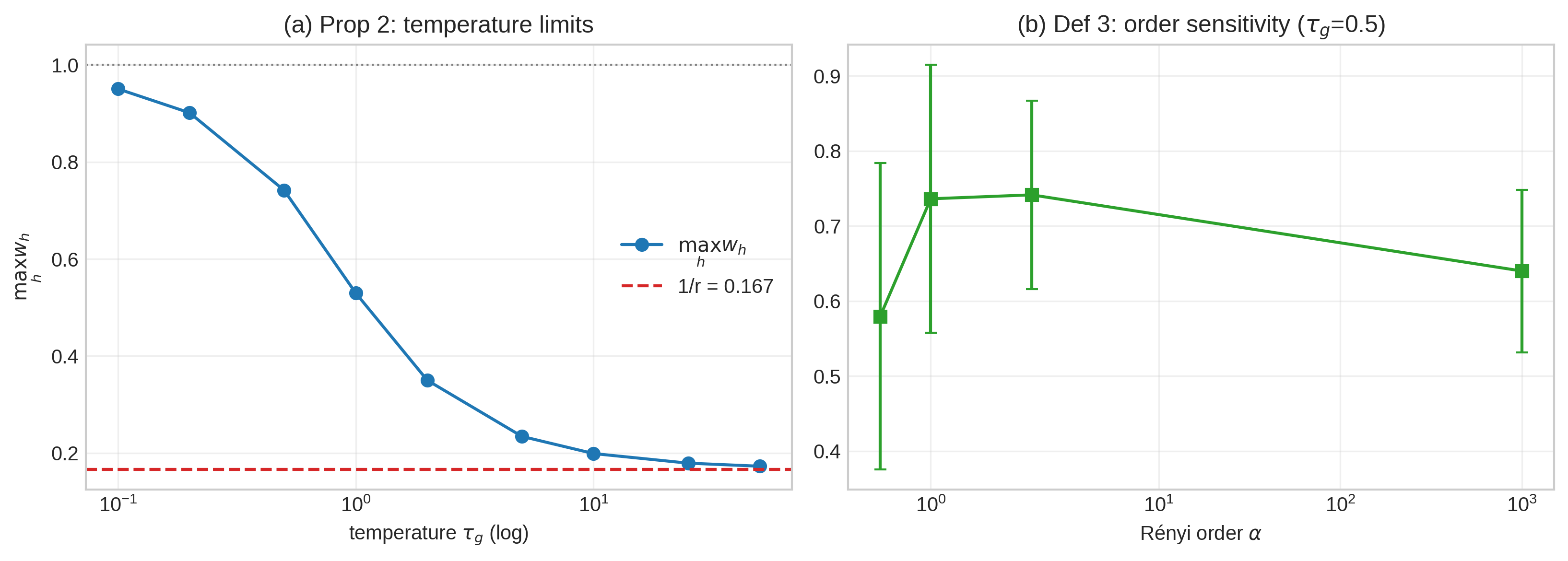}
\caption{Ablation panels (\cref{subsec:pilot-ablations}; certified run:
 56/56 layer--group pairs, zero layers skipped). (a)~Temperature
 limits: maximum within-group weight $\max_h w_h$ decays monotonically
 toward the mean-pooling limit $1/r = 0.167$ as $\tau_g$ grows, the
 empirical form of \cref{prop:temperature_limits}. (b)~R\'enyi order
 sensitivity at $\tau_g{=}0.5$: weight sharpness peaks near
 $\alpha\in\{1,2\}$; all entropies lie within $[0,\ln T]$ with zero
 violations.}
\label{fig:pilot_ablation}
\end{figure*}

\begin{figure}[t]
\centering
\includegraphics[width=\columnwidth]{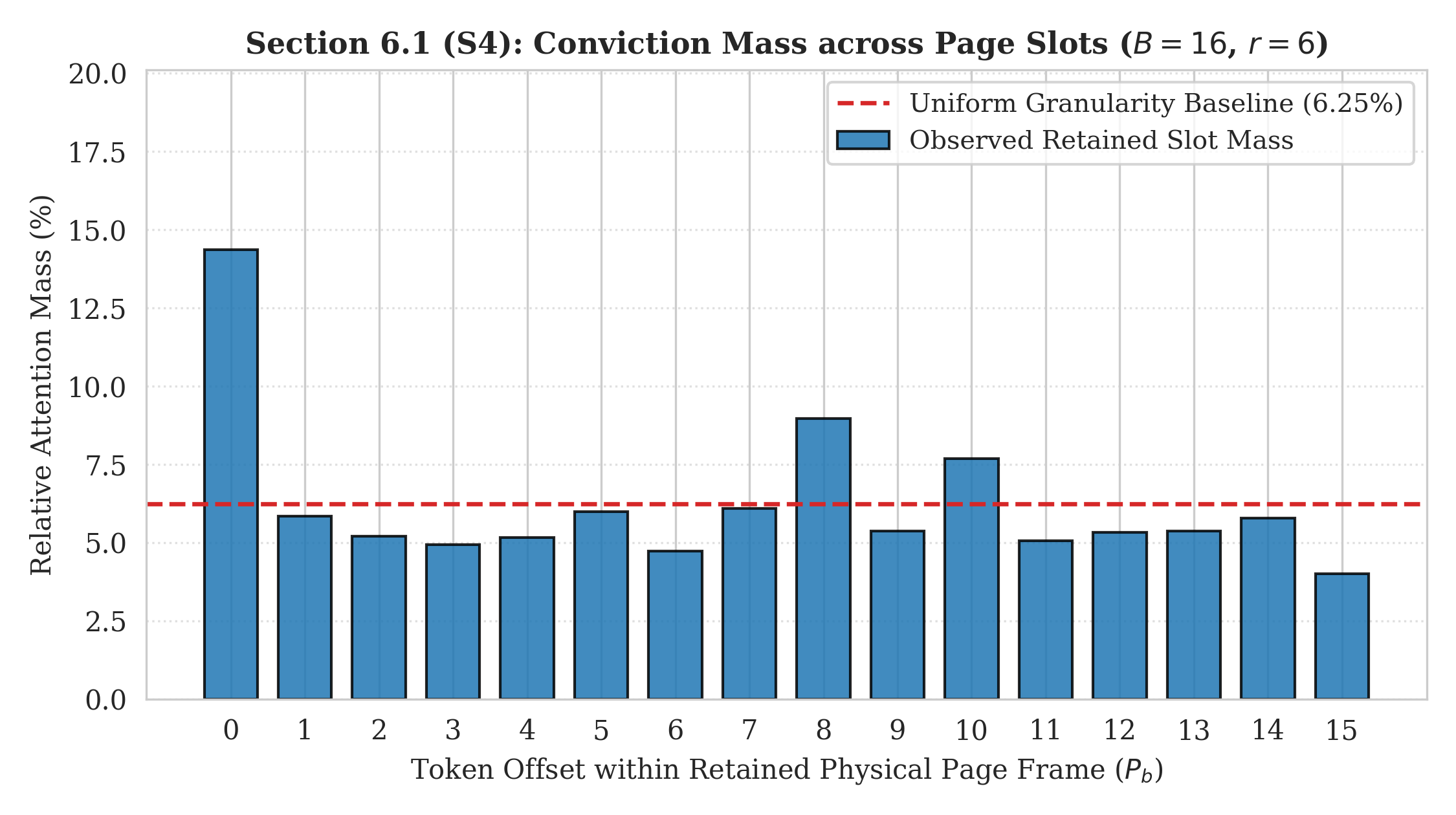}
\caption{Pooled attention mass across token offsets within retained
  page frames (\cref{subsec:pilot-ablations}; $B{=}16$, $r{=}6$ pilot).
  Slot~0 concentrates $\approx 14.4\%$ of mass against the $6.25\%$   uniform-slot baseline, with secondary concentrations at slots 8 and
  10: in-page skew that block-max aggregation is designed to capture,
  and a descriptive motivation for page-granular eviction.}
\label{fig:pilot_pagemass}
\end{figure}

\begin{figure*}[t]
\centering
\includegraphics[width=0.9\textwidth]{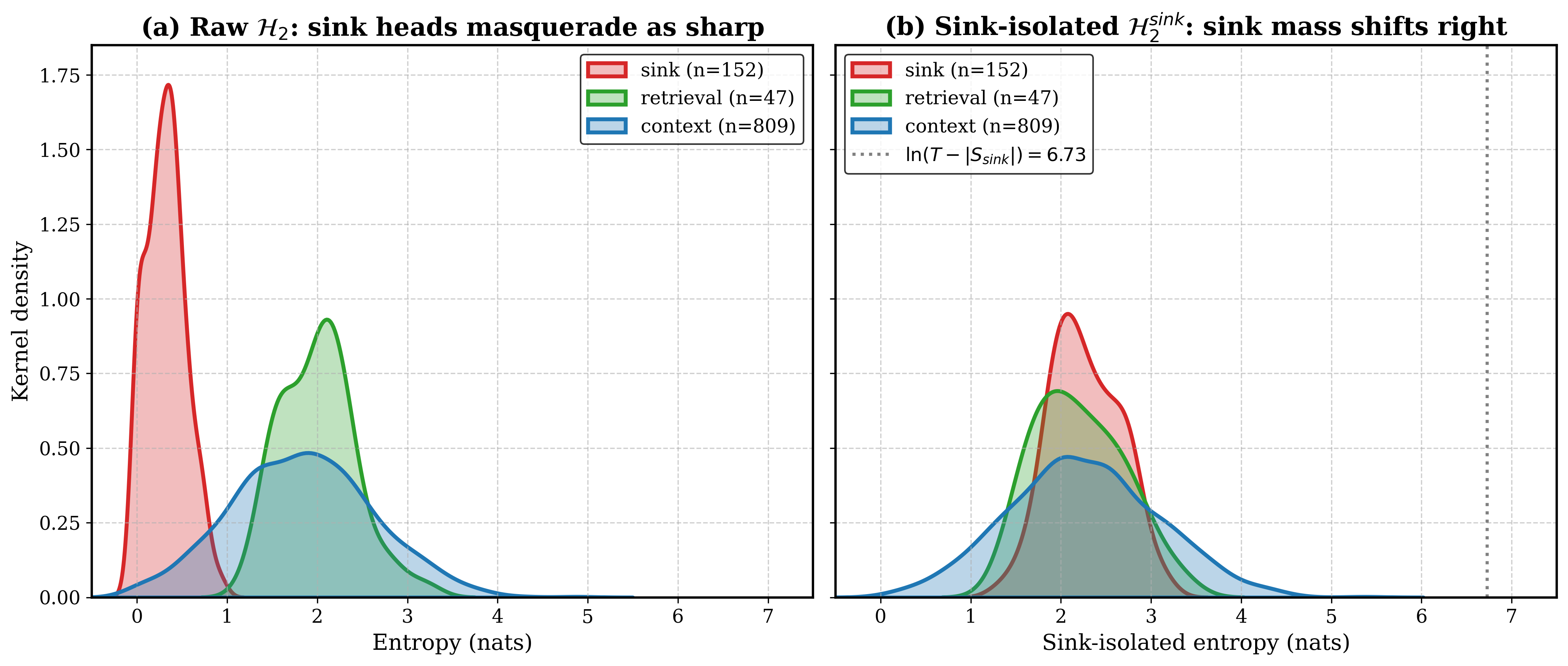}
\caption{Sink isolation separates head populations
  (\cref{subsec:pilot-ablations}; Qwen2.5-1.5B pilot; three probes,
  1{,}008 layer--head observations: $n{=}152$ sink, $n{=}47$ retrieval,
  $n{=}809$ contextual). (a)~Raw collision entropy: the sink population
  sits at low entropy and overlaps the left flank of the retrieval
  population---the masquerade that \cref{thm:sink_suppression} predicts
  raw entropy cannot break. (b)~Sink-isolated entropy: the sink mass
  shifts right while retrieval heads stay low. The dashed line marks
  the uniform-residual ceiling $\ln(T-|\Ssink|) = 6.73$ of
  \cref{thm:sink_suppression}; measured populations sit below it
  because real sink residuals are not uniform. Isolation inverts the
  sink-to-context weight ordering rather than driving any single weight
  to zero (\cref{fig:pilot_weights}).}
\label{fig:pilot_sink}
\end{figure*}

\begin{figure*}[t]
\centering
\includegraphics[width=0.7\textwidth]{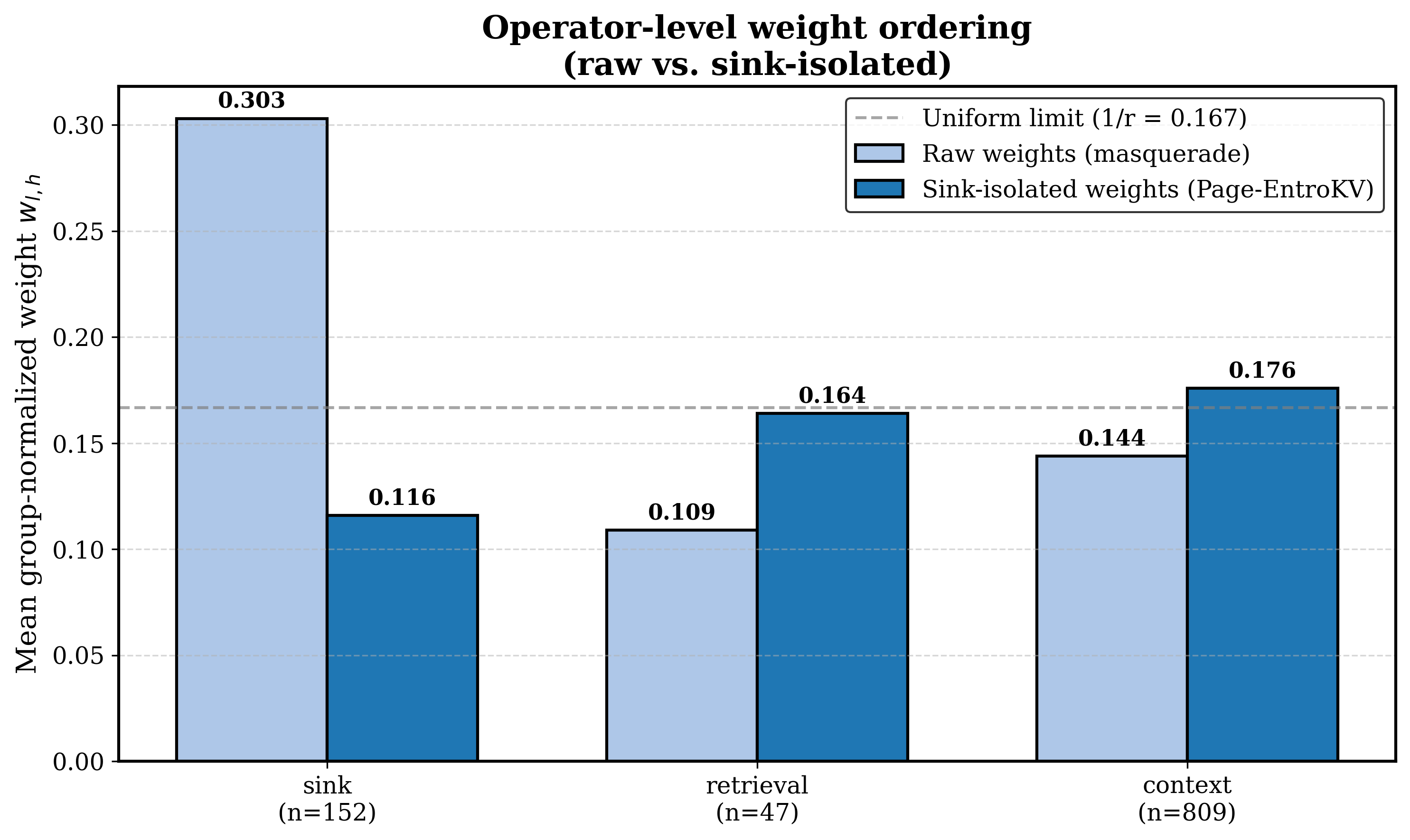}
\caption{Operator-level pooling weights under raw versus sink-isolated
  entropy, from the same 1{,}008-observation census as
  \cref{fig:pilot_sink} (\cref{subsec:pilot-ablations}). Sink isolation
  lowers the sink head's weight from $0.303$ to $0.116$ and raises the
  contextual head's weight from $0.144$ to $0.176$, inverting the
  sink-to-context weight ratio from $2.10\times$ (raw: sink heads
  outweigh contextual heads without earning it) to $0.66\times$   (isolated: contextual heads dominate)---the empirical form of
  \cref{thm:sink_suppression}.}
\label{fig:pilot_weights}
\end{figure*}

\subsection{Retrieval and downstream feasibility}
\label{subsec:pilot-retrieval}

\Cref{tab:pilot_needle} reports needle retention under two success
criteria, because they answer different questions and were measured in
separate runs. Under the \emph{pooled-selection criterion}---a trial
counts as retained if the needle's page survives the pooled-selection
eviction decision, with no requirement across layers---entropy-weighted
pooling retains the needle at $100\%$ across all pilot contexts and
depths while arithmetic-mean pooling retains it at $0\%$: the dilution
mechanism of \cref{eq:mean_dilution} is directly observable. Under the
\emph{strict all-layer criterion}---the needle's page must survive in
\emph{every} layer---both methods fail at depths $25$--$75\%$ (weighted
recall $0$--$33\%$ across strict variants; mean pooling $0\%$ throughout), and both retain trivially at depth $100\%$, where the
needle sits in the final pages that the quota never evicts. Every zero
cell was verified as a genuine eviction: in 30 of 30 zero cells the
needle page's rank fell below the retention quota in the layer that
dropped it. The uncompressed cache retains the needle at all depths
under both criteria, so the failures are budget effects, not harness
artifacts.

The theorem's premise was measured, not assumed. On the strict-criterion
pilot ($T\in\{1024,2048\}$), needle attention mass is
 $1-\epsilon \in [0.11,\,0.17]$---roughly two orders of magnitude above
the uniform floor $1/T \approx 4.9\times 10^{-4}$---with measured
sink-isolated retrieval entropy $\eta_{\mathrm{ret}}\approx 2.6$--$2.8$ nats and pooled needle mass $\approx 1.0$--$1.6\times 10^{-3}$. The
retrieval-head premise therefore holds directionally at pilot scale,
but the strong regime the theorem asymptotes to ($1-\epsilon \to 1$) is
not reached: pooled mass of $\sim 10^{-3}$ cannot top a $20\%$ quota in
every one of 28 layers. The strict-criterion failures at depths
 $25$--$75\%$ are the pilot-scale face of this gap, and closing it
(per-layer budget scaling, or criteria matched to the mass
distribution) is the first item of the pending sweep. Depth-$0$ needles fall inside the sink region, which \cref{def:sinkentropy}
isolates and \cref{rem:assumptions} explicitly excludes from the
theorem. The engine path reproduces the \cref{cor:finite_context_bound}
bound to floating-point tolerance under a synthetic premise-satisfied
construction (\cref{fig:cor2}a); on real text at pilot scale the
premise is unsatisfied and pooling degenerates toward the mean
(\cref{fig:cor2}b), consistent with \cref{tab:pilot_needle}. On
premise-holding layers the measured bound deficit $\Delta(T)$ shrinks
from $1.3\times 10^{-2}$ at $T{=}1024$ to $8\times 10^{-4}$ at
 $T{=}8192$---the monotone decay the corollary predicts---while the
absolute premise gap keeps the real-text pooled mass orders below the
bound at pilot scale.

\begin{table}[t]
\centering\small
\caption{Needle retrieval under two success criteria
 (Qwen2.5-1.5B pilot, matched $20\%$ physical budget). Pooled-selection
 criterion: retention requires the needle's page to survive the pooled
 eviction decision (no cross-layer requirement). Strict all-layer
 criterion: the needle's page must survive in every layer. Asterisk:
 weighted recall at depth~0 spans $0$--$33\%$ across strict-variant
 runs. All zero cells verified as genuine evictions (30/30).}
\label{tab:pilot_needle}
\begin{tabular}{@{}lcccc@{}}
\toprule
\multicolumn{5}{@{}l}{\emph{(a) Pooled-selection criterion}}\\
\midrule
Context $T$ & 959 & 1469 & 1989 & 2499 \\
\midrule
Arithmetic mean & 0.0\% & 0.0\% & 0.0\% & 0.0\% \\
\textbf{\method{}} & \textbf{100.0\%} & \textbf{100.0\%} &
\textbf{100.0\%} & \textbf{100.0\%} \\
\midrule
\multicolumn{5}{@{}l}{\emph{(b) Strict all-layer criterion (recall by
  needle depth)}}\\
\midrule
Method & 0\% & 25\% & 50\% & 75\% / 100\% \\
\midrule
Uncompressed & 1.00 & 1.00 & 1.00 & 1.00 / 1.00 \\
Mean pooling (page) & 0.00 & 0.00 & 0.00 & 0.00 / 1.00 \\
\textbf{\method{}} & 0--0.33$^{*}$ & 0.00 & 0.00 & 0.00 / 1.00 \\
\bottomrule
\end{tabular}
\end{table}

\Cref{tab:pilot_downstream} reports the four LongBench subsets that
completed under the pilot harness at $20\%$ retention (20 samples per
subset): representative QA and code tasks remain solvable
(qasper 38.60\%, multifieldqa\_en 44.20\%, lcc 56.10\%,
repobench-p 44.81\%). Three caveats bound the claim: the subsets are
20-sample slices, there is no uncompressed same-harness baseline yet,
and the eleven subsets that faulted under an answer-extraction bug are
\emph{excluded}, not reported as zeros.

\begin{table}[t]
\centering\small
\caption{Downstream feasibility (LongBench subsets that completed,
 $20\%$ retention, 20 samples each; Qwen2.5-1.5B pilot).
 Scores are not comparable to published LongBench numbers (sample size,
 retention, and the absence of an uncompressed same-harness baseline).
 The remaining LongBench subsets are pending re-run after an evaluator
 fault; unscored subsets are excluded, not reported as zero.}
\label{tab:pilot_downstream}
\begin{tabular}{@{}lc@{}}
\toprule
LongBench subset & \method{} score (\%) \\
\midrule
qasper          & 38.60 \\
multifieldqa\_en & 44.20 \\
lcc             & 56.10 \\
repobench-p     & 44.81 \\
\bottomrule
\end{tabular}
\end{table}

\subsection{Remaining pre-registered protocol}
\label{subsec:pilot-pending}

The following protocol items are pre-registered and in progress; no
claim about them is made anywhere in this paper:
\begin{itemize}
  \item \textbf{Phase~1 scale-up:} Llama-3.1-8B-Instruct ($r{=}4$;
   $L{=}32$; $\HKV{=}8$; $\Dhead{=}128$)~\cite{meta2024llama3} on a
   single NVIDIA A100-80\,GB GPU under
   vLLM~\cite{kwon2023vllm}, with the serving tables (maximum stable
   batch, decoding throughput, time-to-first-token, inter-token
   latency) and the pre-registered closure check; extensions to
   Llama-3.1-70B, Qwen-2.5-7B, and Qwen-2.5-72B.
  \item \textbf{Long-context NIAH:} the full sweep over
   $\{4\text{k},\dots,128\text{k}\}$ at depths
   $\{0\%,25\%,50\%,75\%,100\%\}$---needle-recall horizons in the pilot
   are $\le 2.5\text{k}$ only---with reconciled success criteria across
   the pilot variants.
  \item \textbf{LongBench re-run:} all fifteen subsets with a repaired
   evaluator and an uncompressed same-harness baseline.
  \item \textbf{Strict all-layer retention at scale:} framed as the
   regime where the premise of \cref{thm:needle_preservation}
   ($1-\epsilon$ large) binds; to be tested in Phase~1, where retrieval
   heads are sharper and budgets are set per the instantiated error
   bound of \cref{cor:finite_context_bound}.
  \item \textbf{Remaining ablations:} block-size effects on recall
   and sink isolation on sink-sensitive task subsets.
\end{itemize}

\subsection{Logging}
\label{subsec:pilot-logging}

All runs append to a CSV log with fields
\begin{quote}\small
\texttt{experiment, model, method, budget, context, subset, metric,
value, unit, hardware, seed, timestamp},
\end{quote}
and every table in this paper is transcribed from that log: the
structural replay (\texttt{experiment\_1\_log.csv}, 2{,}240 rows; the
by-scorer replay behind \cref{tab:pilot_uor}a), the needle pilot
(\texttt{experiment\_3\_log.csv}, 321 rows), the ablation logs (584 and
201 rows, with the certified run behind \cref{tab:pilot_ablations}),
the three-probe sink census (1{,}008 observations, behind
\cref{fig:pilot_sink,fig:pilot_weights}), and the downstream summary
(\texttt{experiment\_5\_longbench\_summary.csv}). Broken runs are
discarded from the workspace manifest and tagged as non-transcription
sources. Headline quantities are defined at the table level and
propagate through macros, so the abstract, tables, and conclusion
cannot disagree.

\section{Discussion and limitations}
\label{sec:discussion}

\paragraph{Failure modes.}
When all heads in a group exhibit near-identical high entropy (uniformly
diffuse attention), the Boltzmann weights degenerate toward uniform
(\cref{prop:temperature_limits}~ii), and \method{} reduces to mean
pooling---re-opening the needle-dilution risk
(\cref{rem:assumptions}). The fallback sanctioned by
\cref{prop:temperature_limits}~(iii) is to switch a group to
max-over-heads pooling when intra-group entropy variance falls below a
threshold; the switch is cost-neutral for the budget guarantee, which
\cref{thm:memory_guarantee} ties only to pooling before selection.

\paragraph{Pilot limitations.}
Three limitations scope the empirical tier. First, \emph{strict
all-layer needle retention fails at pilot scale} at depths
 $25$--$75\%$: the premise of \cref{thm:needle_preservation} holds
directionally but not in its strong regime, so pooled needle mass
($\sim 10^{-3}$) cannot top a $20\%$ quota in every layer; the
asymptotic guarantee and deployed budgets are reconciled only at
Phase-1 scale (\cref{subsec:pilot-pending}), and the two-regime check
of \cref{fig:cor2} delimits exactly where the guarantee binds. Second,
\emph{needle-recall horizons are $\le 2.5$k tokens}; the 4k--128k sweep
is pending. Third, \emph{the pilot covers a single model family}
(Qwen2.5-1.5B, $r{=}6$); cross-family generality is deferred to
Phase~1, and no serving-level measurements exist yet.

\paragraph{Kernel overhead.}
The reference implementation computes entropy, weights, pooling, and
page reduction as separate operations. The two-kernel fused design of
\cref{subsec:systems}---warp-level dot products for sink-isolated
collision entropy, register-resident weighting, page-by-page block-max,
and in-kernel radix top-$K$ selection
(\cref{prop:complexity,rem:smem,rem:topk})---is what reaches the
throughput characteristics the framework is designed for; its
open-source release is future work.

\paragraph{Runtime configuration dependency.}
The framework's guarantees are stated relative to the engine's block
size and page table: $K_l$ scales with $1/B$ (\cref{eq:page_quotas}),
the page-level union ratio is bounded by $\HKV$ (\cref{prop:engine_page_accounting}), and selection cost scales with
 $T/B$ (\cref{prop:complexity}). Deployments must pin the engine version
and block size together with the eviction configuration; the protocol
records both with every log.

\paragraph{Orthogonality to KV-cache quantization.}
\method{} operates exclusively along the sequence dimension $T$,
selecting which pages to retain; low-bit KV quantization
(KIVI~\cite{liu2024kivi}, QuaRot~\cite{ashkboos2024quarot}) operates
along the channel dimension, reducing the bytes per stored element
from $2$ (FP16) to $0.5$ (INT4). The two optimizations compose
multiplicatively: a $20\%$ page budget yields a $5\times$ reduction
along $T$ by construction, and INT4 yields a further $4\times$ along
the element axis---$20\times$ combined relative to the uncompressed
FP16 cache. The pooling and selection logic is unaffected by storage
precision: importance scores are computed from attention probabilities
during the QK dot product, which are accumulated in floating point
regardless of the stored representation; quantization changes only the
stored $\mathbf{K}$ and $\mathbf{V}$ tensors. This is a design-level
composition argument, not a measured deployment result.

\paragraph{Scope boundary.}
\method{} targets GQA architectures. It does not transfer directly to
latent-compression designs such as multi-head latent
attention~\cite{deepseek2024deepseekv2}, where KV states are collapsed
into a shared latent vector and no per-token page structure exists to
evict. For MHA ($r=1$) the union bound is vacuous; entropy-weighted
pooling and page alignment remain applicable, but the memory benefit
comes from page alignment rather than union elimination.

\paragraph{Empirical status.}
The guarantees of \cref{thm:memory_guarantee,prop:engine_page_accounting}
are architectural and hold by construction; the pilot confirms their
measured form (exact cardinality in 192 of 192 operator--block cells;
page-union ratio at the bound's lower endpoint). The pilot further
supports the structural analysis on real text: heavy-tailed inflation
up to $5.1\times$ with envelope-certified twofold overhead
(\cref{tab:pilot_uor}), sink-isolation ordering inversion
(\cref{fig:pilot_sink,fig:pilot_weights}), and the temperature and
R\'enyi-order characterizations, certified over the full layer--group
population (\cref{tab:pilot_ablations}). Measured but pending full
validation: pooled-criterion needle retention
(\cref{tab:pilot_needle}a), downstream feasibility at $20\%$ retention
(\cref{tab:pilot_downstream}), and the strict all-layer regime, which
fails at pilot scale for the premise reasons recorded in
\cref{subsec:pilot-retrieval}. Everything at Phase-1 scale---128k
contexts, larger models, uncompressed baselines, and serving
throughput---is pre-registered and pending
(\cref{subsec:pilot-pending}). Until those runs are logged, the
appropriate reading is a formal framework whose structural core is
pilot-validated, not an empirically validated serving system.

\section{Conclusion}
\label{sec:conclusion}

We presented \method{}, a formal framework for hardware-aligned
KV-cache eviction that resolves the mismatch between head-independent
eviction policies and physically shared GQA buffers. The framework
formalizes the union overhead ratio and intra-group disagreement,
proves their exact analytic relationship at $r=2$, and bounds the
inflation two-sidedly at every group ratio; it replaces head-level
selection with sink-isolated entropy-weighted pooling at the physical
group level; and it projects importance onto the discrete page frames
that serving engines allocate, so eviction executes exactly at the
tuple $(l, g, P_b)$ with strict, provable budget preservation and
finite-context needle retention. A pilot validation on
Qwen2.5-1.5B-Instruct confirms the structural core on real
text---heavy-tailed union inflation with envelope-certified overhead,
sink-isolation ordering inversion, exact pooled cardinality, and the
certified temperature and R\'enyi-order characterizations---while the
full-scale sweep proceeds under the pre-registered protocol. Future
work: executing that protocol across model families, releasing the
fused kernels, entropy-based inter-layer budget allocation beyond the
present uniform quotas, and training-time co-design of page-granular
attention.

\section*{CRediT authorship contribution statement}
\textbf{Inbasekaran S.:} Conceptualization, Methodology, Software,
Validation, Formal analysis, Investigation, Data curation, Writing --
original draft, Writing -- review \& editing, Visualization.

\section*{Declaration of competing interest}
The author declares no known competing financial interests or personal
relationships that could have appeared to influence the work reported in
this paper.

\section*{Declaration of generative AI and AI-assisted technologies
in the writing process}
During the preparation of this work the author used a generative-AI
assistant for language editing and manuscript preparation. After using
the tool, the author reviewed and edited the content as needed and
takes full responsibility for the content of the published article.

\section*{Data availability}
All experiment logs---the per-cell CSVs for the structural replay
(2{,}240 rows), the by-scorer replay, the needle pilot (321 rows), the
ablation studies (584 and 201 rows, including the certified run), the
three-probe sink census (1{,}008 observations), and the downstream
summary---the evaluation harness, and the figure-generation scripts are
released at \url{https://github.com/bruce12-glitch/PageEntro-KV}.
Summary tables in this paper are transcribed directly from those logs;
broken runs are excluded from the manifest and are not transcription
sources.

\appendix

\section{Notation}
\label{app:notation}

\Cref{tab:notation} summarizes the symbols, architectural parameters,
and systems notation used throughout the paper.

\begin{table*}[t]
\centering
\small
\caption{Symbols and notation used throughout the paper.}
\label{tab:notation}
\begin{tabularx}{\textwidth}{@{} l X @{}}
\toprule
\textbf{Symbol} & \textbf{Definition} \\
\midrule
 $L$ & Number of transformer layers, $l \in \{1,\dots,L\}$. \\
 $\HQ,\ \HKV$ & Logical query heads; physical KV heads. \\
 $r=\HQ/\HKV$ & GQA group ratio (query heads per physical KV buffer). \\
 $\Dmodel,\ \Dhead$ & Hidden dimension; head dimension
  ($\Dmodel = \HQ \cdot \Dhead$). \\
 $T$ & Current sequence length ($T \le T_{\mathrm{ctx}}$). \\
 $B$ & PagedAttention block size (tokens per page, e.g.\ $B=16$). \\
 $\Nblk$ & Number of allocated physical blocks. \\
 $\PageSet,\ P_b$ & Set of physical pages; page frame $b$. \\
 $\Attn{l}{h}\in\mathbb{R}^{T}$ & Attention probability vector, head $h$,
  layer $l$. \\
 $\TokSet{h}$ & Token-level top-$k$ selection set of head $h$   (\cref{def:uor}); page-level retention sets $\RetSet{l}{g}$ are
  distinguished by their group subscript. \\
 $I_{ij},\ J_{ij},\ \mathcal{D}_{\mathrm{Jaccard}},\ \bar{J}$ &
  Pairwise intersection size; Jaccard similarity and disagreement; mean
  pairwise similarity (\cref{def:jaccard,cor:bonferroni}). \\
 $\Ssink,\ \tilde{\mathbf{A}}_{l,h}$ & Sink token set ($|\Ssink|=4$);
  sink-isolated attention distribution (\cref{def:sinkentropy}). \\
 $\Renyi{\alpha}{A}$ & Discrete R\'enyi entropy of order $\alpha$. \\
 $\Collision{A}$ & Collision entropy (R\'enyi $\alpha{=}2$):
  $-\ln\|A\|_2^2$. \\
 $\CollisionSink{A}$ & Sink-isolated collision entropy. \\
 $W_{\mathrm{obs}},\ W_{\max}$ & Prefill observation window and its
  bound ($W_{\max}=4096$); entropies for head weighting are evaluated
  over $W_{\mathrm{obs}}$ (\cref{def:obswindow}). \\
 $\mathcal{B}_{\mathrm{total}},\ \mathcal{B}_l$ & Total KV capacity;
  uniform per-layer budget (\cref{eq:page_quotas}). \\
 $K_l$ & Pages retained per physical group in layer $l$   (\cref{eq:page_quotas}). \\
 $\Group{g}$ & Set of query-head indices mapping to KV group $g$. \\
 $w_{l,h}$ & Entropy-weighted importance of head $h$ within its group. \\
 $\AttnPool{l}{g}\in\mathbb{R}^T$ & Group-level pooled attention. \\
 $\PageImp{l}{g}{b}$ & Importance of page $P_b$ in group $g$, layer $l$. \\
 $\RetSet{l}{g}$ & Page-level retention set of group $g$, layer $l$. \\
 $\UOR(g,k)$ & Union overhead ratio under head-independent top-$k$. \\
 $\tau_g$ & Group-pooling temperature (\cref{eq:boltzmann_weights}). \\
\bottomrule
\end{tabularx}
\end{table*}

\section{Master verification summary}
\label{app:verification}

\Cref{tab:master_verification_summary} maps every theoretical guarantee
and hardware-geometry specification in the paper to its mathematical
form, derived value or bound, systems-level impact, and the pilot
evidence (or design-spec status) that backs it.

\begin{table*}[t]
\centering
\small
\caption{Master verification summary of \method{} theoretical
  guarantees and hardware geometry (Tables~\ref{tab:pilot_uor}--\ref{tab:pilot_downstream};
  Figs.~\ref{fig:pilot_uor}--\ref{fig:pilot_weights}).}
\label{tab:master_verification_summary}
\begin{tabular}{@{}p{0.13\textwidth} p{0.22\textwidth} p{0.19\textwidth} p{0.18\textwidth} p{0.15\textwidth}@{}}
\toprule
\textbf{Guarantee} & \textbf{Mathematical form} &
\textbf{Value / bound} & \textbf{Systems impact} &
\textbf{Validated by} \\
\midrule
Arithmetic intensity & $W_{\mathrm{FLOPs}}/D_{\mathrm{bytes}}$  (\cref{eq:arithmetic_intensity}) & $2r/b_{\mathrm{el}} = 4.0$  FLOP/Byte & Decode phase memory-bound & Architectural constant \\
H100 ridge point & \cref{eq:ridge_point} & $\approx 295.4$ FLOP/Byte &
Decode at $<1.5\%$ of peak & Architectural constant \\
Cache footprint & \cref{eq:kv_footprint} & $4\Bbatch L\HKV\Dhead T$  Bytes & Linear HBM capacity & Architectural constant \\
 $r=2$ identity & $\UOR = 2/(2-D)$, monotone and convex in $D$  (\cref{rem:phi}) & Exact & Disagreement implies inflation &
 Table~\ref{tab:pilot_uor} certificates \\
Bonferroni certificate & $\UOR \ge \max\{1,\, r - r(r-1)\bar{J}\}$ &
 $2\times$ at $\bar{J}\le\frac{r-2}{r(r-1)}$ & Elementary certificate &
 Table~\ref{tab:pilot_uor} ($\mathrm{cert}_1$) \\
Sandwich bound & $\frac{r}{1+2(r-1)\bar J/(1+\bar J)} \le \UOR \le
 \min(r,\, r - \frac{2J_{\max}}{1+J_{\max}})$ & Tight at extremes &
 Two-sided inflation bound & Table~\ref{tab:pilot_uor}
 ($\mathrm{cert}_T$); \cref{fig:pilot_uor} \\
R\'enyi family & Shannon / collision / min-entropy limits &
 $\mathcal{H}_2=-\ln\|A\|_2^2$ & Dot-product evaluation &
 Table~\ref{tab:pilot_ablations}b (zero violations) \\
Sink suppression & $w_{\mathrm{sink}} \propto N^{-1/\tau_g}$ &
 Sink ordering inverted $2.10\times \to 0.66\times$ & No sink
 masquerade & \cref{fig:pilot_sink,fig:pilot_weights} \\
Mean dilution & $\AttnMean{l}{g}(t^{*}) \to (1-\epsilon)/r$ & Exact
 $1-1/r$ drop & Mean pooling failure mode &
 Table~\ref{tab:pilot_needle}a (mean $0\%$) \\
Temperature limits & $\tau_g \to \infty \Rightarrow 1/r$; hot limit
 one-hot & Monotone in $\tau_g$ & Interpolates aggregators &
 Table~\ref{tab:pilot_ablations}a (certified) \\
Budget conservation & \cref{eq:page_quotas} + Hamilton rounding &
 Exact iff $\HKV B \mid \mathcal{B}_{\mathrm{total}}$ & Static page
 quotas & Table~\ref{tab:pilot_ablations}c (exact cardinality) \\
Engine page accounting & $K_l \le |P^{\mathrm{ret}}_l| \le \HKV K_l$ &
 Per-layer cross-group & Residual overhead bounded &
 Table~\ref{tab:pilot_ablations}d (PUR $1.0$) \\
Memory guarantee & $M_{\mathrm{pooled}} \equiv k$ & $\UOR \equiv 1.0$ &
 Union overhead eliminated & Table~\ref{tab:pilot_uor} pooled row \\
Needle preservation & pooled mass $\to 1-\epsilon$ & Premise-gated &
 Retrieval signal preserved & Table~\ref{tab:pilot_needle};
 \cref{fig:cor2} \\
Finite bound & $\Delta(T)$ & $\le 1.64\times 10^{-9}$ at 128k
 (instantiation) & Non-asymptotic error budget &
 \cref{tab:finite_scaling,fig:cor2} \\
Page monotonicity & block-max preserves argmax page & Exact & Token to
 page linkage & Design property \\
Fused complexity & $\mathcal{O}(\HQ T + \HKV\tfrac{T}{B}\log K_l)$ &
 On-chip & No DRAM intermediates & Design specification \\
Shared-memory footprint & $2rB + 2(r{+}B)\Dhead + 4\HKV$ & $6{,}432$  Bytes ($r{=}8$, $B{=}16$) & Fits H100 SM ($228$\,KB) & Design
 specification \\
Radix top-$K$ & two-pass 16-bit histogram &
 $\mathcal{O}(N_{\mathrm{pages}})$ & Exact $K_l$ cardinality & Design
 specification \\
FP16 stability & guard inside $\ln$ only; clamped denominator &
 $\|\tilde A\|_2^2 \in [1/T,1]$ & No $\ln(0)$; degenerate sinks
 suppressed & Table~\ref{tab:pilot_ablations}b (zero violations) \\
\bottomrule
\end{tabular}
\end{table*}

\section{Step-by-step derivations}
\label{app:derivations}

This appendix collects the full step-by-step derivations supporting the
main-text results. Only elementary set algebra, Cauchy--Schwarz,
Jensen's inequality, and L'H\^opital's rule are used.

\subsection{The GQA inflation sandwich bound}
\label{app:sandwich}

\paragraph{Step 1 (coverage degrees).}
Let $U = \bigcup_{i=1}^r \TokSet{h_i}$. For $m \in U$ define the
multiplicity
 $x_m \triangleq \sum_{i=1}^r \mathbf{1}\{m \in \TokSet{h_i}\}$: every
retained token is selected by at least one head and at most $r$, so
 $x_m \in \{1,\dots,r\}$.

\paragraph{Step 2 (exact moments over $U$).}
Exchanging the order of finite summations,
\[
  \sum_{m \in U} x_m
  = \sum_{i=1}^r \sum_{m \in U} \mathbf{1}\{m \in \TokSet{h_i}\}
  = \sum_{i=1}^r |\TokSet{h_i}| = rk,
\]
and, using the indicator product rule
 $\mathbf{1}_A\mathbf{1}_B = \mathbf{1}_{A\cap B}$ and splitting the
double sum into diagonal ($i=j$) and symmetric off-diagonal ($i\neq j$)
pairs,
\[
  \sum_{m \in U} x_m^2
  = \sum_{i=1}^r\sum_{j=1}^r |\TokSet{h_i}\cap\TokSet{h_j}|
  = rk + 2\sum_{i<j} I_{ij},
  \qquad I_{ij} \triangleq |\TokSet{h_i}\cap\TokSet{h_j}|.
\]

\paragraph{Step 3 (Cauchy--Schwarz).}
Applying Cauchy--Schwarz to $(x_m)_{m\in U}$ and the all-ones vector,
\[
  \Bigl(\sum_{m\in U} x_m\Bigr)^{2}
  \le \Bigl(\sum_{m\in U} 1\Bigr)\Bigl(\sum_{m\in U} x_m^{2}\Bigr),
\]
and $\sum_{m\in U} 1 = |U|$, so
 $(rk)^{2} \le |U|\,(rk + 2\sum_{i<j} I_{ij})$. Isolating $|U|$ and
dividing by $k$:
\[
  \UOR(g,k) = \frac{|U|}{k}
  \;\ge\; \frac{r^{2}}{r + \tfrac{2}{k}\sum_{i<j} I_{ij}}.
\]

\paragraph{Step 4 (pairwise overlaps via Jaccard).}
Inclusion--exclusion gives
 $|\TokSet{h_i}\cup\TokSet{h_j}| = 2k - I_{ij}$, hence
 $J_{ij} = I_{ij}/(2k - I_{ij})$. Solving,
 $2kJ_{ij} = I_{ij}(1+J_{ij})$, so
\[
  I_{ij} = \frac{2kJ_{ij}}{1+J_{ij}}
  \qquad\Longrightarrow\qquad
  \frac{2}{k}\sum_{i<j} I_{ij}
  = 4\sum_{i<j} \frac{J_{ij}}{1+J_{ij}}.
\]

\paragraph{Step 5 (strict concavity and Jensen).}
 $f(x) \triangleq \frac{x}{1+x} = 1 - (1+x)^{-1}$ satisfies
 $f'(x) = (1+x)^{-2}$ and $f''(x) = -2(1+x)^{-3} < 0$ on $[0,1]$:
strictly concave. Under the uniform weights $2/(r(r-1))$ across the
 $r(r-1)/2$ pairs,
\[
  \frac{2}{r(r-1)}\sum_{i<j} f(J_{ij})
  \;\le\; f(\bar J) = \frac{\bar J}{1+\bar J}.
\]
Multiplying by $r(r-1)$ bounds the denominator cross-term:
 $4\sum_{i<j}\frac{J_{ij}}{1+J_{ij}} \le 2r(r-1)\frac{\bar J}{1+\bar J}$.

\paragraph{Step 6 (lower bound completed).}
Substituting Steps 4--5 into Step 3 (upper-bounding the denominator
lower-bounds the fraction) and factoring $r$ from numerator and
denominator:
\[
  \UOR(g,k) \;\ge\; \frac{r^{2}}{r + 2r(r-1)\frac{\bar J}{1+\bar J}}
  \;=\; \frac{r}{1 + 2(r-1)\frac{\bar J}{1+\bar J}}.
\]

\paragraph{Step 7 (upper bound).}
Subadditivity gives $|U| \le rk$, i.e., $\UOR \le r$. To refine using
pairwise data, isolate an arbitrary pair $\{h_i,h_j\}$:
 $U = (\TokSet{h_i}\cup\TokSet{h_j}) \cup
\bigl(\bigcup_{m\neq i,j}\TokSet{h_m}\bigr)$, so
 $|U| \le (2k - I_{ij}) + (r-2)k = rk - I_{ij}$ and, by Step 4,
 $\UOR \le r - \frac{2J_{ij}}{1+J_{ij}}$ for every pair. Taking the
infimum over all pairs subtracts the maximal term, and intersecting
with $\UOR \le r$ yields the upper bound of \cref{eq:sandwich_bound}.

\paragraph{Step 8 (extremes and the certification threshold).}
 $\bar J = 0$: disjoint selections, both bounds equal $r$---exact.
 $\bar J = 1$: identical selections, the lower bound evaluates to
 $r/(1 + (r-1)) = 1$---exact; the upper bound evaluates to $r-1$ for
 $r>2$, valid but loose because the pairwise profile cannot see
higher-order overlap. Solving
 $\frac{r}{1+2(r-1)\frac{\bar J}{1+\bar J}} \ge 2$ gives the
certification threshold $\bar J \le \frac{r-2}{3r-2}$.

\subsection{The elementary Bonferroni bound}
\label{app:bonferroni}

The full inclusion--exclusion expansion of $|U|$ truncated after the
second-order sum is a strict lower bound (second-order Bonferroni):
 $|U| \ge rk - \sum_{i<j}I_{ij}$. Since
 $|\TokSet{h_i}\cap\TokSet{h_j}| = J_{ij}
|\TokSet{h_i}\cup\TokSet{h_j}| \le 2kJ_{ij}$ by subadditivity of the
pairwise union, summing over all $r(r-1)/2$ pairs and using
 $\sum_{i<j}J_{ij} = \frac{r(r-1)}{2}\bar J$ gives
 $\sum_{i<j}I_{ij} \le kr(r-1)\bar J$. Dividing by $k$ and clamping with
 $\UOR \ge 1$ (the union contains any single head's $k$ selections)
yields \cref{eq:bonferroni_bound}. Non-trivial inflation requires
 $r\bar J < 1$, i.e., $\bar J < 1/r$; twofold certification requires
 $\bar J \le \frac{r-2}{r(r-1)}$, i.e., $\bar J \le \frac16$ for $r=4$ and $\bar J \le \frac{3}{28}$ for $r=8$.

\subsection{Limit orders of the R\'enyi family}
\label{app:renyi}

Write $A \equiv \Attn{l}{h}$, $\psi(\alpha) \triangleq \sum_{t=1}^T
A(t)^{\alpha}$, $f(\alpha) \triangleq \ln\psi(\alpha)$, and
 $g(\alpha) \triangleq 1-\alpha$.

\paragraph{Part 1: Shannon ($\alpha \to 1$).}
 $\psi(1) = 1$, so $f(1) = 0$ and $g(1) = 0$: the ratio is a
 $\frac00$ form. Both functions are continuously differentiable and
 $g'(\alpha) \equiv -1 \neq 0$, so L'H\^opital's rule applies once
 $\lim f'/g'$ exists. Termwise differentiation via
 $\frac{d}{d\alpha}x^{\alpha} = x^{\alpha}\ln x$ (with the continuous
extension $p^{\alpha}\ln p \to 0$ as $p \to 0^{+}$) gives
 $f'(\alpha) = \frac{\sum_{t} A(t)^{\alpha}\ln A(t)}
{\sum_{t} A(t)^{\alpha}}$. As $\alpha \to 1$, the denominator tends to
 $1$ and each $A(t)^{\alpha}\ln A(t) \to A(t)\ln A(t)$, so
 $\lim_{\alpha\to1}\Renyi{\alpha}{A} = -\sum_{t=1}^{T} A(t)\ln A(t)$.

\paragraph{Part 2: collision ($\alpha = 2$).}
Direct substitution:
 $\Renyi{2}{A} = -\ln\sum_t A(t)^2 = -\ln\|A\|_2^2$. The self-inner
product $\langle A, A\rangle = \|A\|_2^2 \in (0,1]$ is the collision
probability.

\paragraph{Part 3: min-entropy ($\alpha \to \infty$).}
Let $M = \max_t A(t) \in [\frac1T, 1]$ and
 $\mathcal{T}_{\max} = \{t : A(t) = M\}$ with multiplicity
 $m \ge 1$; on the complement, $0 \le A(t)/M < 1$ strictly. Factoring
 $M^{\alpha}$ out of the power sum and using $\ln(uv) = \ln u + \ln v$:
\[
  \Renyi{\alpha}{A}
  = \frac{\alpha}{1-\alpha}\ln M
  + \frac{1}{1-\alpha}\ln\Bigl[m
      + \sum_{t\notin\mathcal{T}_{\max}}
        \Bigl(\frac{A(t)}{M}\Bigr)^{\alpha}\Bigr].
\]
As $\alpha \to \infty$: each $(A(t)/M)^{\alpha} \to 0$, so the inner
logarithm tends to $\ln m$; $\frac{\alpha}{1-\alpha} \to -1$; and
 $\frac{1}{1-\alpha}\ln m \to 0$. Hence
 $\lim_{\alpha\to\infty}\Renyi{\alpha}{A} = -\ln M$.

\subsection{Sink head suppression}
\label{app:sink}

\paragraph{Step 1 (conditional distribution).}
Partition $\{1,\dots,T\}$ into $\Ssink$ and
 $S_{\mathrm{ctx}}$ with $|S_{\mathrm{ctx}}| = N$. By normalization,
 $\sum_{t\in S_{\mathrm{ctx}}}\Attn{l,h}(t) = 1 - p_h$, so the
renormalized residual is a valid distribution on $\Delta^{N-1}$ (for
 $p_h < 1$).

\paragraph{Step 2 (pure sink residual).}
For a sink head with $p_{\mathrm{sink}} = 1-\delta$ and uniform
residual, $\Attn{l,h_{\mathrm{sink}}}(t) = \delta/N$ off $\Ssink$, so
 $\tilde{\mathbf{A}}_{l,h_{\mathrm{sink}}}(t) = \frac{\delta/N}{\delta}
= \frac1N$ on all $N$ non-sink positions.

\paragraph{Step 3 (norm).}
 $\|\tilde{\mathbf{A}}_{l,h_{\mathrm{sink}}}\|_2^2 = N \cdot N^{-2} =
\frac1N$.

\paragraph{Step 4 (entropy).}
 $\Hsink = -\ln(1/N) = \ln N = \ln(T - |\Ssink|) \to +\infty$ as
 $T \to \infty$.

\paragraph{Step 5 (sink weight decay).}
With $u_h \triangleq \exp(-\Hsink(A_{l,h})/\tau_g)$:
 $u_{\mathrm{sink}} = \exp(-\ln N/\tau_g) = N^{-1/\tau_g} \to 0$.

\paragraph{Step 6 (retrieval weight floor).}
For a retrieval head with
 $\Attn{l,h^{*}}(t^{*}) = 1-\epsilon$, $t^{*}\notin\Ssink$, and sink
mass $p_{h^{*}} \ge 0$:
 $\tilde{\mathbf{A}}(t^{*}) = \frac{1-\epsilon}{1-p_{h^{*}}} \ge
1-\epsilon$, so $\|\tilde{\mathbf{A}}\|_2^2 \ge (1-\epsilon)^2$ and
 $\Hsink \le 2\ln\frac{1}{1-\epsilon} \triangleq \eta_{\mathrm{ret}} <
\infty$, giving $u_{h^{*}} \ge C_{\mathrm{ret}} > 0$ for all $T$.

\paragraph{Step 7 (normalized suppression).}
 $w_{l,h_{\mathrm{sink}}} = \frac{u_{\mathrm{sink}}}{u_{h^{*}} +
\sum_{j\neq h^{*}} u_j} \le \frac{u_{\mathrm{sink}}}{C_{\mathrm{ret}} +
u_{\mathrm{sink}}}$ since all kernels are non-negative. Taking
 $T \to \infty$ with $u_{\mathrm{sink}} \to 0$: the sink head's
normalized weight vanishes.

\subsection{Budget initialization and exact conservation}
\label{app:budget}

\paragraph{Step 1 (dimensional decomposition).}
In PagedAttention, memory is allocated as physical blocks $P_b$ of
capacity $B$. Under GQA, the $\HQ$ query heads partition into $\HKV$ groups sharing per-group quotas $K_{l,g} = K_l$. Total capacity:
 $\mathcal{C}_{\mathrm{total}} = \sum_{l=1}^{L}\HKV\cdot B\cdot K_l$.
The quota divisor is $\HKV\cdot B$ (not $r\cdot B$ or $\HQ\cdot B$):
 $\mathcal{B}_l$ is the layer's total token budget across all $\HKV$ buffers, so each buffer receives $\mathcal{B}_l/\HKV$, and each page
stores $B$ tokens.

\paragraph{Step 2 (fractional residuals).}
The unquantized page target per group is
 $\kappa_l \triangleq \frac{\mathcal{B}_l}{\HKV B}$, decomposed as
 $\kappa_l = K_l + R_l$ with $K_l = \lfloor\kappa_l\rfloor$ and
 $R_l = \{\kappa_l\} \in [0,1)$.

\paragraph{Step 3 (unadjusted conservation condition).}
 $\mathcal{C}_{\mathrm{unadj}} = \HKV B \sum_{l=1}^{L} K_l
= \mathcal{B}_{\mathrm{total}} - \HKV B\sum_{l=1}^{L} R_l$. Exact
conservation iff $\sum_l R_l = 0$, which (each $R_l \ge 0$) holds iff
 $R_l = 0$ for all $l$, i.e.,
 $\mathcal{B}_{\mathrm{total}} \equiv 0 \pmod{L\cdot\HKV\cdot B}$.

\paragraph{Step 4 (deficit integrality).}
Define $m \triangleq \sum_{l=1}^{L} R_l =
\frac{\mathcal{B}_{\mathrm{total}}}{\HKV B} - \sum_l K_l$. Under
page-vector budgets, both terms are integers, so $m \in \mathbb{Z}$;
and $m > 0$ with $m < L$, giving $m \in \{1,\dots,L-1\}$.

\paragraph{Step 5 (largest-remainder restoration).}
Sort layers by residual magnitude, $R_{\pi(1)} \ge \cdots \ge
R_{\pi(L)}$ (ties broken by canonical layer index), and set
 $K^{\mathrm{adjusted}}_{\pi(i)} = K_{\pi(i)} + 1$ for $1 \le i \le m$,
 $K_{\pi(i)}$ otherwise. Then $\sum_l K^{\mathrm{adjusted}}_l = \sum_l
K_l + m$, so $\mathcal{C}_{\mathrm{adjusted}} = \HKV B(\sum_l K_l + m)
= \mathcal{B}_{\mathrm{total}}$, with zero internal fragmentation.

\paragraph{Numerical cases (Llama-3.1-8B geometry: $L{=}32$,
 $\HKV{=}8$, $r{=}4$, $B{=}16$; granularity $\HKV B = 128$; full-model
 granularity $4{,}096$).}
\emph{Case A (divisible, $\mathcal{B}_{\mathrm{total}} = 32{,}768$):}
 $32{,}768 \bmod 4{,}096 = 0$; $\mathcal{B}_l = 1{,}024$;
 $K_l = 8$ pages, $R_l = 0$; conservation exact.
\emph{Case B (non-divisible, target $35{,}000$):} the nearest
hardware-aligned budget is $\lfloor 35{,}000/128\rfloor \times 128 =
273\times 128 = 34{,}944$; $\kappa_l = 8.53125$; $K_l = 8$,
 $R_l = 0.53125$ for all $l$; $m = 17$ pages. Apportionment: layers
 $\{1,\dots,17\}$ receive $9$ pages, layers $\{18,\dots,32\}$ receive
 $8$; verification: $17\times 9\times 128 + 15\times 8\times 128 =
34{,}944$---exact conservation, zero fragmentation.

\subsection{Engine-level page accounting}
\label{app:pageaccounting}

\paragraph{Step 1 (exact per-group cardinality).}
By \cref{eq:page_aggregation} with deterministic tie-breaking,
 $|\RetSet{l}{g}| \equiv K_l$ for every group.

\paragraph{Step 2 (lower bound).}
Each $\RetSet{l}{g} \subseteq P^{\mathrm{ret}}_l$, so by monotonicity
 $K_l \le |P^{\mathrm{ret}}_l|$.

\paragraph{Step 3 (lower-bound equality).}
If all groups select the identical set
 $\mathcal{K}^{*}$, the union is $\mathcal{K}^{*}$ and
 $|P^{\mathrm{ret}}_l| = K_l$. Conversely, if
 $|P^{\mathrm{ret}}_l| = K_l$ with each
 $\RetSet{l}{g} \subseteq P^{\mathrm{ret}}_l$ of the same cardinality,
each $\RetSet{l}{g} = P^{\mathrm{ret}}_l$: all selections are
identical.

\paragraph{Step 4 (upper bound).}
By Boole's inequality,
 $|P^{\mathrm{ret}}_l| \le \sum_g |\RetSet{l}{g}| = \HKV K_l$.

\paragraph{Step 5 (upper-bound equality).}
If the selections are pairwise disjoint, finite additivity gives
 $|P^{\mathrm{ret}}_l| = \HKV K_l$. Conversely, if some page lies in
two selections, it is counted at least twice in the sum but exactly
once in the union, so equality forces pairwise disjointness.

\paragraph{Step 6 (page-union ratio).}
 $\mathrm{PUR}_l \triangleq |P^{\mathrm{ret}}_l|/K_l$ satisfies
 $1.0 \le \mathrm{PUR}_l \le \HKV$: it tends to $1$ when groups attend
to shared regions (system prompts, preambles, shared code) and to
 $\HKV$ when groups specialize into independent subspaces.

\subsection{Strict physical memory bounds}
\label{app:memory}

\paragraph{Step 1 (union semantics).}
All $r$ sibling heads read the same physical slice
 $\mathbf{K}_{\mathrm{physical}}[:,g,:,:]$, so a shared buffer cannot
store token $t$ for one head while freeing it for another: the engine
materializes
 $\mathcal{K}^{\mathrm{union}}_{g} = \bigcup_{h\in\Group{g}}\TokSet{h}$ and $M_{\mathrm{indep}} = |\mathcal{K}^{\mathrm{union}}_{g}|$.

\paragraph{Steps 2--3 (bounds under head-independent selection).}
Subset inclusion plus cardinality monotonicity gives $k \le
M_{\mathrm{indep}}$, with equality iff all heads select identical
sets. Subadditivity gives $M_{\mathrm{indep}} \le rk$, and
 $\mathcal{K}^{\mathrm{union}}_{g} \subseteq \{1,\dots,T\}$ gives
 $M_{\mathrm{indep}} \le T$; hence $M_{\mathrm{indep}} \le \min(rk, T)$,
with equality iff selections are pairwise disjoint and $rk \le T$.
Dividing by $k$: $1.0 \le \UOR(g,k) \le r$.

\paragraph{Steps 4--5 (pooling preserves the simplex).}
 $\Attn{l,h}(t) > 0$ (softmax outputs) and $w_{l,h} > 0$ (ratios of
positive exponentials), so $\AttnPool{l}{g}(t) \ge 0$. For
normalization, interchange the finite summations and factor the
weights:
\[
  \sum_{t=1}^{T}\AttnPool{l}{g}(t)
  = \sum_{h\in\Group{g}} w_{l,h}
    \Bigl(\sum_{t=1}^{T}\Attn{l,h}(t)\Bigr)
  = \sum_{h\in\Group{g}} w_{l,h}\cdot 1 = 1 .
\]
Hence $\AttnPool{l}{g} \in \Delta^{T-1}$: a convex combination of
probability vectors is a probability vector.

\paragraph{Step 6 (exact cardinality and page extension).}
Sorting $\AttnPool{l}{g}$ via a permutation $\sigma$ of
 $\{1,\dots,T\}$ in non-increasing order, the selection set
 $\{\sigma(1),\dots,\sigma(k)\}$ contains exactly $k$ pairwise distinct
indices because $\sigma$ is a bijection (ties broken by canonical
index): $|\RetSet{l,g}| \equiv k$ for every attention configuration,
temperature, and group ratio, so $\UOR(g,k) = k/k \equiv 1.0$. The
token-level guarantee extends to pages verbatim: block-max reduces the
pooled vector to scalar page scores, and page-level TopK retains
exactly $K_l$ frames per group.

\subsection{Needle retrieval preservation}
\label{app:needle}

\paragraph{Step 1 (residual formulation).}
For each head, $\Attn{l,h}(t) = (1-p_h)\tilde{\mathbf{A}}_{l,h}(t)$ for non-sink $t$, and
 $\|\tilde{\mathbf{A}}_{l,h}\|_2^2 = \exp(-\CollisionSink{\Attn{l}{h}})$.

\paragraph{Step 2 (sibling coordinates vanish, part i).}
From \cref{eq:sibling},
 $\|\tilde{\mathbf{A}}_{l,j}\|_2^2 \le e^{c_j}/(T-|\Ssink|)$, so
 $\tilde{\mathbf{A}}_{l,j}(t^{*}) \le e^{c_j/2}/\sqrt{T-|\Ssink|}$ and,
since $(1-p_j) \le 1$,
 $\Attn{l,j}(t^{*}) \le e^{c_j/2}/\sqrt{T-|\Ssink|} \to 0$; by the
squeeze theorem the limit is zero for all $j \neq h^{*}$. Mean pooling
therefore satisfies $\AttnMean{l}{g}(t^{*}) \to \frac{1-\epsilon}{r}$.

\paragraph{Step 3 (retrieval kernel floor).}
 $\CollisionSink{\Attn{l}{h^{*}}} \le \eta_{\mathrm{ret}}$ gives, after
multiplying by $-1/\tau_g < 0$ (which reverses the inequality) and
exponentiating, $u_{h^{*}} \ge C_{\mathrm{ret}} > 0$ for all $T$.

\paragraph{Step 4 (sibling kernel sum vanishes).}
Each $u_j \le e^{c_j/\tau_g}(T-|\Ssink|)^{-1/\tau_g}$; defining
 $D(T) \triangleq \bigl(\sum_{j\neq h^{*}} e^{c_j/\tau_g}\bigr)
(T-|\Ssink|)^{-1/\tau_g}$, the coefficient is $T$-independent and the
power term vanishes, so $\sum_{j\neq h^{*}} u_j \le D(T) \to 0$.

\paragraph{Step 5 (weights converge).}
 $w_{l,h^{*}} = \frac{1}{1 + \frac{\sum_{j\neq h^{*}}u_j}{u_{h^{*}}}}$ after dividing by $u_{h^{*}} > 0$; the ratio is sandwiched between
zero and $\frac{D(T)}{C_{\mathrm{ret}}} \to 0$, so
 $w_{l,h^{*}} \to 1$ and $\sum_{j\neq h^{*}} w_{l,j} \to 0$.

\paragraph{Step 6 (squeeze on the needle).}
 $\AttnPool{l}{g}(t^{*}) = w_{l,h^{*}}(1-\epsilon) +
\sum_{j\neq h^{*}} w_{l,j}\Attn{l,j}(t^{*})$ with the sibling term
sandwiched between zero and $\sum_{j\neq h^{*}} w_{l,j} \to 0$: both
brackets converge to $1-\epsilon$, giving the limit
 $\lim_{T\to\infty}\AttnPool{l}{g}(t^{*}) = 1-\epsilon$.

\subsection{Finite-context retention bound}
\label{app:finite}

\paragraph{Step 1 (weight floor).}
 $g(x,y) \triangleq x/(x+y)$ satisfies
 $\partial g/\partial x = \frac{y}{(x+y)^2} \ge 0$ and
 $\partial g/\partial y = -\frac{x}{(x+y)^2} < 0$ for $x>0, y\ge0$;
replacing $x = u_{h^{*}}$ by its floor $C_{\mathrm{ret}}$ and
 $y = \sum_{j\neq h^{*}} u_j$ by its ceiling $D(T)$ preserves the lower
bound: $w_{l,h^{*}} \ge \frac{C_{\mathrm{ret}}}{C_{\mathrm{ret}} +
D(T)}$.

\paragraph{Step 2 (minorization).}
The sibling term $\sum_{j\neq h^{*}} w_{l,j}\Attn{l,j}(t^{*}) \ge 0$;
dropping it gives
 $\AttnPool{l}{g}(t^{*}) \ge w_{l,h^{*}}(1-\epsilon) \ge
\frac{C_{\mathrm{ret}}}{C_{\mathrm{ret}}+D(T)}(1-\epsilon)$.

\paragraph{Step 3 (deficit closed form).}
Writing the bound as $(1-\epsilon) - \Delta(T)$ and isolating,
 $\Delta(T) = (1-\epsilon)\frac{D(T)}{C_{\mathrm{ret}}+D(T)}$.

\paragraph{Step 4 (monotone decay).}
 $\frac{d}{dT}D(T) = -\frac{1}{\tau_g}
\bigl(\sum_{j\neq h^{*}}e^{c_j/\tau_g}\bigr)
(T-|\Ssink|)^{-(\frac{1}{\tau_g}+1)} < 0$, and by the chain rule
 $\frac{d\Delta}{dT} < 0$: the bound tightens as context grows.

\paragraph{Step 5 (exact instantiation and scaling; arithmetic, not
 measurement).}
With $r=8$ (7 siblings), $|\Ssink|=4$, $\tau_g=0.5$, $c_j=0$,
 $\eta_{\mathrm{ret}}=\ln 2$, and $1-\epsilon=0.95$:
 $C_{\mathrm{ret}} = 2^{-2} = 0.25$;
 $D(T) = 7(T-4)^{-2}$;
 $\Delta(T) = 0.95\cdot\frac{D(T)}{0.25+D(T)} \le 0.95\cdot 4\,D(T)$.
Because $4.08\times 10^{-10} \ll 0.25$ at $T = 131{,}072$,
 $\Delta(131{,}072) \le 1.63\times 10^{-9}$. In FP32,
 $\varepsilon_{\mathrm{FP32}} \approx 1.19\times 10^{-7}$; since
 $\Delta(T) \ll \varepsilon_{\mathrm{FP32}}$, the pooled needle signal
is numerically indistinguishable from exact single-head selection in
FP32 registers. \Cref{tab:finite_scaling} tabulates the closed-form
bound across deployment contexts.

\paragraph{Two-regime mechanism check.}
\Cref{fig:cor2} exercises the corollary's mechanism end-to-end in the
two regimes the pilot spans. Panel~(a) is a synthetic construction
that satisfies the premises of \cref{thm:needle_preservation} exactly
(a needle head at mass $1-\epsilon = 0.95$ with near-maximal-entropy
siblings), run through the full engine path: the measured pooled
needle mass coincides with the bound $(1-\epsilon)-\Delta(T)$ to
within $3\times 10^{-7}$ at every context length, up to and including
128k. Panel~(b) is the real-text pilot regime on
Qwen2.5-1.5B-Instruct eager-attention captures at $T \le 2048$: the
measured best-head needle mass is $\approx 0.16$ rather than $0.95$,
the premise is unsatisfied, and entropy-weighted pooling degenerates
toward arithmetic-mean pooling
(\cref{prop:temperature_limits}~ii), with the two pooled masses within
a factor $1.06$ of each other. Both panels appear together
deliberately: panel~(a) alone could be misread as an empirical claim,
whereas the pair delimits precisely where the guarantee binds and
where it does not yet.

\begin{figure*}[t]
\centering
\includegraphics[width=0.95\textwidth]{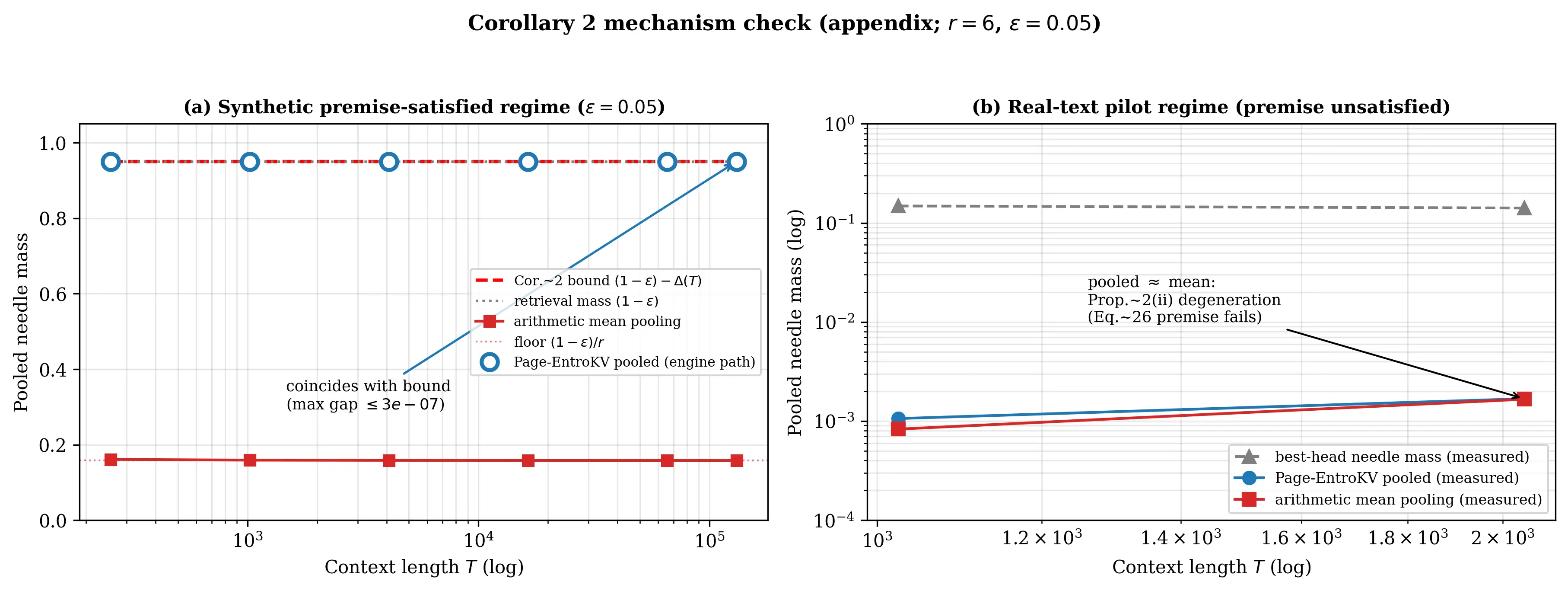}
\caption{Corollary~\ref{cor:finite_context_bound} two-regime mechanism
  check ($r{=}6$, single $\varepsilon = 0.05$ used throughout,
  $\tau_g = 0.5$). (a)~\emph{Synthetic premise-satisfied regime}: an
  engine-path construction with a needle head at mass $1-\varepsilon$   and near-maximal-entropy siblings; the measured pooled needle mass
  coincides with the bound $(1-\varepsilon)-\Delta(T)$ at every context
  length (raw gap $\le 3\times 10^{-7}$; markers offset by $+0.01$   vertically for visibility), verifying the mechanism end-to-end up to
  128k. (b)~\emph{Real-text pilot regime (premise unsatisfied)}:
  Qwen2.5-1.5B-Instruct eager-attention captures at $T \le 2048$; the
  measured best-head needle mass is $\approx 0.16$ rather than
  $0.95$, so the premise of \cref{thm:needle_preservation} is not
  reached and entropy-weighted pooling degenerates toward
  arithmetic-mean pooling (\cref{prop:temperature_limits}~ii), the
  pooled and mean masses agreeing to within a factor $1.06$. The two
  panels together delimit where the guarantee binds; neither panel
  alone is interpretable.}
\label{fig:cor2}
\end{figure*}

\begin{table*}[t]
\centering\small
\caption{Non-asymptotic scaling of the finite-context bound
  (\cref{cor:finite_context_bound}) across deployment contexts
  ($r{=}8$, $\tau_g{=}0.5$, $c_j{=}0$, $\eta_{\mathrm{ret}}=\ln2$,
  $1-\epsilon = 0.95$). All entries are arithmetic instantiations of
  the closed-form bound---not measurements.}
\label{tab:finite_scaling}
\begin{tabular}{@{}lrrrrrr@{}}
\toprule
 $T$ & Non-sink $N$ & $D(T)$ & $\Delta(T)$ &
 $\AttnPool{l}{g}(t^{*})$ & $\AttnMean{l}{g}(t^{*})$ & Ratio \\
\midrule
4k    & 4{,}092   & $4.18\times 10^{-7}$  & $1.59\times 10^{-6}$ & 0.949998 & 0.118750 & $8.00\times$ \\
8k    & 8{,}188   & $1.04\times 10^{-7}$  & $3.97\times 10^{-7}$ & 0.949999 & 0.118750 & $8.00\times$ \\
16k   & 16{,}380  & $2.61\times 10^{-8}$  & $9.92\times 10^{-8}$ & 0.949999 & 0.118750 & $8.00\times$ \\
32k   & 32{,}764  & $6.52\times 10^{-9}$  & $2.48\times 10^{-8}$ & 0.950000 & 0.118750 & $8.00\times$ \\
64k   & 65{,}532  & $1.63\times 10^{-9}$  & $6.20\times 10^{-9}$ & 0.950000 & 0.118750 & $8.00\times$ \\
128k  & 131{,}068 & $4.08\times 10^{-10}$ & $1.63\times 10^{-9}$ & 0.950000 & 0.118750 & $8.00\times$ \\
\bottomrule
\end{tabular}
\end{table*}

\subsection{Page-frame monotonicity}
\label{app:monotone}

\paragraph{Steps 1--3 (exact equality at the target page).}
The pages $\{P_b\}$ form a disjoint partition of $\{1,\dots,T\}$, so
 $t^{*}$ lies in a unique $P^{*}$. By definition of the maximum,
 $I_{l,g}(P^{*}) \ge \AttnPool{l}{g}(t^{*})$; conversely
 $P^{*} \subseteq \{1,\dots,T\}$ gives
 $\max_{t\in P^{*}}\AttnPool{l}{g}(t) \le \max_{t}\AttnPool{l}{g}(t) =
\AttnPool{l}{g}(t^{*})$. The squeeze gives the identity
 $I_{l,g}(P^{*}) \equiv \AttnPool{l}{g}(t^{*})$: block-max preserves
the peak coordinate exactly, independent of $B$.

\paragraph{Step 4 (dominance over other pages).}
For any $P_j \neq P^{*}$ (disjoint from $P^{*}$, so
 $t^{*}\notin P_j$), let $t_j$ be the local maximizer. Then
 $I_{l,g}(P_j) = \AttnPool{l}{g}(t_j) \le \AttnPool{l}{g}(t^{*}) =
I_{l,g}(P^{*})$, with strict inequality when $t^{*}$ is the unique
global maximizer.

\paragraph{Step 5 (retention for any quota).}
Sorting page scores in non-increasing order, the maximal page occupies
the first rank; for any budget $K_l \ge 1$,
 $P^{*} \in \RetSet{l}{g}$: the page containing the global attention
peak is unconditionally retained.

\subsection{On-chip execution and the FP16 guard}
\label{app:fp16}

\paragraph{Argument bounds.}
For $A \in \Delta^{T-1}$: the collision argument $\|A\|_2^2$ is at
most $1$ (single-coordinate concentration) and at least $1/T > 0$ (uniform), so $\Renyi{2}{A} \in [0, \ln T]$ and the logarithm never
sees a zero argument.

\paragraph{Guard placement.}
The guard must sit inside the logarithm
($-\ln(\|A\|_2^2 + \varepsilon)$, $\varepsilon = 10^{-12}$) and never
inside the power: expanding
 $\sum_t(A(t)+\varepsilon)^2 = \|A\|_2^2 + 2\varepsilon +
T\varepsilon^2$, the context-dependent term $T\varepsilon^2$ corrupts
the normalization $\sum_t A(t) = 1$ and breaks the convexity that
\cref{thm:memory_guarantee} relies on. Guarding inside the logarithm
leaves the distribution untouched.

\paragraph{Degenerate all-sink clamping.}
If $\sum_{t'\in\Ssink}\Attn{l,h}(t') = 1$, the residual denominator is
zero and $\tilde{\mathbf{A}}$ is undefined. The implementation clamps
to \texttt{max(1-p\_h, 1e-12)}: the residual coordinates stay $0$, so
 $\|\tilde A\|_2^2 = 0$ and $\Hsink = -\ln(10^{-12}) \approx 27.63$%
---maximal entropy. Under $\tau_g = 0.5$,
 $u_{\mathrm{sink}} = \exp(-55.26) \approx 9.99\times 10^{-25}$: the
degenerate head's pooling weight vanishes, preventing all-sink heads
from diluting genuine retrieval signals.

\section{Reference implementation}
\label{app:implementation}

\Cref{lst:page_entrokv} gives the PyTorch reference implementation of
the \method{} engine: sink isolation (\cref{def:sinkentropy}), discrete
R\'enyi entropy with the $\varepsilon$-guard applied inside the
logarithm only (preserving normalization; cf.\ \cref{rem:fp16}),
Boltzmann weighting over sibling heads (\cref{eq:boltzmann_weights}),
group pooling (\cref{eq:pooled_attention}), block-max page aggregation
and top-$K$ physical block selection (\cref{eq:page_aggregation}). The
listing evaluates entropies on the full non-sink sequence for
clarity---the full-sequence instance of the observation-window
restriction (\cref{def:obswindow}); the windowed instance follows by
restricting the inputs to $W_{\mathrm{obs}}$.

\begin{lstlisting}[
  language=Python,
  caption={PyTorch reference implementation of the \method{}
    engine, with sink-isolated entropy weighting.},
  label={lst:page_entrokv}
]
import torch
import torch.nn as nn
import torch.nn.functional as F


class PageEntroKVGroupPooler(nn.Module):
    """Entropy-weighted GQA group pooling + page mapping.

    Entropy is computed on the sink-isolated distribution; the pooled
    output uses the RAW attention rows, so it remains a convex
    combination of probability vectors (the budget-preservation
    guarantee relies on this).
    """

    def __init__(self, num_query_heads=32, num_kv_heads=8,
                 block_size=16, tau_g=0.5, alpha=2.0,
                 sink_size=4):
        super().__init__()
        self.num_query_heads = num_query_heads
        self.num_kv_heads = num_kv_heads
        self.group_size = num_query_heads // num_kv_heads
        self.block_size = block_size
        self.tau_g = tau_g
        self.alpha = alpha
        self.sink_size = sink_size

    def renyi_entropy(self, attn):
        # eps inside log ONLY: preserves sum(p) = 1
        eps = 1e-12
        if self.alpha == 1.0:
            return -torch.sum(
                attn * torch.log(attn + eps), dim=-1)
        s = torch.sum(torch.pow(attn, self.alpha), dim=-1)
        return (1.0 / (1.0 - self.alpha)) * torch.log(s + eps)

    @torch.inference_mode()
    def forward(self, attn_weights, num_retained_blocks):
        """[B, H_Q, T] -> bool mask [B, H_KV, num_blocks]."""
        Bsz, H_Q, T = attn_weights.shape
        sink_mask = torch.zeros(
            T, dtype=torch.bool, device=attn_weights.device)
        sink_mask[:min(self.sink_size, T)] = True
        attn_nonsink = attn_weights.masked_fill(
            sink_mask.unsqueeze(0).unsqueeze(0), 0.0)
        attn_nonsink = attn_nonsink / attn_nonsink.sum(
            dim=-1, keepdim=True).clamp(min=1e-12)
        ent = self.renyi_entropy(attn_nonsink)
        grouped = attn_weights.view(
            Bsz, self.num_kv_heads, self.group_size, T)
        ent_g = ent.view(Bsz, self.num_kv_heads,
                         self.group_size)
        weights = F.softmax(-ent_g / self.tau_g,
                            dim=-1).unsqueeze(-1)
        pooled = (weights * grouped).sum(dim=2)
        pad = (self.block_size - T % self.block_size) \
              % self.block_size
        if pad:
            pooled = F.pad(pooled, (0, pad))
        nb = pooled.shape[-1] // self.block_size
        pages = pooled.view(
            Bsz, self.num_kv_heads, nb,
            self.block_size).max(dim=-1).values
        idx = torch.topk(pages, num_retained_blocks,
                         dim=-1).indices
        mask = torch.zeros_like(pages, dtype=torch.bool)
        mask.scatter_(-1, idx, True)
        return mask
\end{lstlisting}

\end{document}